\documentclass{article}

\usepackage{microtype}
\usepackage{graphicx}
\usepackage{subcaption}
\usepackage{booktabs} %
\usepackage[export]{adjustbox}
\usepackage{hyperref}

\usepackage[accepted]{icml2026}

\usepackage{amsmath}
\usepackage{amssymb}
\usepackage{mathtools}
\usepackage{amsthm}
\usepackage{dsfont}
\usepackage{tikz}
\usetikzlibrary{arrows.meta}
\usepackage[capitalize,noabbrev]{cleveref}

\theoremstyle{plain}
\newtheorem{theorem}{Theorem}[section]

\newtheorem{lemma}{Lemma}

\newtheorem{definition}{Definition}

\theoremstyle{remark}

\usepackage[textsize=tiny]{todonotes}

\usepackage{xcolor}
\usepackage{bm}
\usepackage{cleveref} %

\usepackage{titlesec}

\newcommand{\patrick}[1]{}%
\newcommand{\venkatesh}[1]{}%
\newcommand{\themis}[1]{}%
\newcommand{\arjun}[1]{}%
\newcommand{\aditya}[1]{}%

\icmltitlerunning{Symmetry Reveals Layerwise Dynamics for In-Context Classification in Transformers}

\newcommand{\bx}{\bm{x}}
\newcommand{\by}{\bm{y}}
\newcommand{\bw}{\bm{w}}

\usepackage{enumitem}
\begin{document}

\twocolumn[
  \icmltitle{Symmetry Reveals Layerwise Dynamics:\\How Transformers Perform In-Context Classification}

  \icmlsetsymbol{equal}{*}

  \begin{icmlauthorlist}
    \icmlauthor{Patrick Lutz}{BU}
    \icmlauthor{Themistoklis Haris}{BU}
    \icmlauthor{Arjun Chandra}{BU}
    \icmlauthor{Aditya Gangrade}{BU}
    \icmlauthor{Venkatesh Saligrama}{BU}
  \end{icmlauthorlist}

  \icmlaffiliation{BU}{Boston University, Departments of Computer Science and Electrical \& Computer Engineering}

  \icmlcorrespondingauthor{Patrick Lutz}{plutz@bu.edu}
  \icmlkeywords{Machine Learning, ICML}

  \vskip 0.3in
]

\printAffiliationsAndNotice{}  %

\begin{abstract}
Transformers can perform in-context classification from a few labeled examples, yet the inference-time algorithm remains opaque. We study multi-class linear classification in the hard no-margin regime and make the computation identifiable by enforcing feature- and label-permutation equivariance at every layer. This yields highly structured weights and enables interpretability while maintaining functional equivalence. From these models we extract an explicit depth-indexed recursion---an end-to-end identified, emergent update rule inside a softmax transformer, to our knowledge the first of its kind. Attention matrices formed from mixed feature--label Gram structure drive coupled updates of training points, labels, and the test probe. The resulting dynamics implement a geometry-driven algorithmic motif, which can provably amplify class separation and yields robust expected class alignment.

\end{abstract}

\section{Introduction}

Transformers trained for in-context learning (ICL) routinely solve learning problems using only a forward pass. Yet, the core mechanistic question remains open:\vspace{0.25\baselineskip}\\  \centerline{\emph{What algorithm is the model running at inference time?}}\vspace{0.25\baselineskip}\\
A common hypothesis is that transformers implicitly simulate a generic optimizer---most often gradient descent---within the residual stream. In this paper, we show that for in-context classification, this is a misleading abstraction. Instead, we show that once natural task symmetries are enforced \emph{layer by layer}, trained transformers instantiate a shared dynamical template: a \textbf{coupled feature–label algorithmic motif}. This motif is not a loose analogy: it yields a closed-form layerwise recursion (depth $\leftrightarrow$ iterations), supports theory with testable predictions, and reappears when we retrain transformers on classification tasks with different distributional and geometric structures. %

\textbf{The obstacle: symmetry breaking hides the computation.} 
Classification problems possess inherent symmetries: permuting examples, permuting feature coordinates, or relabeling classes should not change the decision rule. While self-attention is invariant to token order, standard training does not enforce feature- or label-permutation symmetry. As a consequence, models achieve high accuracy, but encode spurious asymmetries in their weights. These asymmetries hinder interpretability by obscuring the underlying algorithmic structure. To move from models that predict well to \emph{discovering the underlying algorithm}, we must break through this barrier. We do so by forcing the internal computations of the model to respect the task's symmetries.\addtocounter{footnote}{1}\footnote{Code for reproducing our main results is available here: \url{https://github.com/LutzPatrick/ICL-Symmetry}}

\textbf{The method: enforcing symmetry layer by layer.}
We enforce symmetry by construction. At each layer, we conjugate the attention block with a random block permutation over feature and label coordinates (apply $P$, run attention, then apply $P^\top$). This forces each layer to implement the same computation in every symmetric coordinate system. We verify that the learned inference rule is preserved by checking output agreement and query/context sensitivity fingerprints. Empirically, behavior is unchanged; instead, the constraint acts as a structural denoiser, collapsing the learned weights onto a low-dimensional, interpretable manifold.

\begin{figure*}[t]
\centering
\includegraphics[width=\textwidth]{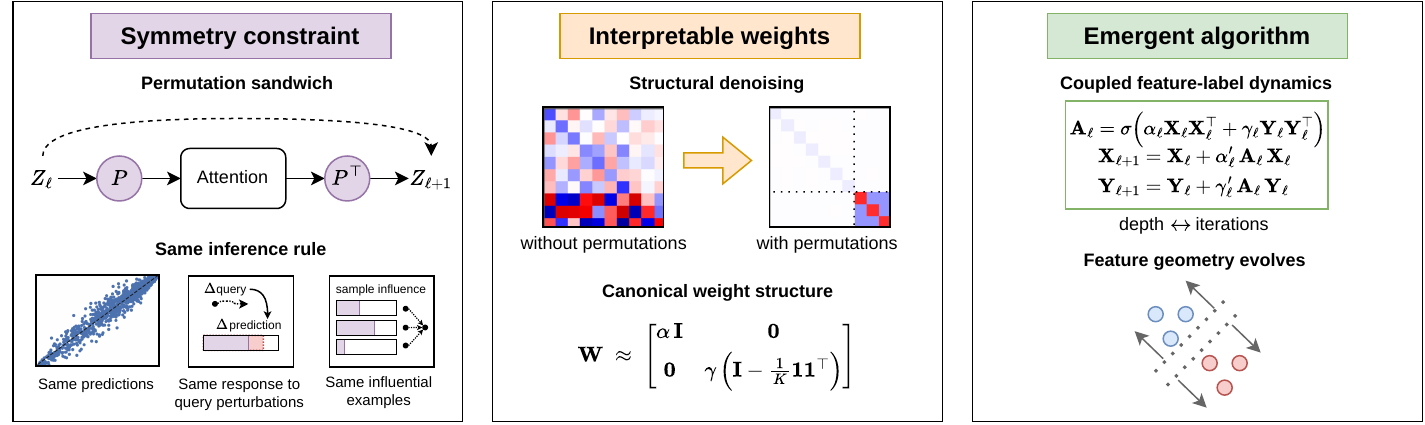}

\caption{\textbf{Symmetry reveals coupled mean-shift inference in transformers.}
\textit{Left:} Enforcing feature/label permutation symmetry {layer by layer} preserves the inference rule: we match {predictions}, {response to query perturbations} (local decision rule), and {influential examples} (context sensitivity). \textit{Middle:} The symmetry constraint denoises weights into a canonical low-dimensional structure. \textit{Right:} The resulting structure makes the forward pass readable as a closed-form iterative classifier (depth $\leftrightarrow$ iterations) in which feature and label geometry co-evolve, yielding a geometry-driven algorithmic motif in which features and labels co-evolve.}

\vspace{-\baselineskip}
\label{fig:symmetry_meanshift}
\end{figure*}

\textbf{Result 1: Coupled feature-label dynamics.}
With symmetry enforced, the forward pass becomes algebraically readable: for linear multiclass classification we recover an explicit layerwise recursion (layer $\ell$ = iteration $\ell$). Let $\bm X_\ell\in\mathbb{R}^{n\times d}$ be the layer-$\ell$ feature vectors of the $n$ training tokens, and $\bm Y_\ell\in\mathbb{R}^{n\times K}$ their label vectors. Each iteration first builds an affinity between training tokens by combining feature similarity and label agreement, \[ \setlength{\abovedisplayskip}{0.40\baselineskip} \setlength{\belowdisplayskip}{0.40\baselineskip}
\bm A_{\ell}=\mathrm{softmax}\!\left(\alpha \bm X_\ell \bm X_\ell^\top + \gamma \bm Y_\ell \bm Y_\ell^\top\right),
\]
and then applies this same weighting to propagate both training features and labels,
\[ \setlength{\abovedisplayskip}{0.40\baselineskip} \setlength{\belowdisplayskip}{0.40\baselineskip}
\bm X_{\ell+1}=\bm X_\ell+\alpha' \bm A_{\ell}\bm X_\ell,
\quad
\bm Y_{\ell+1}=\bm Y_\ell+\gamma' \bm A_{\ell}\bm Y_\ell.
\]
The query token is updated analogously by attending to the training set, accumulating class evidence across iterations. Unlike GD-style ICL abstractions that update a classifier on fixed features, this is a \emph{representation-shaping} procedure in which feature and label geometry co-evolve. We refer to this template as a coupled feature-label dynamic.

\textbf{Why this matters: Predictions, Theory, and Robustness.}
With a closed-form layerwise recursion in hand, we can analyze the transformer’s inference-time computation and derive testable predictions about behavior and performance. In a label-driven regime, the dynamics preserve class structure while monotonically amplifying the margin between class clusters. This yields a simple mechanism: a small number of labeled points anchors the classes, after which performance is driven primarily by geometric separation. A direct corollary is semi-supervised ICL: adding unlabeled points sharpens the geometry and improves accuracy even at a fixed label budget. More broadly, because the mechanism is geometry-driven, it should extend beyond linear separators when clusters are well separated, which we test in prototype (Voronoi cell) classification settings.

\textbf{Result 2: Dynamics predict behavior across tasks.} Our analysis predicts that the same coupled feature-label dynamic should succeed in semi-supervised, label-noise, and prototype (Voronoi) settings. Strikingly, when we retrain transformers on these tasks, this prediction carries over to the learned models: the trained transformers repeatedly recover the same symmetry-aligned weight structure and implement the same underlying dynamics (with task-specific parameter schedules). In other words, the closed-form iteration is not just an explanatory fit \emph{after} training—it \emph{predicts} the inference-time mechanism that re-emerges across tasks.

\textbf{Contributions.} In summary, we make three contributions: \vspace{-.5\baselineskip}
\begin{enumerate}
    \item \textbf{Symmetry as an identifiability protocol.} We enforce feature/label symmetries {layer by layer} and verify the {inference rule is preserved} (output alignment and query/context sensitivity), removing an important interpretability barrier and yielding structured, interpretable weights. 
    \item \textbf{Closed-form inference dynamics.} We extract a concise layerwise recursion for in-context classification (depth $\leftrightarrow$ iterations): a \emph{coupled update dynamic} in which feature and label geometry co-evolve.
    \item \textbf{A Shared Algorithmic Motif.} The extracted dynamics allows for theory and testable predictions that hold beyond the base setting; retraining transformers on semi-supervised, label-noise, and prototype tasks recovers the same weight structure and the same coupled algorithmic motif.
\end{enumerate}%

\subsection{Related Work}

\textbf{ICL as algorithm learning and implicit optimization.}
Work in stylized in-context learning (ICL) settings trains transformers on sequences of examples sampled from a task family and analyzes the resulting inference-time procedure on held-out tasks \citep{garg2022incontext, zhangfrei2024jmlr, ahuja2023distshift}. Several papers argue that, in linear regression and related regimes, attention and residual connections can realize closed-form estimators like gradient-descent updates \citep{vonoswald2023transformers_gd, akyurek2023what_learning_algorithm, dai-etal-2023-gpt}. Recently, \cite{lutz2025linear_transformers_unified_numerical} proposed a different mechanism based on the data representations of a trained transformer, bypassing the parametric perspectives of these earlier studies. Complementary theory formalizes this viewpoint as \emph{algorithm learning} and studies learnability, generalization, and stability of the learned solver \citep{li2023transformersasalgorithms, wies2023learnability}. At the same time, empirical correspondences between ICL and explicit gradient descent can be evaluation-dependent \citep{deutch-etal-2024-context}. Our work adopts the algorithmic lens but focuses on multiclass classification in a hard regime; after enforcing the task symmetries layer-by-layer, the extracted computation is best described as a representation-space dynamical system rather than parameter-space empirical risk minimization.

\textbf{Symmetry and equivariance.}
Symmetry-respecting architectures are a long-standing theme, including permutation-invariant set models \citep{zaheer2017deep_sets, lee2019set_transformer} and more general group-equivariant networks that impose structured parameter sharing \citep{cohen2016group_equivariant_cnns, maron2019invariant_equivariant, satorras2021egnn}. Recent work studies permutation equivariance properties within transformer architectures and their consequences \citep{xu2024permeq_transformers}. We use equivariance differently: we enforce feature- and label-permutation symmetry \emph{inside} each layer (a permutation-conjugation ``sandwich''), not primarily to improve accuracy but to remove symmetry-breaking degrees of freedom that obscure identifiability, enabling closed-form extraction and cross-task motif comparisons.

\textbf{Mean-shift, diffusion, and semi-supervised learning.}
Mean-shift is a classical mode-seeking and clustering method based on iterative kernel-weighted averaging \citep{comaniciu2002mean_shift}. Graph diffusion and label propagation methods likewise exploit unlabeled geometry by iterating a stochastic similarity operator \citep{zhu2002label_propagation}. Our extracted recursion is naturally situated in this family: attention produces a learned stochastic similarity matrix and depth implements repeated diffusion steps, with label-dependent terms coupling representation updates to class information. This connection clarifies why behavior can be primarily margin-driven (geometry-first) and why similar dynamical templates can remain effective across perturbations that preserve the underlying symmetry structure. Related theoretical work studies self-attention as an interacting particle system and proves clustering phenomena consistent with mean-shift-like dynamics in deep attention stacks \citep{geshkovski2023clusters, rigollet2025meanfield}.

\noindent\textbf{Further discussion.} Appx.~\ref{app:related_work} provides additional context on (i) how ICL emerges from pretraining distributions; (ii) Bayesian/kernel interpretations; and (iii) mechanistic circuit analyses.

\section{Setup}

\newcommand{\xtest}{\bm{x}_{\textrm{test}}}

\textbf{Supervised Prediction Task and In-Context Prediction.} Let $\mathcal{F}$ be a class of functions $f:\mathcal{X} \to \mathcal{Y}$ from features to labels.  
We study prediction problems where, given examples, the value of a map in $\mathcal F$ at a test point must be predicted.
Specifically, given $n$ pairs $(\bx_i, \by_i)_{i = 1}^n$ with $\by_i = f(\bx_i)$ for some unknown $f \in \mathcal{F},$ and a test point $\xtest$, we would like to approximate $f(\xtest)$. We call this the \emph{prediction task induced by $\mathcal{F}$}. We set $\mathcal{X} = \mathbb{R}^d$ and $\mathcal{Y} = \mathbb{R}^K$ throughout.

\emph{In-Context Prediction.} In in-context learning, the data $((\bx_i,\by_i)_{i=1}^n, \xtest)$ is formatted to a \emph{prompt} $p$ that is fed a transformer. We use $\mathcal{P}$ to denote the set of prompts induced by $\mathcal{F}$, and we say that a prompt is \emph{induced by} $f \in \mathcal{F}$ if for all $i, \by_i = f(\bx_i)$. A \textit{prediction function} is a map $\mathcal{P} \to \mathcal{Y},$ and the weights of the transformer encode a specific implementation of one such function. We will use $\xtest(p)$ as notation for the test point in $p$.

\textbf{Semi-Supervised Prediction and ICL.} Later in the paper, we also investigate semi-supervised prediction (SSP). This augments $n_\textrm{lab}$ labeled examples of the prediction task with $n_\textrm{unlab}$ unlabeled points $(\bx_j)_{j = 1}^{n_u},$ and demands the same solution. In semi-supervised ICL, the prompts are expanded to add these unlabeled examples coupled with a `null' label. %

\textbf{Transformers.} 
Following common conventions in the ICL literature \citep{vonoswald2023transformers_gd,zhangfrei2024jmlr}, we encode the input data (i.e., the prompt) 
\begin{equation}\label{eq:algorithm_initial_condition}
    \begin{aligned}
    \bm {\tilde X}_0&=\begin{bmatrix} \bm x_1 & \dots & \bm x_n & \bm x_{\textrm{test}}\end{bmatrix}^\top\in\mathbb R^{(n+1)\times d}\\
    \bm {\tilde Y}_0&=\begin{bmatrix} \bm y_1 & \dots & \bm y_n & \bm 0_K\end{bmatrix}^\top\in\mathbb R^{(n+1)\times K}.\\
    \mathbf Z_0&=\begin{bmatrix}
    \bm {\tilde X}_0 & \bm {\tilde Y}_0
\end{bmatrix}\in\mathbb R^{(n+1)\times (d+K)}.
\end{aligned} 
\end{equation}
We consider single-head, attention-only transformers \cite{vaswani2017attention}, which transform an input $\mathbf Z_0\in\mathbb R^{(n+1)\times (d+K)}$ over $L$ layers via
\begin{equation}\label{eq:transformer_architecture}
\begin{aligned}
    \mathbf Q_{\ell} = \mathbf Z_\ell\mathbf W_{Q,\ell},\enskip \mathbf K_{\ell} = \mathbf Z_\ell\mathbf W_{K,\ell},\enskip\mathbf V_{\ell} = \mathbf Z_\ell\mathbf W_{V,\ell},\\
    \operatorname{Attn}(\mathbf Z_\ell) = \sigma\left(\frac{\mathbf Q_\ell \mathbf K_\ell^\top}{\sqrt{d+K}}+\mathbf M\right)\mathbf V_\ell\mathbf W_{P,\ell},\\
    \bm Z_{\ell+1} = \mathbf Z_\ell+\operatorname{Attn}(\mathbf Z_\ell),
\end{aligned}
\end{equation}
where $\mathbf M=\begin{bmatrix}
    \mathbf 0_{n+1}\mathbf 0_n^\top & - \infty\bm 1_{n+1}
\end{bmatrix}$ is an additive masking bias that reproduces the causal constraint that earlier tokens cannot use query information and $\sigma$ is the row-wise softmax operator whereby for a matrix $\bm A=(A_{ij})_{i,j\in[n+1]}$ we have
\begin{equation}
    (\sigma(\bm A))_{ij}={\exp(A_{ij})}/{\sum_{m} \exp(A_{im})}.
\end{equation}

\paragraph{Architectural scope.}
End-to-end algorithms have so far mainly been identified for linear transformer models \cite{vonoswald2023transformers_gd,ahn2023transformers}. We take the next step by studying attention-only transformers with softmax, which capture data-dependent weighting while remaining tractable. To keep the mechanism identifiable, we omit MLPs, layer norm, and multi-head attention, and use this controlled setting to isolate how attention couples feature geometry and labels across layers.

\begin{figure*}[t]
    \centering
    \includegraphics[width=\textwidth]{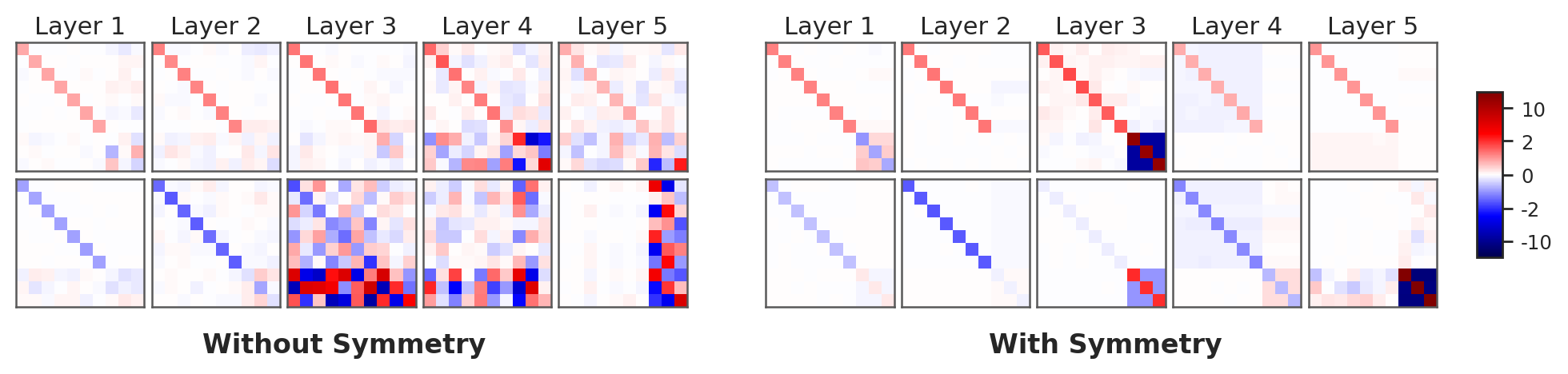}
    \caption{Learned weight matrices for the unconstrained transformer \emph{(left)} and the symmetry-preserving transformer \emph{(right)} trained on in-context linear classification. The unconstrained model exhibits little visible structure, whereas enforcing the task symmetries produces a more regular pattern that is easier to interpret. The top and bottom rows show $\bm W_{QK,\ell}$ and $\bm W_{VP,\ell}$, respectively.\vspace{-\baselineskip}
}
    \label{fig:transformer_weights_become_clearn}
\end{figure*}

\section{Algorithm Extraction in In-Context Learning via Symmetries}\label{sec:symmetric_training}

In this section, we (i) formalize the natural symmetries that prediction tasks satify, and (ii) describe how we use these symmetries as a diagnostic tool to uncover the transformer's test-time inference procedure. Concretely, we do this by enforcing the symmetries \emph{inside} the network at every layer, thus improving interpretability. 

We begin by formally setting up a notion of symmetries of a prediction task.

\begin{definition}
Let $G$ be a set of transformations $g:\mathcal P\to\mathcal P$. For \(p\in\mathcal P\) induced by \(f\in\mathcal F\) and \(p'=g(p)\), let
\(f'\in\mathcal F\) denote any function inducing \(p'\).  
We say $G$ captures \emph{prediction invariance} if for all such $(p,f,p',f')$,
\[
f'(\xtest(p')) = f(\xtest(p)).
\]
If moreover the action $g$ extends to $\mathcal Y$, we say $G$ captures \emph{prediction equivariance} if for all such $(p,f,p',f')$,
\[
f'(\xtest(p')) = g\big(f(\xtest(p))\big).
\]
\vspace{-\baselineskip}
\end{definition}
In words, such transformations $g$ modify the prompt in a structured way. If we apply $g$, the correct answer for the transformed prompt is either unchanged (invariance) or changes in the corresponding structured way (equivariance). Symmetries commonly arising in common ICL task families (e.g., regression/classification) include:

\begin{itemize}[wide, itemsep = 0.03\baselineskip, parsep = 0.15\baselineskip, topsep = -.2\baselineskip, labelindent = 0.03\linewidth]
    \item \emph{Feature symmetries.} The transformations $g$ apply an identical permutation to the feature coordinates of every $\bx_i$ in the prompt and to the test point. This only relabels the coordinates, so the correct test prediction is unchanged.
    \item \emph{Label symmetries.}  The transformations $g$ relabel target classes by a fixed permutation (e.g., $0\mapsto 1$ and $1\mapsto 0$) applied to every  $\by_i$ in the prompt. This only renames the labels, so the correct test label is permuted in the same way.

    \item \emph{Permuting examples.}  The transformations $g$ reorder the context pairs $(\bx_i,\by_i)$. This does not change the underlying task, so the correct test prediction is unchanged.  
\end{itemize}

\textit{Example.}
In linear regression with \(f(x)=\langle w,x\rangle\), permuting the feature coordinates of all context inputs and the query by the same permutation \(P\) gives a transformed prompt induced by \(f'(x)=\langle Pw,x\rangle\). Hence \(f'(P x_{\mathrm{test}})=f(x_{\mathrm{test}})\), so this coordinate relabeling is a prediction invariance.

\textbf{Role of Symmetries in Prediction.} Symmetries transform the prompt without changing the underlying prediction task.
It is therefore natural to ask whether a predictor {respects} these symmetries, i.e., whether applying $g$ to the prompt leaves its prediction unchanged (invariance) or transforms it by $g$ (equivariance). For example, empirical risk minimization respects them provided $g$ preserves the model class and the loss: applying $g$ maps any loss-minimizer for $p$ to a corresponding loss-minimizer for $g(p)$.

\textbf{Why symmetries help interpretability.}
A trained transformer can implement a specific function via many distinct internal computations. Even if the \emph{target algorithm} is equivariant, the network may implement it using intermediate representations defined in arbitrary, rotated coordinate systems. Consequently, the same input-output behavior can arise from obscure hidden-state dynamics whose representations do not reflect the problem’s natural symmetries.

\emph{Example.}
Gradient descent (GD) for standard linear models is permutation equivariant: permuting the input feature coordinates simply results in a corresponding permutation of the weight updates. However, a transformer could simulate this algorithm while encoding the iterates in a basis that rotates layer-by-layer. While the final predictions remain correct (and equivariant), the internal activations would appear disordered, effectively masking the simple update structure of the simulated algorithm.

\begin{figure*}[t]
    \centering
    \begin{minipage}[c]{0.7\linewidth}
        \centering
        \includegraphics[width=\linewidth]{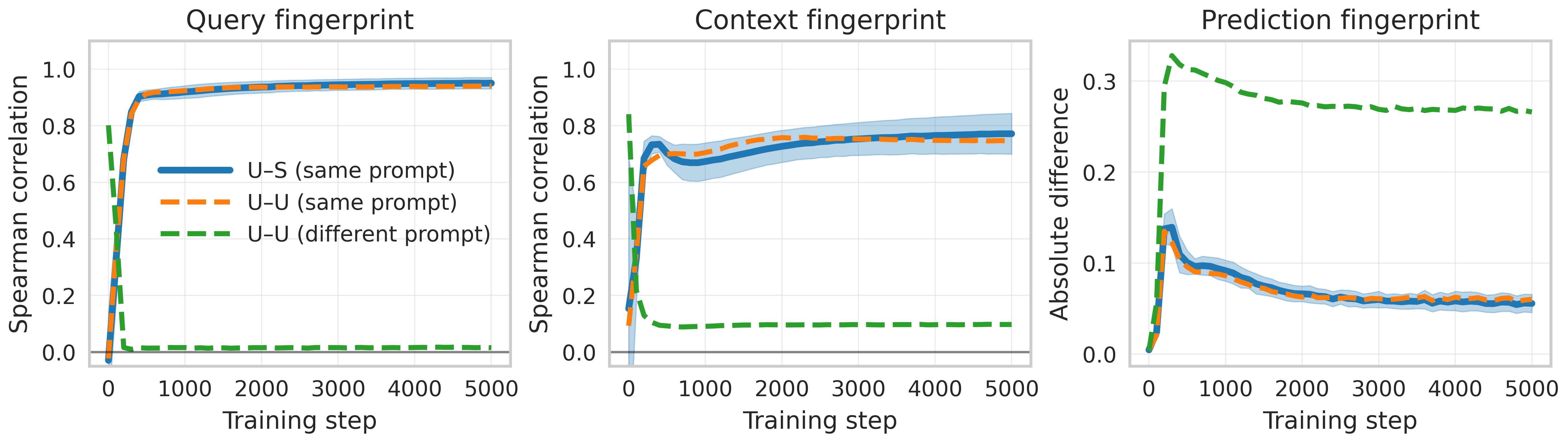}
        
    \end{minipage}\hfil
    \begin{minipage}[c]{0.27\linewidth}
        \centering
        \includegraphics[width=\linewidth,clip]{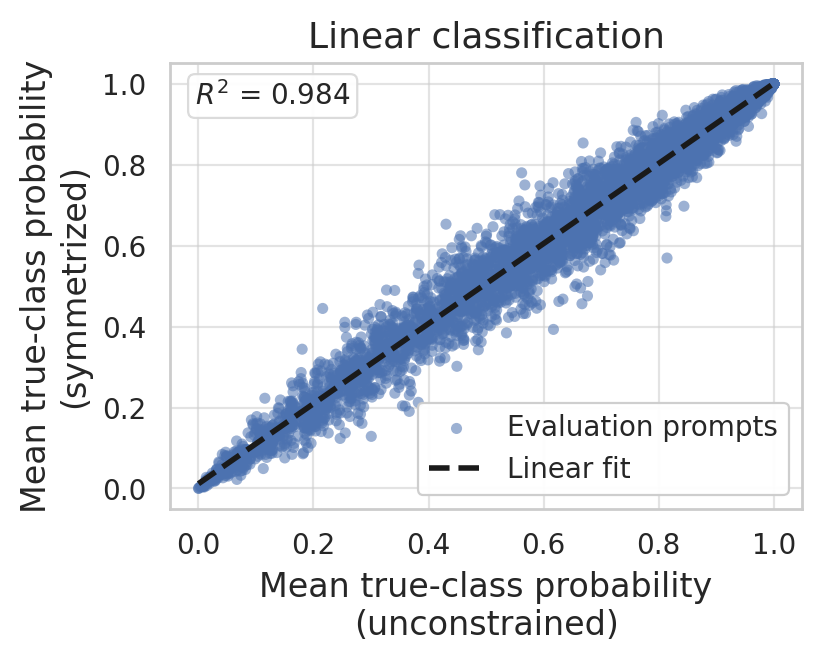}
    \end{minipage}
    \caption{\textbf{Transformer symmetrization preserves the learned algorithm.}
\textit{Left:} The symmetrized transformer (S) matches the unconstrained (U): on the same prompt, U--S fingerprint matches the U--U baseline across query (local decision rule at the test point), context (which demonstrations drive the logits), and predictions, while the different-prompt control is near zero (mean over 2\,048 tasks; averaged over $10$ training runs; $\pm$1 s.d.\ across runs).
\textit{Right:} After averaging predictions across 5 training runs, U and S are functionally identical: ground-truth-class probabilities agree with $R^2>0.98$ over $10\,000$ tasks, yielding the same decision boundaries.
 \vspace{-\baselineskip}}\label{fig:funcational_equivalence}
\end{figure*}

\textbf{Enforcing symmetry via randomization.}
To resolve the ambiguity of arbitrary internal coordinates, we explicitly enforce symmetries layer-by-layer. We introduce a stochastic permutation operation inside each attention block. 
Crucially, if we applied a \emph{fixed} permutation $P$, the transformation would be vacuous: it would act as a mere coordinate reparameterization (gauge choice) that the layer's linear weights could algebraically absorb. 
However, by \emph{randomizing} $P_\ell$ at every forward pass, we convert this gauge freedom into an active constraint. This prevents the model from relying on fixed coordinate choices and restricts the hypothesis class to functions that are permutation-equivariant:
\begin{equation}\label{eq:transformer_architecture_symmetry}
    \mathbf Z_{\ell+1}
    =
    \mathbf Z_{\ell}
    +\operatorname{Attn}(\mathbf Z_{\ell}\mathbf P_\ell)\,\mathbf P_\ell^\top;  
    \mathbf P_\ell \sim \mathrm{Unif}(\mathfrak S_d\times\mathfrak S_k).
\end{equation}
Here $\mathbf Z_\ell$ is the representation matrix. $\mathbf P_\ell$ independently permutes the subspaces corresponding to the $d$ input features and $k$ output dimensions. Because the permutation changes constantly, the attention block is forced to process information based on relationships rather than absolute positions. Figure~\ref{fig:transformer_weights_become_clearn} gives a visual indication of the effect: after symmetry enforcement, the learned transformer weights become substantially more structured.

\textbf{Interpretation via behavioral alignment.}
We validate this intervention by comparing the original transformer to its symmetrized counterpart. 
Crucially, the two models remain closely aligned: on identical prompts, the unconstrained (U) and symmetrized (S) transformers match in their input-sensitivity and prediction fingerprints, while different-prompt controls do not (Figure~\ref{fig:funcational_equivalence}). This indicates that symmetry enforcement preserves the learned strategy while removing arbitrary coordinate choices. The symmetrized network therefore serves as a \emph{faithful proxy}: it implements the same underlying algorithm in a more canonical form suitable for reverse engineering.

\section{Case Study: Linear Classification}

In this section, we study transformers trained for in-context multi-class linear classification through the lens of symmetries. We apply the symmetry-based interpretability probe in Section~\ref{sec:extract_algorithm} to extract the transformer’s underlying update rule. We then study the emerging class of classifiers in Section~\ref{sec:mean_shift_analysis}: we prove convergence in the \textit{label-dominated} regime, validate the mechanism in simulation, and  discuss how the same classifier can extend to label noise, semi-supervised prompts, and selected non-linear tasks.

\textbf{Problem setup.}
Each ICL instance is defined by \(K\) latent unit-norm vectors \(\bm w_1,\dots,\bm w_K\in\mathbb R^d\). Given \(n+1\) feature vectors \( \bm x_1,\dots,\bm x_n,\bm x_\text{test}\in\mathbb R^d\), we assign labels by
\begin{equation}\label{eq:linear_classification_labels}
\begin{aligned}
c_i &= \arg\max_{k\in[K]} \langle \bm w_k,\bm x_i\rangle, \text{ and}\\\by_i &= \bm{e}_{c_i} \in \{0,1\}^K. %
\end{aligned}
\end{equation}
where \(\bm e_1,\dots,\bm e_k\) are the standard basis vectors in \(\mathbb R^K\), so \(\bm y_i\) is the one-hot embedding of class \(c_i\). In words, each $\bw_k$ is a class-specific direction, and $\bx_i$ is assigned to the class whose direction is most aligned with it.

Given the prompt built from examples \((\bm x_i,\bm y_i)_{i\in[n]}\) and a test feature \(\bm x_{\mathrm{test}}\), the goal is to predict \(\bm y_{\mathrm{test}}\), i.e., determine which class direction is most aligned with \(\bm x_{\mathrm{test}}\), without directly observing the \(\{\bm w_k\}\). We denote $s(p)\in\mathbb R^k$ the transformer's predicted logit vector for the query point.

\textbf{Transformer training.}
We train the transformer architecture \eqref{eq:transformer_architecture} on i.i.d.\ instances with
\(\bm x_i\sim\mathcal N(\bm 0,\bm I_d)\) and each \(\bm w_k\sim\mathrm{Unif}(\mathbb{S}^{d-1})\), where $\mathbb{S}^{d-1}$ is the unit sphere in $\mathbb R^d$. Because \(\{\bm w_k\}\) vary across instances and are never revealed, achieving low loss requires the model to infer the induced linear decision rules from the in-context examples. See Appx.~\ref{appx:training_detail} for additional training detail.

\subsection{Extracting the Emergent Algorithm}\label{sec:extract_algorithm}

\textbf{Step 1: Train a transformer.}
We first verify that the unconstrained architecture in \eqref{eq:transformer_architecture} solves the task. We fix \(K=3\), \(d=7\) and \(n=64\), and train transformers with depths \(L\in\{1,2,5\}\). Figure~\ref{fig:performance} compares their performance to logistic regression and linear SVM across context lengths. The transformer is consistently competitive and improves with depth, indicating that solving this task requires $L>1$ layers.

\textbf{Step 2: Make the weights easier to interpret.}
Figure~\ref{fig:transformer_weights_become_clearn} compares a standard transformer trained without additional constraints (\ref{eq:transformer_architecture}) to a symmetry-enforced transformer trained under our approach (\ref{eq:transformer_architecture_symmetry}). The symmetry-enforced model exhibits a markedly more regular weight structure, which makes the learned computation easier to interpret.

\begin{figure}[t]
    \centering
    \includegraphics[width=.65\linewidth]{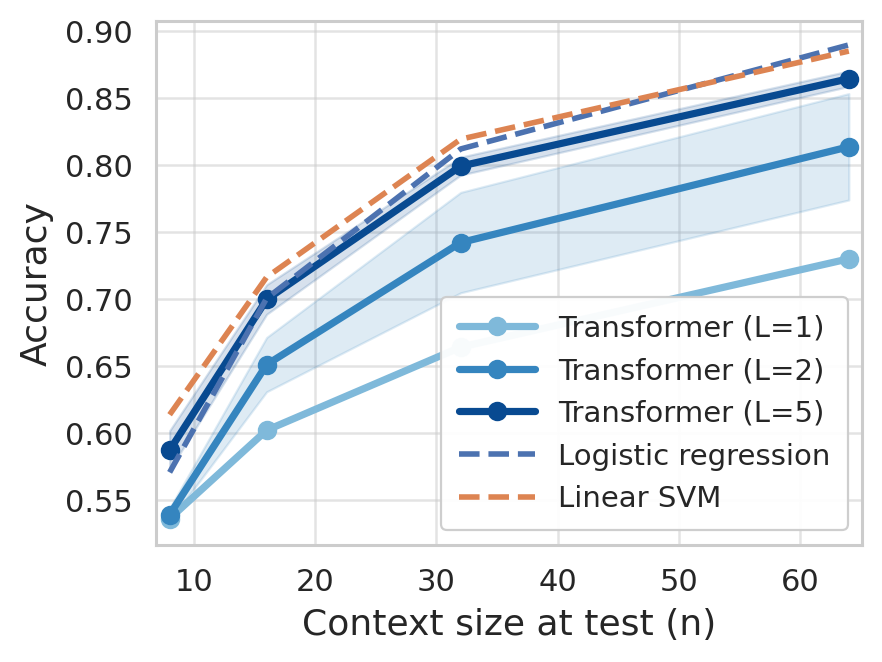}
    \caption{\textbf{Transformers match baselines.} Test performance vs.\ context length (mean over 10{,}000 tasks; averaged over 5 training runs; $\pm$1 s.d.\ across runs).}\vspace{-\baselineskip}
    \label{fig:performance}
\end{figure}

\textbf{Behavioral alignment.} Crucially, symmetrization changes the \emph{parameterization}, not the learned \emph{algorithm}: it removes redundant degrees of freedom while preserving the model’s input–output map and local decision rule. We test this by comparing unconstrained (U) and symmetrized (S) transformers with three “fingerprints,” in the spirit of output- and gradient-level matching \citep[e.g.,][]{akyurek2023what_learning_algorithm,vonoswald2023transformers_gd}. We compare (i) the predicted ground-truth probability $\operatorname{softmax}(s(p))_{c_{\textrm{test}}}$, (ii) the query Jacobian $\nabla_{\bm{x}_{\textrm{test}}} s(p)\in\mathbb R^{K\times d}$ capturing the local decision rule around the test input, and (iii) per-example context influence scores $S\in\mathbb R^{K\times n}$ where $S_{c,i}=\|\nabla_{\bm{x}_i} s_c(p)\|_2$ indicating which demonstrations drive the logits. See Appx.~\ref{app:metrics} for details.

Our experiments show that {symmetrization preserves the learned algorithm}: it leaves not only the predictions, but also the local decision rule and context usage essentially unchanged. In Fig.~\ref{fig:funcational_equivalence}, on the \emph{same} prompt, U--S is as similar as U--U: the query Jacobian and context-influence fingerprints align strongly (Spearman's correlation $>0.95$ and $\approx 0.8$), whereas the different-prompt control has near-zero alignment. Predicted probabilities also match instance-by-instance ($R^2>0.98$ after averaging over training seeds; Fig.~\ref{fig:funcational_equivalence}, right). Further, additional experiments show that, on the aggregate level, overall test accuracy agrees across a broad sweep of problem instances (varying dimension, sample size, and margin; see Appx.~\ref{app:metrics}).

\textbf{Step 3: Extract the algorithm.}
Using the structured weights from Step~2, we project each learned matrix onto a two-parameter block family matching the pattern in Fig.~\ref{fig:transformer_weights_become_clearn}. Concretely, we approximate
\begin{equation}\label{eq:weight_abstraction}
\bm W \;\approx\;
\begin{bmatrix}
\alpha\,\bm I_{d} & \bm 0_{d\times K}\\
\bm 0_{K\times d} & \gamma\,\bigl(\bm I_K-\tfrac1K\mathbf 1_K\mathbf 1_K^\top\bigr)
\end{bmatrix}.
\end{equation}
This abstraction further preserves the model’s implemented inference behavior, as quantified in Appendix~\ref{app:metrics}. For layer $\ell$, we denote the parameters of $\bm W_{\mathrm{QK},\ell}$ by $(\alpha_\ell,\gamma_\ell)$ and those of $\bm W_{\mathrm{VP},\ell}$ by $(\alpha_\ell',\gamma_\ell')$.

Weights of the form~\eqref{eq:weight_abstraction} commute with the layerwise permutations in the symmetrized architecture, so the permutation sandwich leaves the computation unchanged. Plugging this weight abstraction into the transformer update (\ref{eq:transformer_architecture}) then yields the two-line dynamics (see Appx.~\ref{app:metrics} for detail)
\begin{equation}\label{eq:emergent_algorithm}
\begin{aligned}
\bm {\tilde X}_{\ell+1}
&=
\bm {\tilde X}_\ell
+\alpha'_\ell\,\bm A_\ell
\bm {X}_\ell,\\
\bm {\tilde Y}_{\ell+1}
&=
\bm {\tilde Y}_\ell
+\gamma'_\ell
\bm A_\ell\bm{Y}_\ell^c,
\end{aligned}
\end{equation}
where the attention from all tokens to the training tokens is
\begin{equation}
    \bm A_\ell = \sigma\Big(\alpha_\ell\bm {\tilde X}_\ell\bm{ X}_\ell^\top+\gamma_\ell\bm {\tilde Y}_\ell^c(\bm{ Y}_\ell^c)^{\top}\Big)\in\mathbb R^{(n+1)\times n}
\end{equation}
with $\sigma$ the row-wise softmax. Here, $\bm {\tilde X},\bm {\tilde Y}$ contain all features and labels as defined in (\ref{eq:algorithm_initial_condition}) and we set $\bm{X}_\ell\in\mathbb R^{n\times d}$ and $\bm{Y}_\ell\in\mathbb R^{n\times K}$ to denote the first $n$ rows (in-context examples only). Labels are centered across classes via  
\begin{equation}
    \bm Y_\ell^c = \bm Y_\ell\big(\bm I-\tfrac1K\bm 1\bm 1^\top)\quad\text{and}\quad\bm {\tilde Y}_\ell^c = \bm {\tilde Y}_\ell\big(\bm I-\tfrac1K\bm 1\bm 1^\top).
\end{equation} 
Thus $\bm A_\ell$ combines feature similarity and label agreement, and the same weights propagate both features and labels.

\newcommand{\bX}{\bm{X}} \newcommand{\bY}{\bm{Y}}

\begin{figure}[t]
    \centering
    \includegraphics[width=\linewidth]{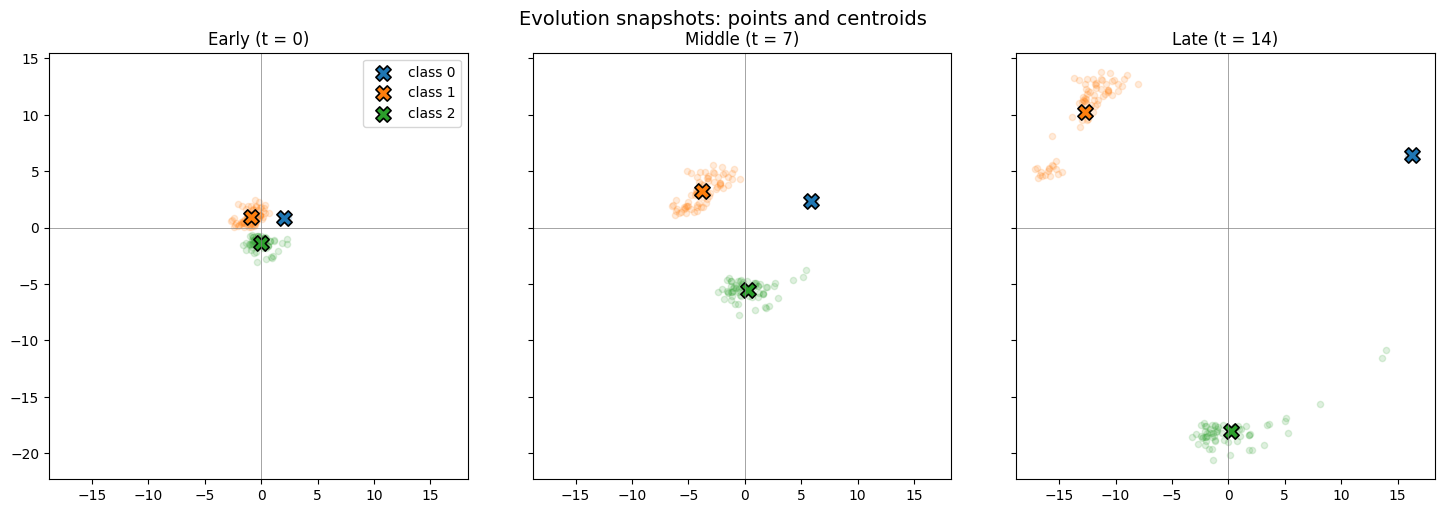}
    \caption{\textbf{Label-driven mean-shift increases class separation.} \textit{Left:} Simulated dynamics with fixed parameters $(\alpha,\gamma,\alpha',\gamma')=(1,5,0.08,0.1)$, showing class centroids drifting apart.}\vspace{-\baselineskip}
    \label{fig:centroid-trajectories}
\end{figure}

\subsection{Analysis: The Coupled Mean-Shift Mechanism}
\label{sec:mean_shift_analysis}

In the previous section, we identified a \emph{family} of attention-driven update rules (\ref{eq:emergent_algorithm}) that the transformer instantiates. We now show that this family contains parameterizations whose induced dynamics implement a classification rule.

\textbf{Label-Driven Regime.} To make this concrete, we consider a fixed schedule ${\alpha,\gamma,\alpha',\gamma'}$ with positive parameters and analyze the corresponding iterations in the regime $\gamma/\alpha \gg 1$. In this regime, the attention matrices are dominated by the term $\gamma \tilde\bY \bY^\top$, and the resulting dynamics are effectively `label-driven.'

\textbf{Connection to mean shift.}
The resulting attention-driven recursion is reminiscent of the classical \emph{mean-shift} procedure \citep{fukunaga1975estimation}. Mean-shift repeatedly moves each point toward a similarity-weighted local average, which causes groups of mutually similar points to \emph{collapse} to shared representatives (modes). The same contraction appears here. Since each attention row is a probability distribution, $\bm A_\ell \bm X_\ell$ is a weighted barycenter of the context, and the update $\bm {\tilde X}_{\ell+1}=\bm {\tilde X}_\ell+\alpha'_\ell \bm A_\ell \bm X_\ell$ implements an averaging-style drift that merges clusters into prototypes. The crucial difference is that our similarity is \emph{supervised}: $\bm A_\ell$ depends on both feature and label similarity via the $\bm {\tilde Y}$-term, so label-consistent points can attract and merge even when they are far in raw feature space. We therefore refer to this family as a \textit{coupled mean-shift}: it imports the averaging-and-collapse geometry of classical mean shift into a supervised, label-aware classification setting.

\textbf{Classwise separation.} In the label-driven limit, attention over the context data becomes class-conditional: each point averages only with others sharing its label, giving a block-diagonal attention matrix and partitioning the data into class-wise clusters. This pushes class centroids apart in a way governed by the geometry of the class vectors \(\{\bm w_k\}\), increasing the separability of the embeddings (Theorem~\ref{thm:directional_margin_growth}).

\textbf{Test Point and Prediction Dynamics.} While the training data undergoes the dynamics above, the initial movement of the test point $\xtest(\ell)$, and the prediction $\by_\textrm{test}(\ell)$ is not label-driven even in the above limit. This is because the initial $\by_\textrm{test}(0)$ is the zero vector, and so does not affect the ``test-attention'' $(A_{\ell}^{\textrm{test}})_i \propto \exp( \alpha \langle \xtest(\ell), \bx_i(\ell)\rangle + \gamma \langle \bm y_\textrm{test}(\ell), \bm y_{i}(\ell)\rangle)_i$. Instead, this dynamics is dominated by the interaction of $\bm x_\textrm{test}(0)$ with the training datapoints. As our main theoretical result, we argue that under explicit pointwise train/test separation conditions, the dynamics of the test point drive $\bm x_\textrm{test}(\ell)$ towards the cluster of the \emph{correct} label, and further drive its prediction $\bm y_\textrm{test}(\ell)$ towards this label.
\begin{theorem}\label{thm:main}
    Let $c^*$ be the correct class prediction for $\bm x_\textrm{test}$. Define the geometric alignments \begin{align*} \setlength{\abovedisplayskip}{0.01\baselineskip}\setlength{\belowdisplayskip}{0.01\baselineskip} R_\ell &:= \min_{i :\, c_i = c^*} \langle \bm x_\textrm{test}(\ell) , \bm x_{i}(\ell) \rangle, \textit{ and }\\  L_\ell &:= \max_{i :\, c_i \neq c^*} \langle \bm x_\textrm{test}(\ell), \bm x_{i}(\ell)\rangle. \end{align*} Also define the test margin $\Delta_\ell = R_\ell - L_\ell$, the training margin $\Gamma_\ell$ as in Lemma~\ref{lem:Rt-Lt-evolution}, and the effective margin $\widetilde{\Delta}_\ell := \min(\Delta_\ell,\Gamma_\ell)$. If the class-clusters are well-separated in the pointwise sense of Lemma~\ref{lem:Rt-Lt-evolution}, $\widetilde{\Delta}_0 > 0$, and $\alpha\Delta_0 \ge \log(K) + \log(1 + 2/\widetilde{\Delta}_0),$ then \begin{enumerate}[wide, nosep]
        \item The geometric margin diverges as \( \Delta_\ell \ge \Delta_0 (1 + \alpha')^\ell.\)
        \item Further, the \emph{label} margin grows exponentially: For all $\ell \ge \lceil (\alpha')^{-1} \log \frac{\log(4K)}{\alpha \Delta_0} \rceil,$ \[ \forall c \neq c^*,\; (\bm y_\textrm{test}(\ell))_{c^*} - (\bm y_\textrm{test}(\ell))_c \ge (1+\gamma')^\ell/2. \]
    \end{enumerate}
\end{theorem}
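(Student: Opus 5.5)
The proof tracks the test-point dynamics layer by layer and uses Lemma~\ref{lem:Rt-Lt-evolution} for the training tokens. Under its pointwise separation hypothesis in the label-driven regime, Lemma~\ref{lem:Rt-Lt-evolution} gives that training attention is essentially block-diagonal by class. It also gives that the training margin $\Gamma_\ell$ grows at least geometrically. Concretely, each training point moves toward the barycenter of its own class, $\bm x_i(\ell+1)=\bm x_i(\ell)+\alpha'\sum_{j:c_j=c_i}A_{ij}\bm x_j(\ell)$.

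The test row is $\bm x_{\mathrm{test}}(\ell+1)=\bm x_{\mathrm{test}}(\ell)+\alpha'\sum_i a_i\bm x_i(\ell)$. At $\ell=0$ the test label is zero, so the weights are $a_i\propto\exp(\alpha\langle \bm x_{\mathrm{test}},\bm x_i\rangle)$. I would first bound the attention mass leaking to wrong classes. Each wrong-class logit is at most $\alpha L_\ell$ and each correct-class logit is at least $\alpha R_\ell$, so the leaked mass satisfies
\[
\varepsilon_\ell:=\sum_{i:c_i\neq c^*}a_i\le (K-1)e^{-\alpha\Delta_\ell}\cdot\tfrac{\#\{c_i\neq c^*\}}{(K-1)\#\{c_i=c^*\}}.
\]
With balanced classes this reduces to $\varepsilon_\ell\lesssim Ke^{-\alpha\Delta_\ell}$. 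The hypothesis $\alpha\Delta_0\ge\log K+\log(1+2/\widetilde\Delta_0)$ is set up exactly so that $\varepsilon_0\le \widetilde\Delta_0/(2+\widetilde\Delta_0)$. The same then holds at every later layer, since $\Delta_\ell$ is nondecreasing; this is shown by induction.

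Next I expand the new inner products. For a correct-class $i$ and wrong-class $j$,
\[
\langle \bm x_{\mathrm{test}}(\ell+1),\bm x_i(\ell+1)\rangle-\langle \bm x_{\mathrm{test}}(\ell+1),\bm x_j(\ell+1)\rangle .
\]
I split this into three pieces. The first is the old difference, which is at least $\Delta_\ell$. The second is the test drift $\alpha'\sum_m a_m\langle \bm x_m,\bm x_i-\bm x_j\rangle$. Its $(1-\varepsilon_\ell)$ correct-class part is controlled below by $\Gamma_\ell\ge\widetilde\Delta_\ell$, and its $\varepsilon_\ell$ leaked part can cost at most of order $\widetilde\Delta_\ell$ after normalization. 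The third is the training drift of $\bm x_i,\bm x_j$ toward their own class barycenters, which contributes $\alpha'(\text{avg}_{c^*}\langle \bm x_{\mathrm{test}},\cdot\rangle-\text{avg}_{c_j}\langle\bm x_{\mathrm{test}},\cdot\rangle)\ge\alpha'\Delta_\ell$. The $O(\alpha'^2)$ cross term is nonnegative by the same separation. Combining these gives $\Delta_{\ell+1}\ge(1+\alpha')\Delta_\ell$, with the leakage bound $\varepsilon_\ell(1+2/\widetilde\Delta)\le 1$ absorbing the bad term. Iterating yields part 1.

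For part 2, the test label evolves as $\bm y_{\mathrm{test}}(\ell+1)=\bm y_{\mathrm{test}}(\ell)+\gamma'\sum_i a_i\bm y^c_i(\ell)$. The training labels stay one-hot-like in the label-driven regime and scale as $(1+\gamma')^\ell\bm e_{c_i}^c$ by their own block-diagonal recursion. Therefore the increment in coordinate $c^*-c$ is at least $\gamma'(1+\gamma')^\ell(1-2\varepsilon_\ell)$. Once $\alpha\Delta_\ell\ge\log(4K)$ we have $\varepsilon_\ell\le1/4$. By part 1 this happens once $(1+\alpha')^\ell\ge\log(4K)/(\alpha\Delta_0)$, which is the stated threshold using $\log(1+\alpha')\approx\alpha'$. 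After this point the test label also enters the attention logits with the correct sign, which only sharpens attention. Summing the geometric increments, and bounding the early-layer contributions (which are at worst $-\varepsilon$-sized) by $(1+\gamma')^{\ell_0}$-order terms, gives the gap $\ge(1+\gamma')^\ell/2$.

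The main obstacle is the step-$\ell$ expansion in the second paragraph: the leaked mass must be controlled uniformly in pairs $(i,j)$ using only pointwise quantities, which requires a usable bound on $\langle\bm x_m,\bm x_i-\bm x_j\rangle$ for wrong-class $m$. My first attempt is to bound it by norms and then show those norms also grow like $(1+\alpha')^\ell$, so that the ratio to $\widetilde\Delta_\ell$ stays bounded; this is where the $1+2/\widetilde\Delta_0$ factor should be spent.
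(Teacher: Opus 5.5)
Your route is the paper's. The four-piece expansion of $\langle\bm{x}_{\mathrm{test}}(\ell+1),\bm{x}_i(\ell+1)-\bm{x}_j(\ell+1)\rangle$ (old margin, test drift, training drift $\ge\alpha'\Delta_\ell$, quadratic term) is exactly Lemma~\ref{lem:Rt-Lt-evolution}(c), and the budget $\varepsilon_\ell(1+2/\widetilde\Delta_0)\le 1$ is the one the paper spends. But the step you defer as the ``main obstacle'' is the substance of part~1, and your plan for it aims at the wrong quantity. The leaked part of the test drift can only be bounded through norms, costing up to $2\alpha'\varepsilon_\ell(1+\alpha')^{2\ell}$. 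The inductive hypothesis only gives $\widetilde\Delta_\ell\gtrsim(1+\alpha')^{\ell}$, so ``the ratio to $\widetilde\Delta_\ell$ stays bounded'' does not follow from norm growth; it would need an extra argument using the super-exponential decay of $\varepsilon_\ell$, which you do not give. The missing idea is not to weaken $\Gamma_\ell$ to $\widetilde\Delta_\ell$. Lemma~\ref{lem:Rt-Lt-evolution}(b) gives $\Gamma_\ell\ge\Gamma_0(1+\alpha')^{2\ell}$, which matches the squared-norm growth from part~(a) exactly. After dividing by $\alpha'(1+\alpha')^{2\ell}$, the requirement becomes the layer-independent condition $(1-\varepsilon_\ell)\Gamma_0\ge 2\varepsilon_\ell$, i.e.\ $\varepsilon_\ell(\Gamma_0+2)\le\Gamma_0$. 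This follows from $\varepsilon_\ell\le Ke^{-\alpha\Delta_\ell}\le Ke^{-\alpha\Delta_0}$ and $\Gamma_0\ge\widetilde\Delta_0$.

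Two further repairs are needed in part~1.
\begin{enumerate}
\item The $\alpha'^2$ cross term is not nonnegative by separation. Let $\bm{\nu}_i,\bm{\nu}_j$ be the barycenters toward which $\bm{x}_i,\bm{x}_j$ move. For a wrong-class token $m$, the quantity $\langle\bm{x}_m,\bm{\nu}_i-\bm{\nu}_j\rangle$ compares a cross-class inner product with a within-class-$c_j$ one, and the pointwise hypotheses say nothing about the latter. Its leaked part must be charged to the same budget, which is why the paper's recursion carries the coefficient $\alpha'(1+\alpha')$.
\item Your leakage bound at layers $\ell\ge 1$ silently drops the test-label term in the attention logits. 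It holds only if that term favours $c^*$, so the label margins $(\bm{y}_{\mathrm{test}}(\ell))_{c^*}-(\bm{y}_{\mathrm{test}}(\ell))_c\ge 0$ must be carried in the same induction from the first layer on, not invoked after the burn-in. The induction closes because the class-level logit gap is then at least $\alpha\Delta_\ell>0$, forcing $p_{c^*}>p_c$ for the class masses $p_c:=\sum_{i:c_i=c}a_i$.
\end{enumerate}

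Part~2 cannot be closed as stated, because the claimed bound is false for small $\ell$. Let $\ell_0:=\min\{\ell:\alpha\Delta_\ell\ge\log(4K)\}$. Summing your increments $\gamma'(1+\gamma')^s(1-2\varepsilon_s)\ge\tfrac12\gamma'(1+\gamma')^s$ over $\ell_0\le s<\ell$ gives $\tfrac12\bigl((1+\gamma')^\ell-(1+\gamma')^{\ell_0}\bigr)$, not $(1+\gamma')^\ell/2$ (the earlier increments are nonnegative, but that does not help). The deficit is genuine. Every increment is at most $\gamma'(1+\gamma')^s$, so the label margin never exceeds $(1+\gamma')^\ell-1$, which is below $(1+\gamma')^\ell/2$ whenever $(1+\gamma')^\ell<2$. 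Concretely, at $\ell=0$ the margin is $0$. Yet the stated threshold $\lceil(\alpha')^{-1}\log\frac{\log(4K)}{\alpha\Delta_0}\rceil$ is $\le 0$ as soon as $\alpha\Delta_0\ge\log(4K)$, which the hypotheses permit.

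The paper's own proof has the same non sequitur: it passes from the one-step bound $\Delta_y^{(t+1)}\ge\alpha'\lambda_t/2$ to $\Delta_y^{(t)}\ge(1+\gamma')^t$. Your $\log(1+\alpha')\approx\alpha'$ also errs in the unsafe direction. Since $\log(1+\alpha')<\alpha'$, reaching $(1+\alpha')^\ell\ge\log(4K)/(\alpha\Delta_0)$ can take more layers than the stated threshold. What your argument, and the paper's, actually establishes is exponential growth of the label margin after the burn-in, of the form $\tfrac12\bigl((1+\gamma')^\ell-(1+\gamma')^{\ell_0}\bigr)$, not the stated bound at the stated threshold.
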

In words, $(R_0, L_0)$ capture how well $\bm x_\textrm{test}$ aligns with points in the correct and incorrect classes, while $\Gamma_0$ captures the corresponding pointwise training separation. The effective quantity $\widetilde{\Delta}_0$ controls the assumption needed to keep the train/test recursion favorable. Notice that if even if $\Delta_0 > 0$ is arbitrarily small, a large enough $\alpha$ enables the condition above: thus, 
as for large $\alpha,$ any initial bias in $\bm x_\textrm{test}$ towards the correct class-cluster is amplified 
exponentially through the geometry driven phase of this dynamics. Further, interpreting $(\bm y_\textrm{test}(\ell))_{c^*}$ as logits of the final prediction, after an initial burn in period determined by $\alpha'$, these are driven exponentially as well. Figure~\ref{fig:probing-trajectory-simulation} illustrates this dynamics. %
\begin{figure}[t]
    \centering
    \includegraphics[width=.9\linewidth]{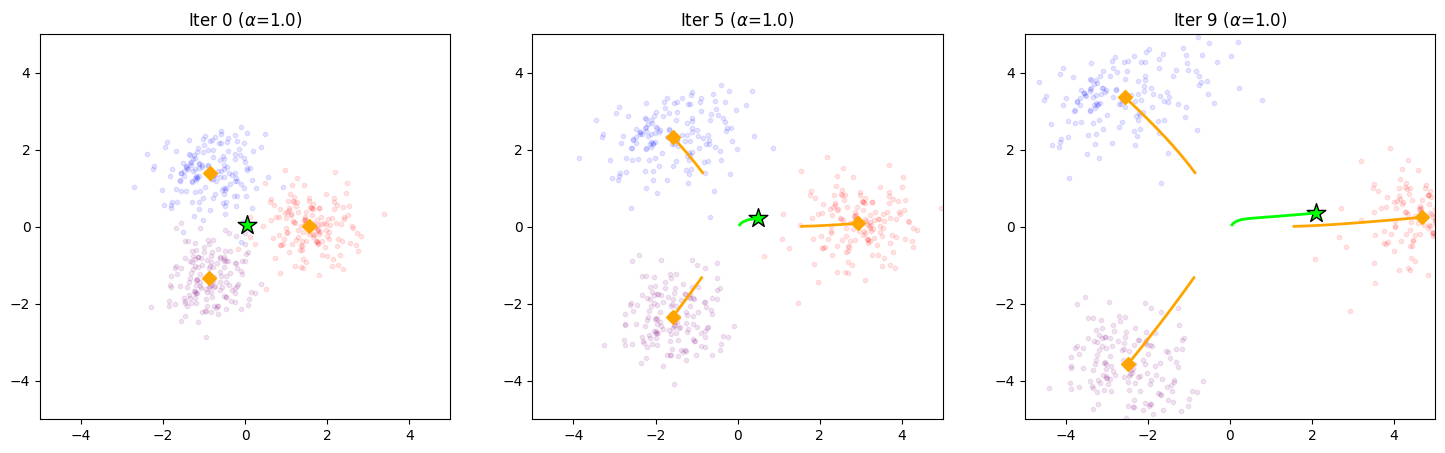}
    \caption{A simulation of the mean-shift trajectories with constant parameters $(\alpha, \gamma, \alpha' \gamma') = (1,5,0.08, 0.1)$. The test point `follows' a trailing path towards its cluster.}\vspace{-\baselineskip}
    \label{fig:probing-trajectory-simulation}
\end{figure}

\textbf{Robustness to label noise.} The analysis suggests robustness to label noise because the recursion updates points using \emph{averages} over many context tokens: correctly labeled points in a class reinforce one another, while randomly flipped labels are spread across incorrect classes, contribute mostly canceling noise, and have limited influence on the resulting class prototypes. Consistent with this, in Appendix~\ref{app:label-noise} we recover the same extracted iteration even when each label is independently flipped to a uniformly random incorrect class with $30\%$ probability.

\textbf{Insights for Semi-Supervised Prediction.} The analysis reveals a simple mechanism: a small set of labeled points steers the class prototypes, while unlabeled (and test) points move by geometric alignment, effectively “latching onto” the appropriate cluster. As a result, if unlabeled points concentrate around the class clusters and at least a few labels are present to anchor them, the same dynamics can propagate those labels through the geometry and correctly classify the test point whenever it aligns with the right cluster. In Section~\ref{sec:ssl}, we study this semi-supervised regime in detail.

\begin{figure}[b]
    \centering
    \vspace{-\baselineskip}
    \includegraphics[width=1\linewidth]{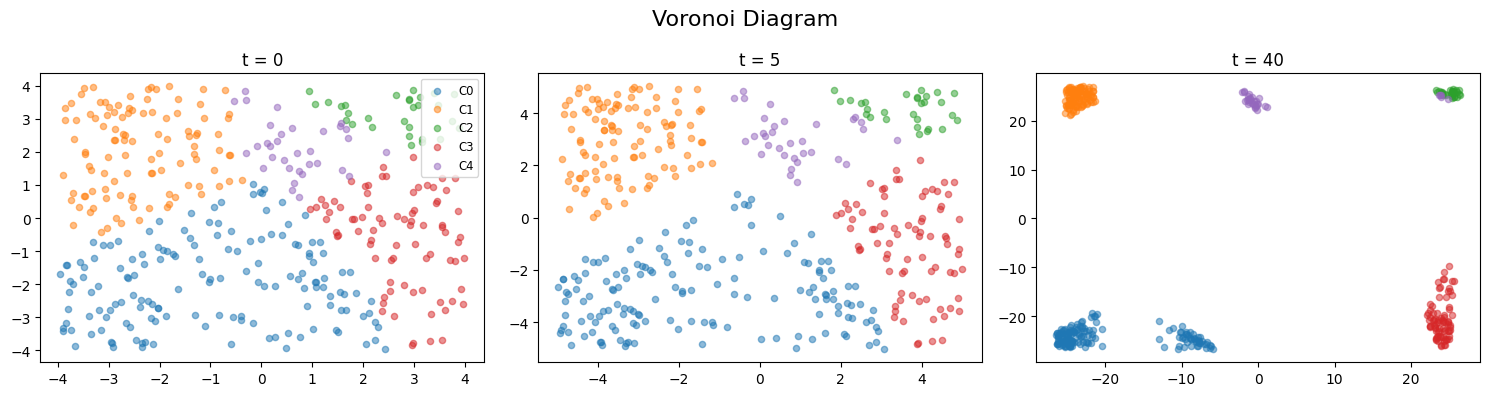}\vspace{.5\baselineskip}
    \includegraphics[width = \linewidth]{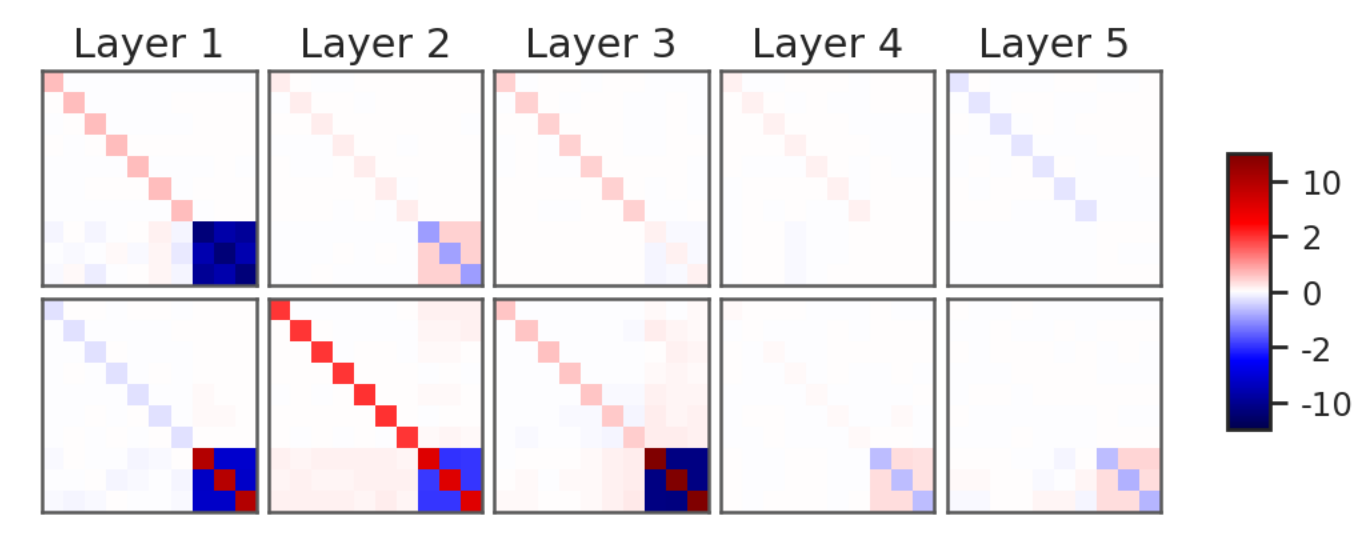}
    \caption{\textbf{Classification dynamics for Voronoi cells.} \textit{Top:} Simulation for $(\alpha,\gamma,\alpha',\gamma')=(1,5,0.05,0.2)$. \textit{Bottom:} Trained transformer weights encode the same dynamics (\ref{eq:emergent_algorithm}), matching Fig.~\ref{fig:transformer_weights_become_clearn}.}\label{fig:voronoi_cells}
\end{figure}

\textbf{Nonlinear Classification.} Notice that in the label-driven regime, so long as the initial class-cluster means start out distinct, the dynamics will amplify their separation, \emph{even if} the class distributions are anisotropic, or classes are separated by nonlinear boundaries. This in fact suggests that the same family of dynamics may be successful at classifying much richer problems than linear classification. We discuss the case of classifying into Voronoi cells in Section~\ref{sec:voronoi} below, and explore more settings in Appx.~\ref{app:nonlinear}.

\subsection{Classification to Voronoi Cells}\label{sec:voronoi}
So far we have focused on the multiclass linear classification problem, where each class is defined by a fixed direction. We now consider {nearest-centroid} (Voronoi) classification where each class corresponds to a prototype, and the classwise decision boundaries are piecewise linear.

\textbf{Setup.}
We sample $K$ centroids $\{\mathbf{c}_k\}_{k=1}^K$ and $n$ feature vectors $\{\mathbf{x}_i\}_{i=1}^n$ independently from $\mathcal N(\bm 0,\bm I_d)$ and assign labels by the nearest centroid,
\begin{equation}
y_i \;=\; \arg\min_{k\in[K]} \|\mathbf{x}_i-\mathbf{c}_k\|_2 ,
\end{equation}
which partitions the plane into $K$ Voronoi cells.

\textbf{Results.}
Figure~\ref{fig:voronoi_cells} (top) runs our extracted emergent dynamics on Voronoi data and shows that it correctly recovers the nearest-centroid clusters. This shows the mechanism is not specific to linear-score classifiers: the same geometry-driven dynamics extends to prototype-based, piecewise-linear tasks. Consistent with this, Figure~\ref{fig:voronoi_cells} (bottom) shows that a symmetrized transformer trained on the Voronoi classification task learns the same family of update rules. Moreover, Appx.~\ref{app:voronoi} shows that the transformer achieves competitive performance on this task and strong alignment between the unconstrained and symmetrized transformers.

\section{Transformers as Semi-Supervised In-Context Learners}\label{sec:ssl}

We now move from fully supervised prompts to a semi-supervised setting: each prompt contains a small set of labeled examples and many additional \emph{unlabeled} points (their label token is set to the all-zero vector). The question is whether a transformer can use these unlabeled points to improve classification.

Our analysis of the emergent classification dynamics motivates this question and yields a simple prediction: in the early, geometry-driven phase, the test point’s behavior is controlled by how much it aligns with the correct versus incorrect clusters (the geometric margin $\Delta_\ell$ in Theorem~\ref{thm:main}). Adding unlabeled points from the same clusters sharpens this geometry, increasing the effective separation seen by the test point even without adding labels, and thereby makes the subsequent label-driven amplification more reliable.

\textbf{Setup.}
We start from the same multi-class linear classification instance with examples $(x_i,y_i)_{i=1}^n$ and a query $x_{\mathrm{test}}$. We construct a semi-supervised variant in two steps. 
First, we make the class structure more pronounced by shifting each feature vector slightly in the direction of its class:
\[\setlength{\abovedisplayskip}{.2\baselineskip}\setlength{\belowdisplayskip}{.2\baselineskip}
\bx_i \leftarrow \bx_i + \eta\, \bw_{c_i}, \qquad \eta>0.
\]
This increases the geometric separation between classes by moving points toward their class direction, strengthening within-class coherence and between-class margins.
Second, we remove the labels for all but $n_{\mathrm{lab}}$ examples, yielding a setup with $n_{\mathrm{lab}}$ labeled and $n_{\mathrm{unlab}} = n - n_{\mathrm{lab}}$ unlabeled inputs.
\newcommand{\nlab}{n_{\mathrm{lab}}} \newcommand{\nunlab}{n_{\mathrm{unlab}}}

\begin{figure}[t]
    \centering
    
    \includegraphics[width=.7\linewidth]{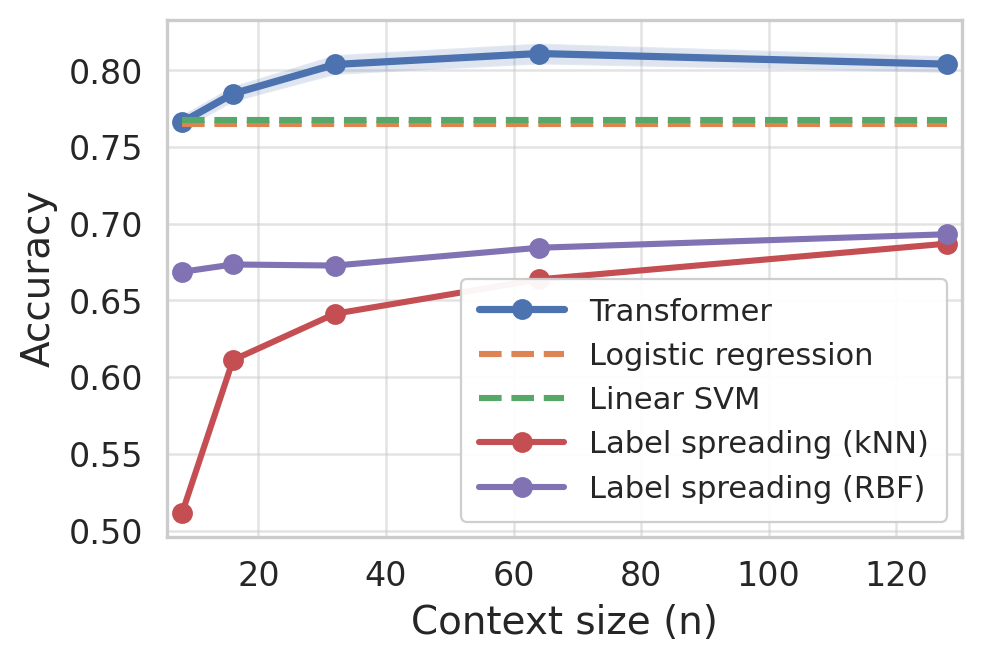}\medskip\\
    \includegraphics[width = \linewidth]{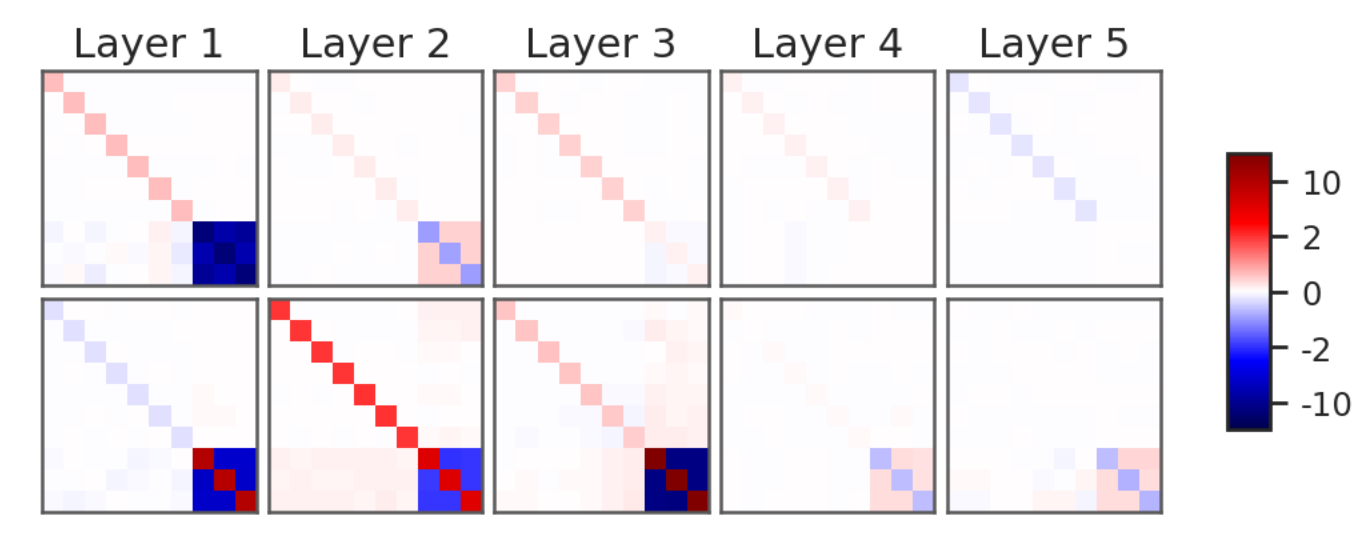}
    \caption{\textbf{Transformers use unlabeled context to improve semi-supervised in-context learning.} \textit{Top:} Each prompt has 8 labeled examples; longer contexts add unlabeled points only. We plot accuracy vs.\ context length (mean over 10{\,}000 tasks; averaged over 3 runs; $\pm$1 s.d.\ across runs). \textit{Bottom:} Trained transformer weights encode the same dynamics (\ref{eq:emergent_algorithm}), matching Fig.~\ref{fig:transformer_weights_become_clearn}.}
    \label{fig:perf_vs_n}
    \vspace{-\baselineskip}
\end{figure}

\textbf{Benefit of Unlabeled Context (Fixed Label Budget).}
To strictly isolate the model's ability to leverage unlabeled geometry, we perform a controlled experiment where $\nlab = 8$ is fixed to a scare budged, while $\nunlab$ varies in $\{0,8,24,120\}$. We fix $K=3$, $d=7$, and $L=5$, and train a single transformer with data shift parameter $\eta = 0.5$. With this choice of $\eta$, the classes are separable but the margin is small relative to the within-class spread, so the unlabeled geometry is only weakly informative and the SSL task is challenging.

Figure \ref{fig:perf_vs_n} presents the results. Standard supervised baselines (Logistic Regression, SVM; dashed lines) cannot utilize the unlabeled points, and show flat performance curves. In stark contrast, the Transformer (Blue) achieves significant accuracy gains as the context size $n$ grows (rising from $\sim 75\%$ to $>80\%$). This confirms that the model is not merely smoothing over the labeled examples, but actively aggregating geometric information from the unlabeled tokens to refine its decision boundary. Moreover, the weights of these transformers again encode the same mean-shift mechanism as in Sec.~\ref{sec:extract_algorithm} (Fig.~\ref{fig:perf_vs_n}, bottom), showing that this motif extends to SSL.
Appendix~\ref{app:semi-supervised} provides further details and an ablation showing that the gains come from the unlabeled tokens themselves, not from extra tokens increasing the transformer's capacity or compute.

\textbf{Transformer vs. classical SSL.}
Figure~\ref{fig:perf_vs_n} highlights a sharp gap with standard graph-based SSL. Despite the geometric structure in the context, Label Spreading (purple/red) performs worse than simple supervised baselines. This phenomenon is consistent with known failure modes on high-dimensional Gaussian data where Euclidean neighborhoods are fragile without careful kernel tuning. In contrast, the transformer reliably uses the unlabeled points and outperforms these SSL baselines as $n$ grows, indicating that it has learned a robust way to aggregate geometric signal.

\section{Conclusion}
In this work, we make in-context classification more identifiable by enforcing the task’s natural symmetries at every layer. This structure lets us extract an explicit attention-driven recursion from the transformer. Strong behavioral alignment shows that the extracted dynamics closely matches the transformer’s actual test-time computation, giving a concrete characterization of its mechanism. We further show that this family of dynamics can implement a label-aware, mean-shift-like update that pulls points toward class prototypes. The same mechanism also appears in more complex settings, including Voronoi-cell classification and semi-supervised classification, where we uncover a surprising ability of in-context learning transformers to benefit from unlabeled context examples.

This analysis focuses on synthetic tasks with clear, explicit symmetries. While many practical applications exhibit analogous structure, systematically identifying and cataloging such symmetries remains an important direction for future work. Moreover, our method enforces symmetries layer by layer, making it particularly well suited to recovering permutation-commuting mechanisms. Extending this framework to in-context learning mechanisms that are not well characterized by layer-wise symmetries is therefore a natural next step. More broadly, understanding whether, when, and why transformers need to break symmetry to solve certain problems may provide further insight into the mechanisms underlying in-context learning.

\section*{Acknowledgments}
This research was supported by the National Science Foundation grant CPS-2317079.

\section*{Impact Statement}

This paper presents work whose goal is to advance the field of Machine
Learning. There are many potential societal consequences of our work, none
which we feel must be specifically highlighted here.

\bibliography{bibliographyrevised}

@inproceedings{vonoswald2023transformers_gd,
  title        = {Transformers Learn In-Context by Gradient Descent},
  author       = {von Oswald, Johannes and Niklasson, Eyvind and Randazzo, Ettore and Sacramento, Jo{\~a}o and Mordvintsev, Alexander and Zhmoginov, Andrey and Vladymyrov, Max},
  booktitle    = {Proceedings of the 40th International Conference on Machine Learning},
  pages        = {35151--35174},
  year         = {2023},
  volume       = {202},
  series       = {Proceedings of Machine Learning Research},
  publisher    = {PMLR},
  url          = {https://proceedings.mlr.press/v202/von-oswald23a.html}
}

@inproceedings{li2023transformersasalgorithms,
  title        = {Transformers as Algorithms: Generalization and Stability in In-Context Learning},
  author       = {Li, Yingcong and Ildiz, Muhammed Emrullah and Papailiopoulos, Dimitris and Oymak, Samet},
  booktitle    = {Proceedings of the 40th International Conference on Machine Learning},
  pages        = {19565--19594},
  year         = {2023},
  volume       = {202},
  series       = {Proceedings of Machine Learning Research},
  publisher    = {PMLR},
  url          = {https://proceedings.mlr.press/v202/li23l.html}
}

@inproceedings{akyurek2023what_learning_algorithm,
  title        = {What learning algorithm is in-context learning? Investigations with linear models},
  author       = {Aky{\"u}rek, Ekin and Schuurmans, Dale and Andreas, Jacob and Ma, Tengyu and Zhou, Denny},
  booktitle    = {International Conference on Learning Representations},
  year         = {2023},
  url          = {https://openreview.net/forum?id=0g0X4H8yN4I},
  eprint       = {2211.15661},
  archivePrefix= {arXiv}
}

@inproceedings{xie2022implicitbayes,
  title        = {An Explanation of In-context Learning as Implicit Bayesian Inference},
  author       = {Xie, Sang Michael and Raghunathan, Aditi and Liang, Percy and Ma, Tengyu},
  booktitle    = {International Conference on Learning Representations},
  year         = {2022},
  url          = {https://openreview.net/forum?id=RdJVFCHjUMI},
  eprint       = {2111.02080},
  archivePrefix= {arXiv}
}

@inproceedings{garg2022incontext,
  title        = {What Can Transformers Learn In-Context? {A} Case Study of Simple Function Classes},
  author       = {Garg, Shivam and Tsipras, Dimitris and Liang, Percy and Valiant, Gregory},
  booktitle    = {Advances in Neural Information Processing Systems},
  volume       = {35},
  pages        = {30583--30598},
  year         = {2022}
}

@inproceedings{chan2022datadistribution,
  title        = {Data Distributional Properties Drive Emergent In-Context Learning in Transformers},
  author       = {Chan, Stephanie C. Y. and Santoro, Adam and Lampinen, Andrew K. and Wang, Jane X. and Singh, Aaditya K. and Richemond, Pierre H. and McClelland, James L. and Hill, Felix},
  booktitle    = {Advances in Neural Information Processing Systems},
  volume       = {35},
  pages        = {18878--18891},
  year         = {2022}
}

@inproceedings{dai-etal-2023-gpt,
    title = "Why Can {GPT} Learn In-Context? Language Models Secretly Perform Gradient Descent as Meta-Optimizers",
    author = "Dai, Damai  and
      Sun, Yutao  and
      Dong, Li  and
      Hao, Yaru  and
      Ma, Shuming  and
      Sui, Zhifang  and
      Wei, Furu",
    editor = "Rogers, Anna  and
      Boyd-Graber, Jordan  and
      Okazaki, Naoaki",
    booktitle = "Findings of the Association for Computational Linguistics: ACL 2023",
    month = jul,
    year = "2023",
    address = "Toronto, Canada",
    publisher = "Association for Computational Linguistics",
    url = "https://aclanthology.org/2023.findings-acl.247/",
    doi = "10.18653/v1/2023.findings-acl.247",
    pages = "4005--4019"
}

@inproceedings{deutch-etal-2024-context,
    title = "In-context Learning and Gradient Descent Revisited",
    author = "Deutch, Gilad  and
      Magar, Nadav  and
      Natan, Tomer  and
      Dar, Guy",
    editor = "Duh, Kevin  and
      Gomez, Helena  and
      Bethard, Steven",
    booktitle = "Proceedings of the 2024 Conference of the North American Chapter of the Association for Computational Linguistics: Human Language Technologies (Volume 1: Long Papers)",
    month = jun,
    year = "2024",
    address = "Mexico City, Mexico",
    publisher = "Association for Computational Linguistics",
    url = "https://aclanthology.org/2024.naacl-long.58/",
    doi = "10.18653/v1/2024.naacl-long.58",
    pages = "1017--1028"
}

@inproceedings{zhou-etal-2024-mystery,
  title        = {The Mystery of In-Context Learning: A Comprehensive Survey on Interpretation and Analysis},
  author       = {Zhou, Yuxiang and Li, Jiazheng and Xiang, Yanzheng and Yan, Hanqi and Gui, Lin and He, Yulan},
  editor       = {Al-Onaizan, Yaser and Bansal, Mohit and Chen, Yun-Nung},
  booktitle    = {Proceedings of the 2024 Conference on Empirical Methods in Natural Language Processing},
  month        = nov,
  year         = {2024},
  address      = {Miami, Florida, USA},
  publisher    = {Association for Computational Linguistics},
  url          = {https://aclanthology.org/2024.emnlp-main.795},
  pages        = {14365--14378}
}

@article{zhangfrei2024jmlr,
  author       = {Zhang, Ruiqi and Frei, Spencer and Bartlett, Peter L.},
  title        = {Trained Transformers Learn Linear Models In-Context},
  journal      = {Journal of Machine Learning Research},
  volume       = {25},
  pages        = {49:1--49:55},
  year         = {2024},
  url          = {https://jmlr.org/papers/v25/23-1042.html}
}

@article{han2025kernel_regression_perspective,
  author       = {Han, Chi and Wang, Ziqi and Zhao, Han and Ji, Heng},
  title        = {Understanding Emergent In-Context Learning from a Kernel Regression Perspective},
  journal      = {Transactions on Machine Learning Research},
  year         = {2025},
  url          = {https://openreview.net/forum?id=6rD50Q6yYz}
}

@article{olsson2022incontext_learning_induction_heads,
  title        = {In-context Learning and Induction Heads},
  author       = {Olsson, Catherine and Elhage, Nelson and Nanda, Neel and Joseph, Nicholas and DasSarma, Nova and Henighan, Tom and Mann, Ben and Askell, Amanda and Bai, Yuntao and Chen, Anna and Conerly, Tom and Drain, Dawn and Ganguli, Deep and Hatfield-Dodds, Zac and Hernandez, Danny and Johnston, Scott and Jones, Andy and Kernion, Jackson and Lovitt, Liane and Ndousse, Kamal and Amodei, Dario and Brown, Tom and Clark, Jack and Kaplan, Jared and McCandlish, Sam and Olah, Chris},
  journal      = {arXiv preprint arXiv:2209.11895},
  year         = {2022},
  url          = {https://arxiv.org/abs/2209.11895}
}

@article{lutz2025linear_transformers_unified_numerical,
  title        = {Linear Transformers Implicitly Discover Unified Numerical Algorithms},
  author       = {Lutz, Patrick and Gangrade, Aditya and Daneshmand, Hadi and Saligrama, Venkatesh},
  journal      = {arXiv preprint arXiv:2509.19702},
  year         = {2025},
  note         = {To appear at NeurIPS 2025},
  url          = {https://arxiv.org/abs/2509.19702}
}

@article{comaniciu2002mean_shift,
  title        = {Mean Shift: A Robust Approach Toward Feature Space Analysis},
  author       = {Comaniciu, Dorin and Meer, Peter},
  journal      = {IEEE Transactions on Pattern Analysis and Machine Intelligence},
  volume       = {24},
  number       = {5},
  pages        = {603--619},
  year         = {2002}
}

@inproceedings{zaheer2017deep_sets,
  title        = {Deep Sets},
  author       = {Zaheer, Manzil and Kottur, Satwik and Ravanbakhsh, Siamak and Poczos, Barnabas and Salakhutdinov, Ruslan and Smola, Alexander},
  booktitle    = {Advances in Neural Information Processing Systems},
  year         = {2017}
}

@inproceedings{lee2019set_transformer,
  title        = {Set Transformer: A Framework for Attention-based Permutation-Invariant Neural Networks},
  author       = {Lee, Juho and Lee, Yoonho and Kim, Jungtaek and Kosiorek, Adam R. and Choi, Seungjin and Teh, Yee Whye},
  booktitle    = {Proceedings of the 36th International Conference on Machine Learning},
  year         = {2019}
}

@inproceedings{cohen2016group_equivariant_cnns,
  title        = {Group Equivariant Convolutional Networks},
  author       = {Cohen, Taco and Welling, Max},
  booktitle    = {Proceedings of the 33rd International Conference on Machine Learning},
  year         = {2016}
}

@techreport{zhu2002label_propagation,
  title        = {Learning from Labeled and Unlabeled Data with Label Propagation},
  author       = {Zhu, Xiaojin and Ghahramani, Zoubin},
  institution  = {Carnegie Mellon University},
  year         = {2002}
}

@article{ahuja2023distshift,
  title        = {A Closer Look at In-Context Learning under Distribution Shifts},
  author       = {Ahuja, Kartik and Lopez-Paz, David},
  journal      = {arXiv preprint arXiv:2305.16704},
  year         = {2023},
  url          = {https://arxiv.org/abs/2305.16704}
}

@inproceedings{wies2023learnability,
  title        = {The Learnability of In-Context Learning},
  author       = {Wies, Noam and Levine, Yoav and Shashua, Amnon},
  booktitle    = {Advances in Neural Information Processing Systems},
  year         = {2023},
  eprint       = {2303.07895},
  archivePrefix= {arXiv}
}

@article{yadlowsky2023mixtures,
  title        = {Pretraining Data Mixtures Enable Narrow Model Selection Capabilities in Transformer Models},
  author       = {Yadlowsky, Steve and Doshi, Lyric and Tripuraneni, Nilesh},
  journal      = {arXiv preprint arXiv:2311.00871},
  year         = {2023},
  url          = {https://arxiv.org/abs/2311.00871}
}

@inproceedings{panwar2024bayesianprism,
  title        = {In-Context Learning through the Bayesian Prism},
  author       = {Panwar, Madhur and Ahuja, Kabir and Goyal, Navin},
  booktitle    = {International Conference on Learning Representations},
  year         = {2024},
  url          = {https://openreview.net/forum?id=HX5ujdsSon}
}

@article{zhang2023bma,
  title        = {What and How does In-Context Learning Learn? Bayesian Model Averaging, Parameterization, and Generalization},
  author       = {Zhang, Yufeng and Zhang, Fengzhuo and Yang, Zhuoran and Wang, Zhaoran},
  journal      = {arXiv preprint arXiv:2305.19420},
  year         = {2023},
  url          = {https://arxiv.org/abs/2305.19420}
}

@inproceedings{maron2019invariant_equivariant,
  title        = {Invariant and Equivariant Graph Networks},
  author       = {Maron, Haggai and Ben-Hamu, Heli and Shamir, Nadav and Lipman, Yaron},
  booktitle    = {International Conference on Learning Representations},
  year         = {2019},
  url          = {https://openreview.net/forum?id=Syx72jC9tm},
  eprint       = {1812.09902},
  archivePrefix= {arXiv}
}

@inproceedings{satorras2021egnn,
  title        = {E(n) Equivariant Graph Neural Networks},
  author       = {Satorras, V{\'\i}ctor Garcia and Hoogeboom, Emiel and Welling, Max},
  booktitle    = {Proceedings of the 38th International Conference on Machine Learning},
  pages        = {9323--9332},
  year         = {2021},
  editor       = {Meila, Marina and Zhang, Tong},
  volume       = {139},
  series       = {Proceedings of Machine Learning Research},
  publisher    = {PMLR},
  url          = {https://proceedings.mlr.press/v139/satorras21a.html}
}

@inproceedings{xu2024permeq_transformers,
  author       = {Xu, Hengyuan and Xiang, Liyao and Ye, Hangyu and Yao, Dixi and Chu, Pengzhi and Li, Baochun},
  title        = {Permutation Equivariance of Transformers and Its Applications},
  booktitle    = {Proceedings of the IEEE/CVF Conference on Computer Vision and Pattern Recognition (CVPR)},
  month        = {June},
  year         = {2024},
  pages        = {5987--5996},
  url          = {https://openaccess.thecvf.com/content/CVPR2024/html/Xu_Permutation_Equivariance_of_Transformers_and_Its_Applications_CVPR_2024_paper.html}
}

@inproceedings{geshkovski2023clusters,
  title        = {The Emergence of Clusters in Self-Attention Dynamics},
  author       = {Geshkovski, Borjan and Letrouit, Cyril and Polyanskiy, Yury and Rigollet, Philippe},
  booktitle    = {Advances in Neural Information Processing Systems},
  volume       = {36},
  year         = {2023}
}

@article{rigollet2025meanfield,
  title        = {The Mean-Field Dynamics of Transformers},
  author       = {Rigollet, Philippe},
  journal      = {arXiv preprint arXiv:2512.01868},
  year         = {2025},
  url          = {https://arxiv.org/abs/2512.01868}
}

@article{fukunaga1975estimation,
  title        = {The estimation of the gradient of a density function, with applications in pattern recognition},
  author       = {Fukunaga, Keinosuke and Hostetler, Larry},
  journal      = {IEEE Transactions on Information Theory},
  volume       = {21},
  number       = {1},
  pages        = {32--40},
  year         = {1975},
  publisher    = {IEEE}
}

@article{zhou2003learning,
  title        = {Learning with local and global consistency},
  author       = {Zhou, Dengyong and Bousquet, Olivier and Lal, Thomas and Weston, Jason and Sch{\"o}lkopf, Bernhard},
  journal      = {Advances in Neural Information Processing Systems},
  volume       = {16},
  year         = {2003}
}

@article{ahn2023transformers,
  title        = {Transformers learn to implement preconditioned gradient descent for in-context learning},
  author       = {Ahn, Kwangjun and Cheng, Xiang and Daneshmand, Hadi and Sra, Suvrit},
  journal      = {Advances in Neural Information Processing Systems},
  volume       = {36},
  pages        = {45614--45650},
  year         = {2023}
}

@article{vaswani2017attention,
  title        = {Attention is all you need},
  author       = {Vaswani, Ashish and Shazeer, Noam and Parmar, Niki and Uszkoreit, Jakob and Jones, Llion and Gomez, Aidan N and Kaiser, {\L}ukasz and Polosukhin, Illia},
  journal      = {Advances in Neural Information Processing Systems},
  volume       = {30},
  year         = {2017}
}
\bibliographystyle{icml2026}
\appendix
\onecolumn
\section{Additional Related Work}
\label{app:related_work}

\textbf{Emergence of ICL from pretraining distributions.}
Beyond controlled ``train-from-scratch'' studies, several works emphasize that the \emph{pretraining data distribution} can strongly influence whether and how ICL emerges. In particular, properties such as task-family mixtures, long-range coherence, or ``burstiness'' can shape the implicit meta-learning behavior that appears at inference time \citep{chan2022datadistribution, xie2022implicitbayes, yadlowsky2023mixtures}. This line of work is complementary to ours: we hold the task distribution fixed and instead use symmetry to make the learned inference procedure identifiable and comparable across perturbations.

\textbf{Bayesian and kernel perspectives on ICL.}
Several works interpret ICL as approximate Bayesian inference or Bayesian model averaging in latent-variable task models, providing statistical characterizations of the in-context predictor and its dependence on prompt length, depth, and pretraining distributions \citep{xie2022implicitbayes, panwar2024bayesianprism, zhang2023bma}. A closely related view explains ICL behavior through kernel regression: attention weights act as similarity kernels, yielding predictors that resemble kernel smoothing and nearest-neighbor-like rules \citep{han2025kernel_regression_perspective}. These perspectives are compatible with our findings at the level of \emph{similarity-based} computation, but our extracted dynamics go beyond a single-shot kernel predictor: the model iteratively updates representations (and label-related state) across layers, producing a coupled diffusion/mean-shift process whose analysis yields concrete geometric predictions.

\textbf{Mechanistic interpretability and circuit analyses of ICL.}
Mechanistic interpretability work has identified specific transformer circuits implicated in ICL, notably the emergence of induction heads as a mechanism for copying and pattern completion that correlates with ICL capability \citep{olsson2022incontext_learning_induction_heads}. Survey work synthesizes broader mechanistic and theoretical hypotheses for ICL across settings and model scales \citep{zhou-etal-2024-mystery}. Our approach is complementary: rather than isolating a local circuit, we use symmetry to make the \emph{entire} forward-pass computation identifiable in a controlled classification setting, enabling closed-form extraction and dynamical analysis.

\section{Supplemental material on Empirical Protocols}

\subsection{Supplemental material on transformer training}\label{appx:training_detail}
Unless otherwise stated, all transformer experiments use $K=3$ classes, feature dimension $d=7$, and context length $n=64$. We train with Adam at learning rate $10^{-3}$, $\beta_1=0.9$, $\beta_2=0.999$, weight decay $0$, batch size $8192$, for $20{,}000$ optimizer steps. We use no gradient clipping, dropout, label smoothing, or other regularization. The training loss is the cross-entropy between the ground-truth class of the query point and the transformer-predicted logits for that query.

Training prompts are sampled online at each optimization step rather than from a fixed finite dataset. Accordingly, training is best viewed as stochastic optimization of the expected loss under the task-generating distribution, with each minibatch providing a Monte Carlo estimate of the population objective.

All weight matrices $W \in \mathbb{R}^{(d+K)\times(d+K)}$ are initialized entrywise independently as
\(W_{ij} \sim \mathrm{Unif}[-\tfrac{1}{d+K},\frac{1}{d+K}].
\)
This uses a smaller initialization scale than standard Xavier-style schemes, which we found to improve training stability in our setup.

For the symmetrized model, during training and for each layer $\ell$ and batch element $b$, we sample independent random permutation matrices on the feature and label blocks, $P^{(x)}_{\ell,b}\in\mathbb R^{d\times d}$ and $P^{(y)}_{\ell,b}\in\mathbb R^{K\times K}$, and form
\[
P_{\ell,b}=\mathrm{diag}\!\big(P^{(x)}_{\ell,b},\,P^{(y)}_{\ell,b}\big).
\]
The layer update is then
\[
Z_{\ell+1}^{(b)} = Z_\ell^{(b)} + \mathrm{Attn}\!\left(Z_\ell^{(b)} P_{\ell,b}\right) P_{\ell,b}^{\top}.
\]
Thus, attention is computed on a block-permuted representation and the resulting update is mapped back by the inverse permutation before the residual addition. Empirically, the learned weights approximately commute with these permutations, so the sandwich becomes unnecessary after training; we therefore apply symmetrization only during training and omit it at evaluation time. Random seeds are sampled independently, and we average over $5$ runs unless otherwise specified.

\subsection{Supplemental material on algorithm extraction}\label{app:metrics}

\paragraph{Influence and Function Alignment.}
We evaluate whether Unconstrained (U) and Symmetrized (S) transformers implement the same \emph{inference rule} by probing (i) \emph{what in the prompt they rely on} and (ii) \emph{what they output}. Concretely, we track three signals, each computed from the predicts query-point logit vector $s(p)\in\mathbb{R}^K$:

\noindent\textbf{(1) Query gradient (local decision rule at test time).}
We compute the gradient of each logit with respect to the \emph{query} features,
\begin{equation}
\nabla_{x_{\text{test}}} s(p)\in\mathbb{R}^{K\times d}.
\end{equation}
This probes the model's \emph{local geometry}: two models with aligned query gradients are locally sensitive to the same directions in input space, i.e., they induce a similar local decision boundary around $x_{\text{test}}$.

\smallskip
\noindent\textbf{(2) Context-gradient sensitivity (influence over demonstrations).}
For each context token $i\le n\!-\!1$ and class $k$, we measure how much the predicted logit changes under a small perturbation of that context example's features,
\begin{equation}
S_{c,i}\;=\;\bigl\|\nabla_{x_i} \big[s(p)\big]_{c}\bigr\|_2 .
\end{equation}
This yields a map in $S\in\mathbb{R}^{K\times n}$ that functions like an \emph{influence / saliency map over the prompt}: high values indicate which demonstrations the model is locally using for which test point class logits.

\smallskip
\noindent\textbf{(3) Prediction discrepancy (functional agreement).}
We compare outputs directly by looking at the probability assigned to the ground-truth test class,
\begin{equation}
\big[\mathrm{softmax}(s(p))\big]_{c_\text{test}},
\end{equation}
and report the squared differences in $p_{\text{true}}$ across models. This is the most direct measure of \emph{functional equivalence} (up to calibration).

\smallskip
For the two gradient-based probes (context sensitivity and query gradient), we flatten each per-prompt tensor and report agreement via {Spearman} and {Pearson} correlations. 
We report all three signals under three comparisons:
(i) \textbf{U--S} (same prompt) is the main cross-family test of alignment; (ii) \textbf{U--U} (same prompt) measures training seed-to-seed variability within the unconstrained family;
 and
(iii) \textbf{A--B} (same model) is a negative control using two independently sampled prompts, which should destroy prompt-specific influence structure (and increase prediction discrepancy), ruling out trivial correlations driven by architecture or preprocessing.

\paragraph{Generalization across problem instances.}While the main text demonstrates algorithmic equivalence via output correlation and internal attention dynamics, we also performed an extensive sweep to ensure this holds across problem settings. Figure \ref{fig:trasnformer_symmetry_preserves_behavior} confirms that the unconstrained and symmetrized models achieve indistinguishable accuracy curves across varying the number of layers $L\in[2,16]$, context length $n\in[8,40]$, data dimension $d\in[6,18]$, and margin $\eta\in[0,1]$. Both models track these task changes nearly identically, consistent with them implementing the same algorithmic behavior.

\begin{figure*}[t]
    \centering

    \includegraphics[width=.5\linewidth]{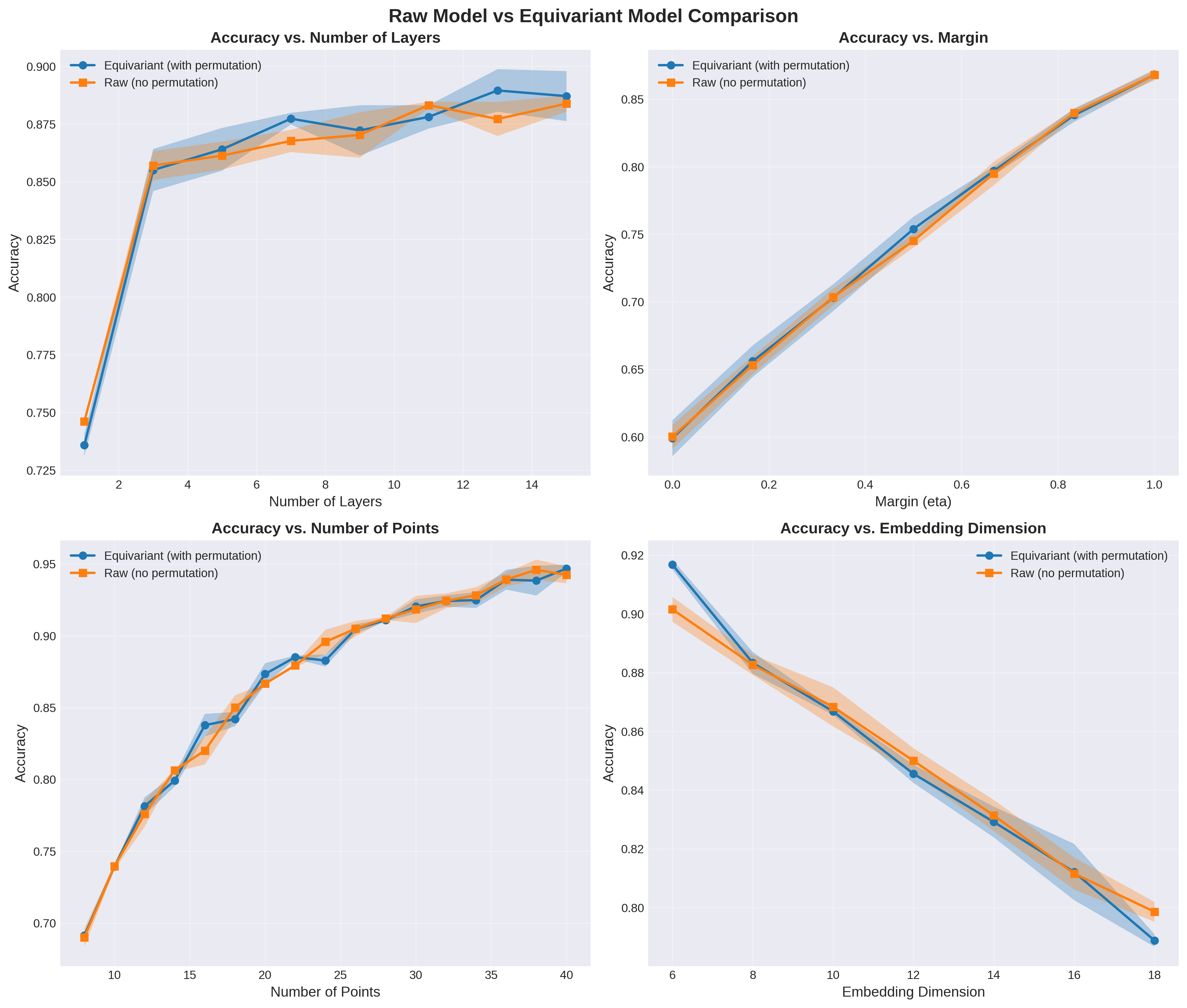}
    \caption{ We compare predictive performance of unconstrained and symmetry-preserving transformers across problem instances (dimension, sample size, data margin) and model depths. Both architectures achieve the same performance throughout.}\label{fig:trasnformer_symmetry_preserves_behavior}
\end{figure*}

\paragraph{Evidence that the weight abstraction is faithful.}
Step~3 argues that a low-parameter, diagonal weight abstraction preserves the transformer’s input--output behavior. We test this by progressively compressing the symmetrized model’s weights and measuring alignment with the original, unconstrained transformer:

\begin{enumerate}
\item \textit{Four clusters recover full accuracy.} Figure~\ref{fig:algorithm_is_fidele} (left) shows that, in the symmetrized model, clustering each weight matrix into $k=4$ groups of equal coefficients already matches the model’s accuracy. The clusters correspond to the four structural regions: the top-left block diagonal, the bottom-right block diagonal, the bottom-right background, and the zero background. Using this structure yields a three-parameter-per-layer abstraction,
\begin{equation}\label{eq:weight_abstraction_partial}
\bm W \;\approx\;
\begin{bmatrix}
\alpha\,\bm I_{d} & \bm 0_{d\times K}\\
\bm 0_{K\times d} & \gamma\,\bigl(\bm I_K+\delta\mathbf 1_K\mathbf 1_K^\top\bigr)
\end{bmatrix},
\end{equation}\label{eq:half_extraction}
and Figure~\ref{fig:algorithm_is_fidele} (right) shows that its input--output alignment with the original transformer is essentially identical to that of the symmetrized model—i.e., the abstraction introduces no additional behavioral deviation beyond symmetrization.

\item \textit{Two parameters per layer suffice.} Figure~\ref{fig:centering} shows that the transformer consistently learns a bottom-right background coefficient of approximately $\delta \approx -\tfrac{1}{K}$. We therefore fix this parameter and reduce the parameterization to two per layer which gives the abstraction used in the main text. Figure~\ref{fig:algorithm_is_fidele} (left) shows that this two-parameter model still closely matches the original transformer’s outputs ($R^2 \approx 0.9$), with only a modest drop from $R^2 \approx 0.96$. This supports that the abstraction remains behaviorally faithful while being substantially simpler.
\end{enumerate}

 \begin{figure}
     \centering
     \includegraphics[width=.3\linewidth]{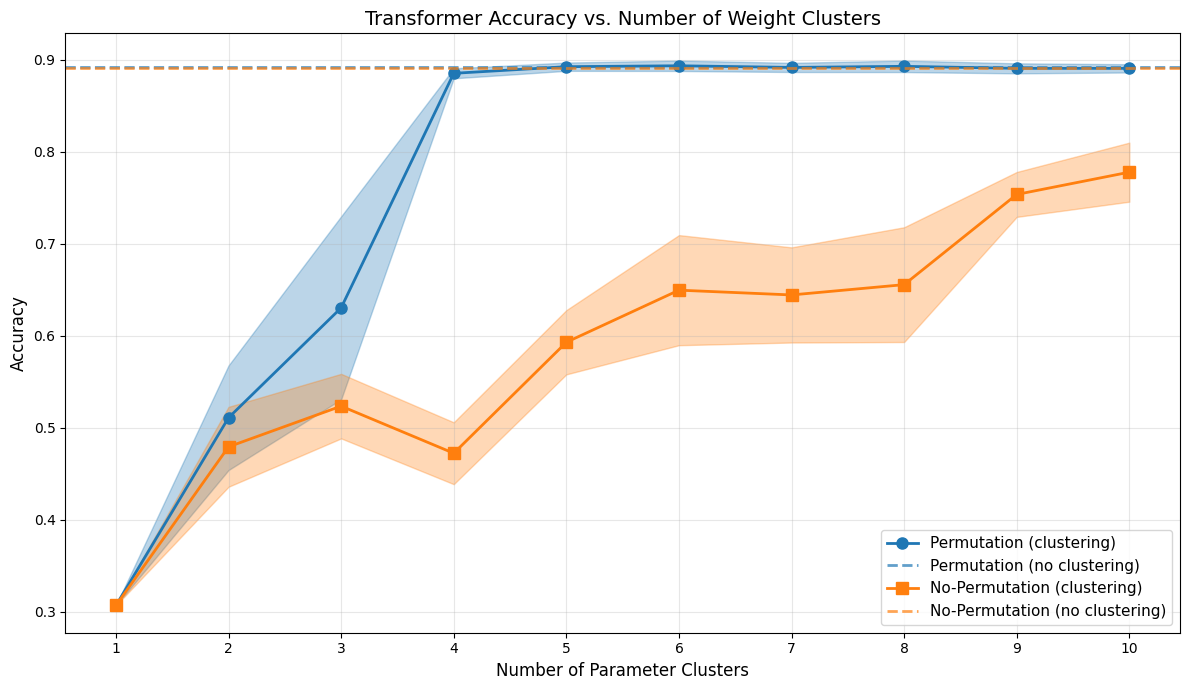}
     \includegraphics[width=.22\textwidth]{figures/output_equivalence.png}
     \includegraphics[width=.22\textwidth]{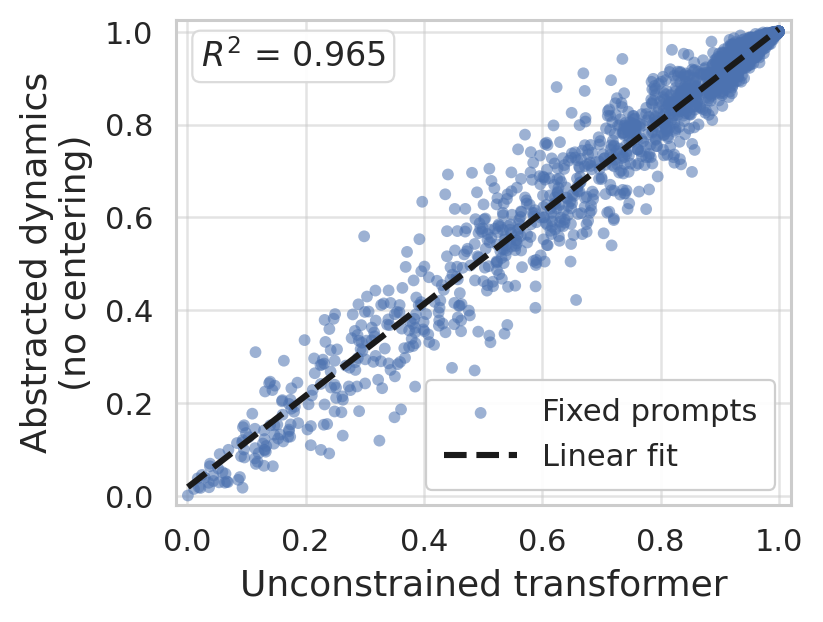}
     \includegraphics[width=.22\textwidth]{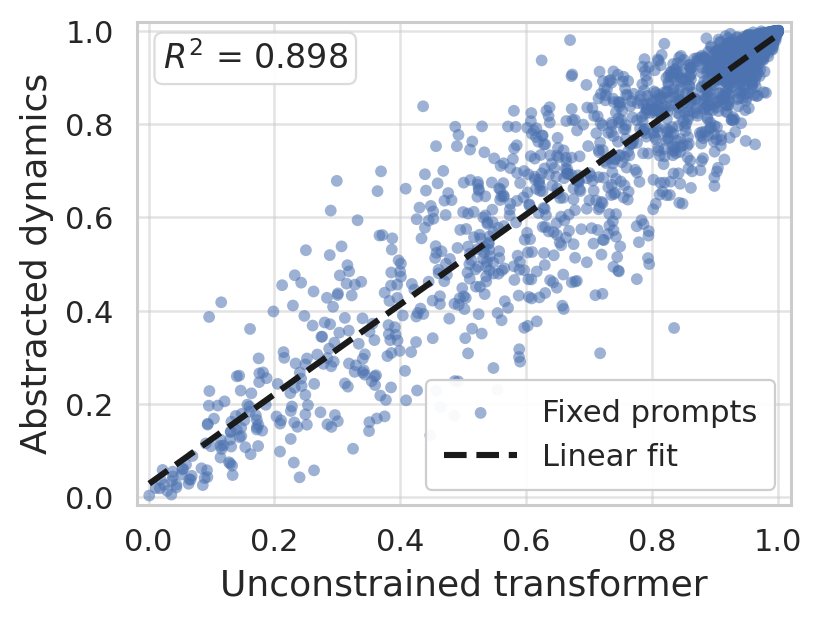}
     \caption{\textbf{Weight abstraction preserves the symmetric transformer’s input--output behavior.} \textit{Left:} Clustering the learned weight matrices (Figure~\ref{fig:transformer_weights_become_clearn}) reveals a simple, low-dimensional structure. In the symmetry-preserving model, summarizing each matrix with just four parameters retains essentially the full predictive performance. \textit{Right (three panels):} Alignment with the original unconstrained transformer for (left to right) the symmetrized model, the weight-abstracted dynamics with three layer-wise hyperparameters (no explicit label centering), and the further-simplified dynamics with two layer-wise hyperparameters (with explicit label centering).}

     \label{fig:algorithm_is_fidele}
 \end{figure}

 \textbf{Developing transformer iteration under weight abstraction.}
To pass from the transformer update~\eqref{eq:transformer_architecture} to the recursion~\eqref{eq:emergent_algorithm}, define
\[
\bm W_{\mathrm{QK},\ell}:=\frac{\bm W_{Q,\ell}\bm W_{K,\ell}^{\top}}{\sqrt{d+K}},
\qquad
\bm W_{\mathrm{VP},\ell}:=\bm W_{V,\ell}\bm W_{P,\ell}.
\]
Writing $\bm Z_\ell=[\,\bm{\tilde X}_\ell\ \bm{\tilde Y}_\ell\,]$, the full attention score matrix is
\[
\bm Z_\ell \bm W_{\mathrm{QK},\ell}\bm Z_\ell^\top+\bm M.
\]
Under the block approximation~\eqref{eq:weight_abstraction}, $\bm W_{\mathrm{QK},\ell}$ acts as $\alpha_\ell \bm I_d$ on features and as $\gamma_\ell(\bm I_K-\frac1K\bm 1\bm 1^\top)$ on labels, so
\[
\bm Z_\ell \bm W_{\mathrm{QK},\ell}\bm Z_\ell^\top+\bm M
\;\approx\;
\alpha_\ell \bm{\tilde X}_\ell \bm{\tilde X}_\ell^\top
+
\gamma_\ell \bm{\tilde Y}_\ell^c(\bm{\tilde Y}_\ell^c)^\top
+\bm M.
\]
Since $\bm M$ sets the last column (the query token) to $-\infty$, the softmax assigns zero attention to that column. Thus attention is effectively taken only over the first $n$ training tokens, giving
\[
\bm A_\ell
=
\sigma\Big(
\alpha_\ell \bm{\tilde X}_\ell \bm X_\ell^\top
+
\gamma_\ell \bm{\tilde Y}_\ell^c(\bm Y_\ell^c)^\top
\Big)
\in\mathbb R^{(n+1)\times n}.
\]
Likewise, the value/output map reduces to
\[
\begin{bmatrix}
\bm X_\ell & \bm Y_\ell
\end{bmatrix}
\bm W_{\mathrm{VP},\ell}
\;\approx\;
\begin{bmatrix}
\alpha_\ell' \bm X_\ell & \gamma_\ell' \bm Y_\ell^c
\end{bmatrix}.
\]
Substituting into the residual update and separating feature and label blocks yields
\[
\bm{\tilde X}_{\ell+1}
=
\bm{\tilde X}_\ell+\alpha_\ell' \bm A_\ell \bm X_\ell,
\qquad
\bm{\tilde Y}_{\ell+1}
=
\bm{\tilde Y}_\ell+\gamma_\ell' \bm A_\ell \bm Y_\ell^c,
\]
which is exactly~\eqref{eq:emergent_algorithm}.

\subsection{Supplemental material on the robustness to noise}\label{app:label-noise}
In the main text, we extracted a mean-shift-style classifier from transformers trained on our synthetic tasks. Here we show that the same qualitative mechanism persists under substantial label noise: each labeled example is independently flipped to a uniformly random incorrect class with probability $p=0.3$.

Figure~\ref{fig:noisy_performance} (Left) reports the unconstrained transformer’s accuracy under noise across context sizes, showing performance comparable to standard baselines. Figure~\ref{fig:noisy_performance} (Right) compares the predicted probability of the ground-truth class for unconstrained versus symmetrized models; the close agreement indicates that the symmetry projection does not change the learned predictor. Finally, Figure~\ref{fig:weights_noise} visualizes the learned weights under label noise. The dominant structure that emerges closely matches the noiseless case: weights again cluster into the same diagonal groups with the same functional roles in the update. This supports the interpretation from the main text that the transformer is still implementing the same latent algorithm---an attention-induced, coupled mean-shift classifier---even when supervision is substantially corrupted.

\begin{figure}
    \centering

    \begin{minipage}{.3\textwidth}
    \includegraphics[width=\textwidth]{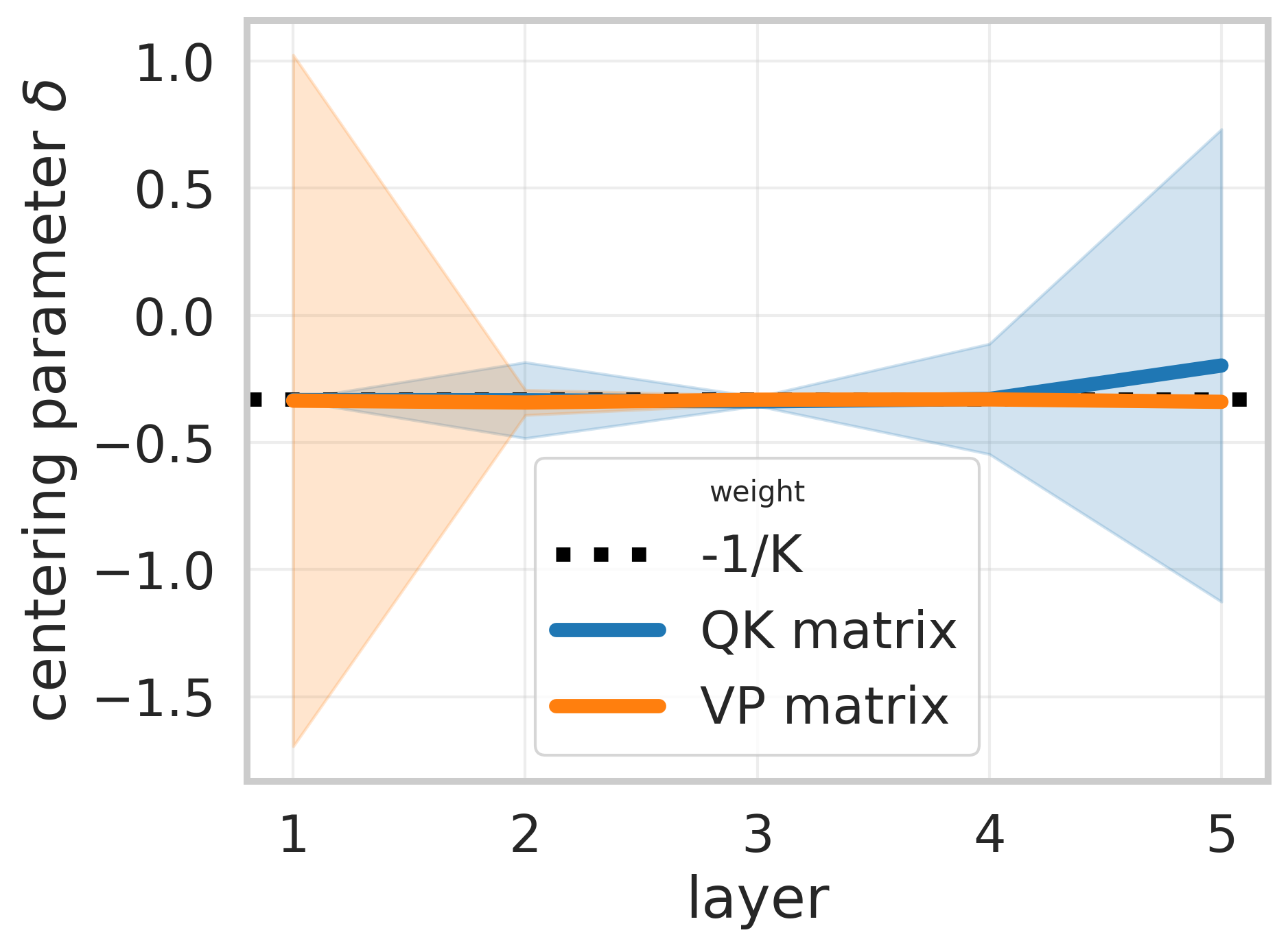}
    \captionof{figure}{\textbf{Transformer learns label centering.} Parameter $\delta_\ell$ as in intermediary weight abstraction (\ref{eq:half_extraction}) motivates fixing $\delta=-\tfrac1K$. Median and std.\ in symmetrized transformer over 5 training runs.}\label{fig:centering}
        
    \end{minipage}\hfill
    \begin{minipage}{.68\textwidth}
        \includegraphics[width=0.55\linewidth]{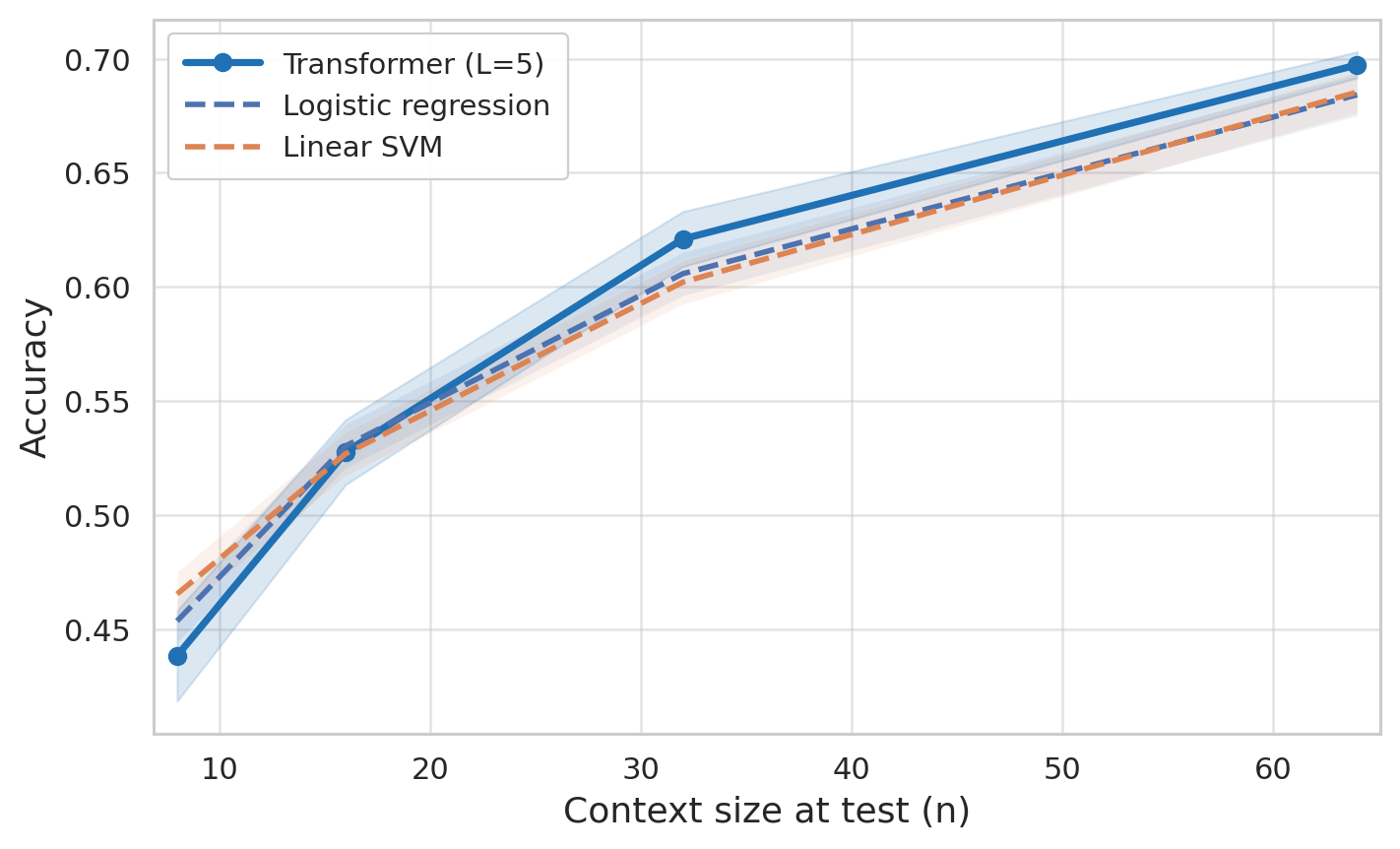}
    \includegraphics[width=0.4\linewidth]{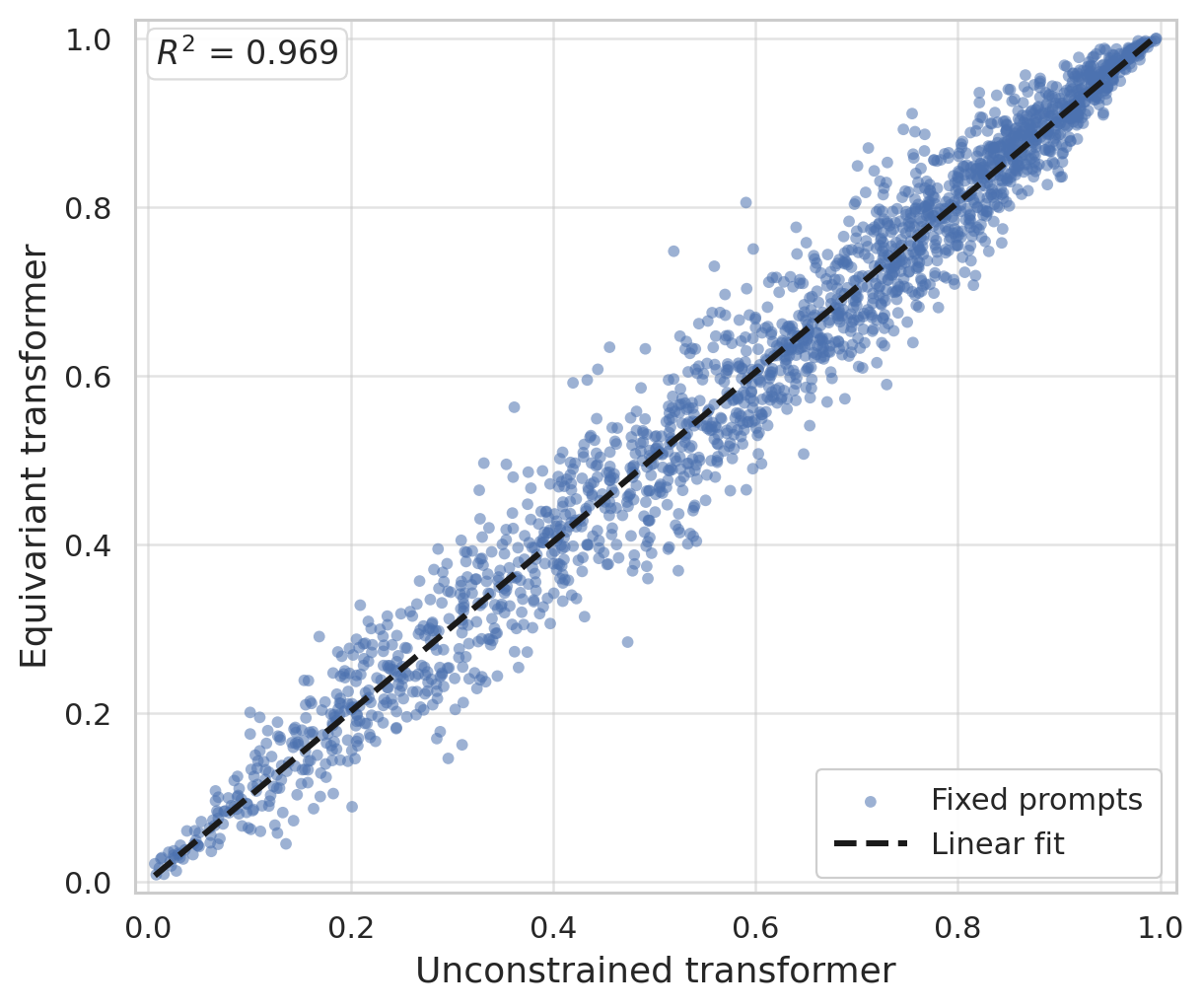}
    \captionof{figure}{Accuracy on the noisy linear classification task as a function of context size, comparing the transformer (mean over 3 runs, $L=5$, trained with $n=64$) to logistic regression and SVM. 
    \textit{Right:} Ground-truth-class probability for the query point: symmetrized transformer vs.\ unconstrained transformer on the same noisy linear classification task, mean across 3 training seeds are shown.}
    \label{fig:noisy_performance}
    \end{minipage}
\end{figure}

\begin{figure}
    \centering
    \includegraphics[width=0.9\linewidth]{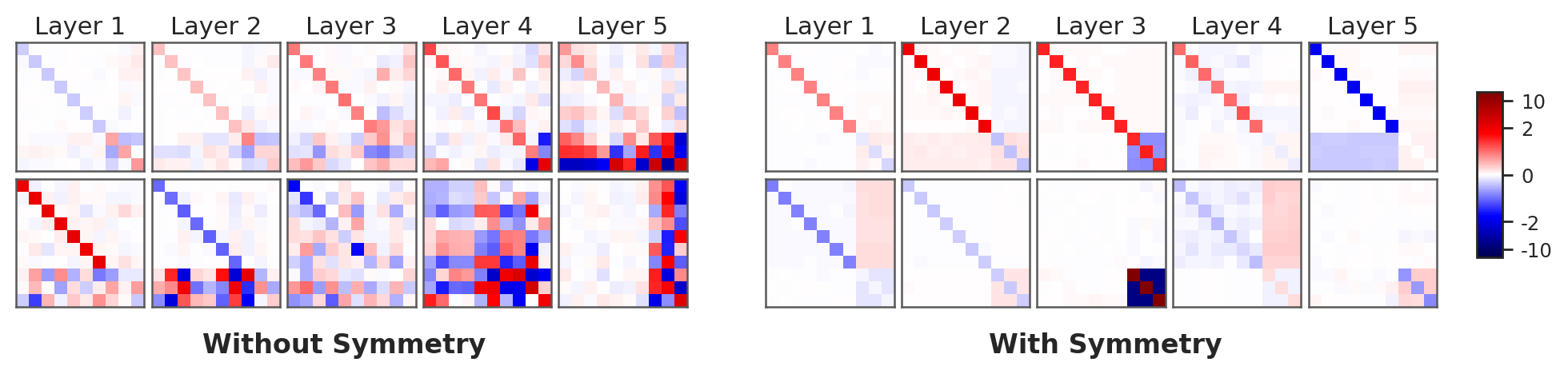}
    \caption{Learned weight matrices for the unconstrained transformer \emph{(left)} and the symmetry-preserving transformer \emph{(right)} trained on noisy linear classification. The unconstrained model exhibits little visible structure, whereas enforcing the task symmetries produces a more regular pattern that is easier to interpret. The top and bottom rows show $\bm W_{QK,\ell}$ and $\bm W_{VP,\ell}$, respectively.}
    \label{fig:weights_noise}
\end{figure}

\subsection{Supplementary material on the Voronoi cell problem}\label{app:voronoi}
In the main text, we showed that the extracted mean-shift classifier can solve the Voronoi cell classification problem. Here we provide additional experimental details confirming that (i) a transformer trained end-to-end also solves this task reliably, and (ii) its behavior is consistent with the same mean-shift-style iteration we identified in the linear classification setting.

\paragraph{Task construction.}
Each ICL instance is defined by $K$ randomly sampled sites (centroids) $\{c_k\}_{k=1}^K \subset \mathbb{R}^d$. For each batch element we draw
\[
c_k \sim \mathcal{N}(0,I_d),
\qquad
x_i \sim \mathcal{N}(0,I_d),
\]
independently across $k$ and $i$. Each point is labeled by the nearest site in Euclidean distance,
\[
y_i \;=\; \arg\min_{k\in[K]} \|x_i - c_k\|_2.
\]
This induces a Voronoi partition of $\mathbb{R}^d$ with class regions determined jointly by the directions and magnitudes of the sites. For any pair of centroids, the tie set $\{x:\|x-\mathbf{c}_a\|_2=\|x-\mathbf{c}_b\|_2\}$ is a hyperplane, so the classifier is a collection of linear boundaries. As before, the transformer input is a context of $n$ feature--label pairs $(x_i,\mathrm{onehot}(y_i))$ and a query point $x_{\textrm{test}}$; the model is trained to predict the query label.

\paragraph{Transformer performance and baselines.}
Figure~\ref{fig:voronoi_transformer} (Left) reports accuracy for $K=5$ classes as a function of context length, comparing an unconstrained transformer to a nearest-neighbor baseline.  The transformer consistently exceeds the nearest-neighbor baseline, confirming that it learns a nontrivial in-context strategy for the classification problem.

\paragraph{Agreement between unconstrained and symmetrized models.}
Figure~\ref{fig:voronoi_transformer} (Right) compares the predicted class probabilities produced by the unconstrained and symmetrized transformers on the same test instances. The two models closely agree ($R^2>0.95$), indicating that enforcing the symmetry constraints does not change the learned solution, but instead exposes it more cleanly for analysis.

\paragraph{Evidence for the same underlying algorithm.}
Finally, Figure~\ref{fig:voronoi_weights} visualizes the learned weights with and without symmetrization. The qualitative patterns mirror those observed in our linear classification experiments: the same cluster structure and the same functional roles of weight groups reappear. This supports the interpretation from the main text that the transformer implements essentially the same coupled mean-shift iteration, now operating on the geometry induced by nearest-centroid (Voronoi) labels.

\begin{figure}[h]
    \centering
    \includegraphics[width=0.35\linewidth]{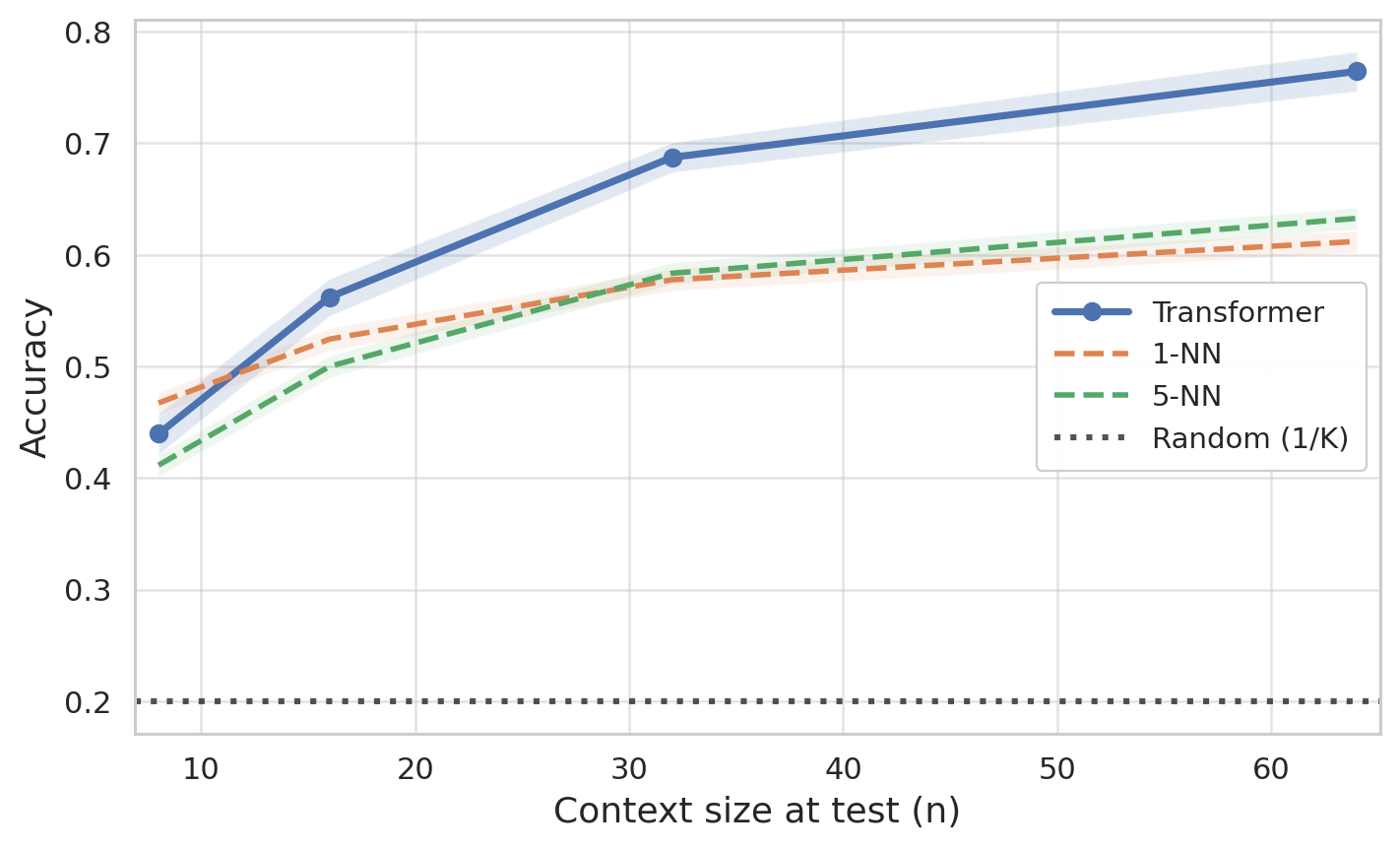}
    \includegraphics[width=0.25\linewidth]{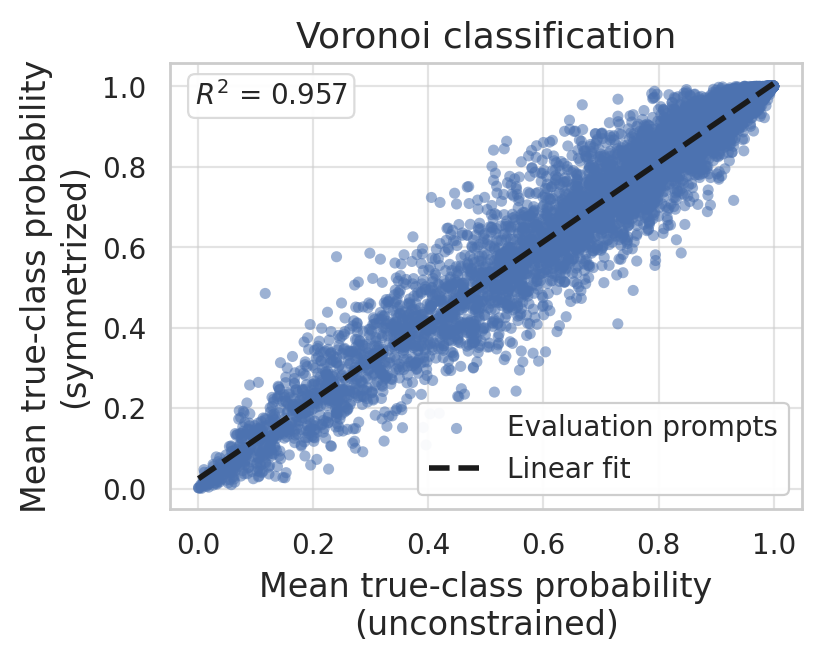}
    \caption{\textit{Left:} Accuracy on the Voronoi (nearest-centroid) classification task as a function of context size, comparing the transformer (mean over 3 runs, $L=5$, trained with $n=64$) to nearest-neighbor baselines. 
    \textit{Right:} Ground-truth-class probability for the query point: symmetrized transformer vs.\ unconstrained transformer on the same Voronoi task, mean across 5 training seeds are shown.
    }
    \label{fig:voronoi_transformer}
\end{figure}

\begin{figure}[h]
    \centering
    \includegraphics[width=0.9\linewidth]{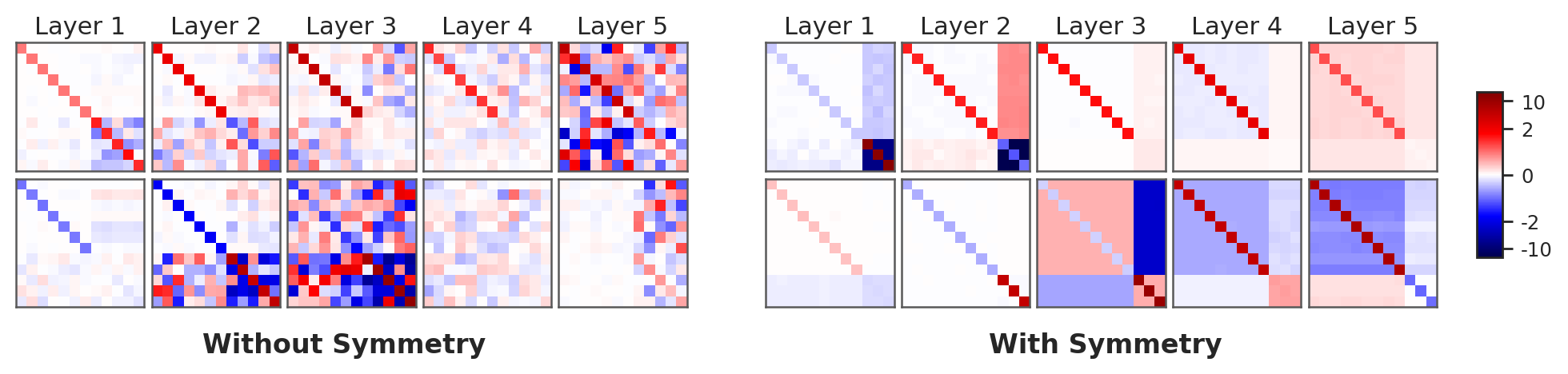}
    \caption{Learned weight matrices for the unconstrained transformer \emph{(left)} and the symmetry-preserving transformer \emph{(right)} trained on in-context Voronoi cell classification. The unconstrained model exhibits little visible structure, whereas enforcing the task symmetries produces a more regular pattern that is easier to interpret. The top and bottom rows show $\bm W_{QK,\ell}$ and $\bm W_{VP,\ell}$, respectively.}
    \label{fig:voronoi_weights}
\end{figure}

\subsection{Supplementary material on the semi-supervised classification setting}\label{app:semi-supervised}
In the main text, we showed that the transformer uses unlabeled context points to improve query prediction under a fixed label budget. Here we show that it does so via the same coupled mean-shift iteration identified in the fully supervised setting.

Figure~\ref{fig:ssl_equivalence} (Left) compares the probability assigned to the ground-truth query class by the unconstrained and symmetrized transformers on the exact same inputs after averaging across 5 training seeds. The two predictions match closely ($R^2=0.92$) across a wide range of problem instances, indicating that the models are functionally aligned; symmetrization does not change the underlying computation, it simply makes it easier to expose and analyze.

Figure~\ref{fig:ssl_weights} provides complementary evidence at the parameter level: the learned weights exhibit the same cluster structure as in our linear classification experiments. Together, these results support the interpretation that the transformer implements essentially the same coupled mean-shift iteration in the semi-supervised regime, now operating jointly on the feature geometry and the partially observed labels.

\paragraph{Semi-Supervised Baselines.}
To evaluate geometric utilization (Figure~\ref{fig:perf_vs_n}), we compare the Transformer against \textbf{Label Spreading} \cite{zhou2003learning}, a graph-based semi-supervised algorithm. We utilize the k-nearest neighbor (k-NN) kernel with $k=\max(5,\lfloor\sqrt n\rfloor)$. This kernel choice makes the baseline scale-invariant. The ``Supervised'' baseline models are a Logistic Regression and a linear SVM model trained only on the subset of labeled examples, ignoring the unlabeled tokens.

\paragraph{Ablation: The Role of Geometric Structure.}
To verify that the performance gains in Figure \ref{fig:perf_vs_n} stem strictly from geometric aggregation (the mean-shift mechanism) rather than architectural capacity, we conduct an ablation where the unlabeled context tokens are replaced with unstructured Gaussian noise, $\bm z_i \sim \mathcal{N}(0, I_d)$.

Figure \ref{fig:ssl_equivalence} (Right) shows the results. In this unstructured setting, the Transformer (Blue line) no longer surpasses the supervised ceiling. Instead, it merely converges to the performance of the Linear SVM (Green dashed line). Crucially, the "super-supervised" lift observed Figure \ref{fig:perf_vs_n} (where accuracy exceeded $85\%$ vs. the SVM's $75\%$) completely vanishes. This confirms that the model is actively reading the manifold structure of the unlabeled data; when that structure is removed, the mean-shift mechanism defaults to a standard supervised solution, proving the gains are geometry-driven.

\begin{figure}
    \centering
    \includegraphics[width=0.3\linewidth]{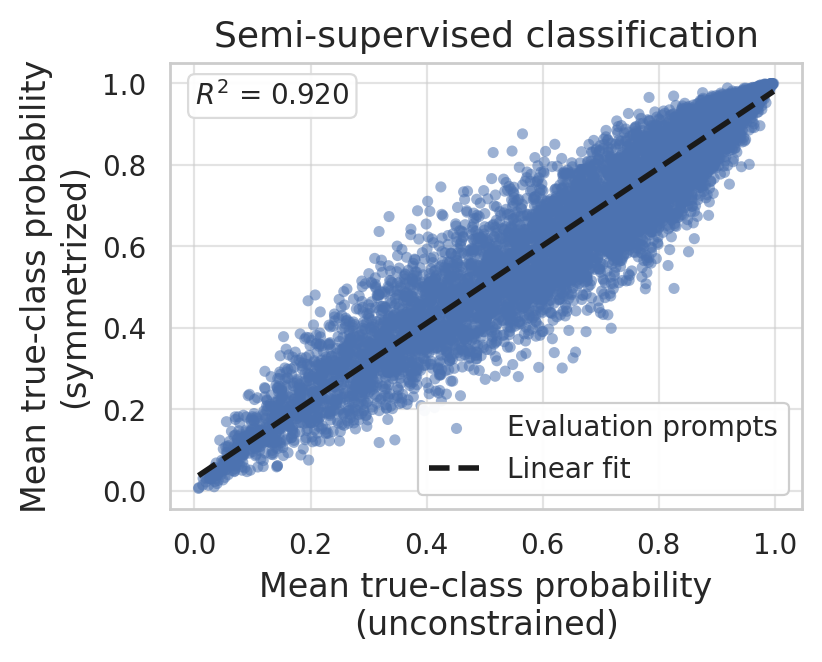}\hfil
    \includegraphics[width=0.4\linewidth]{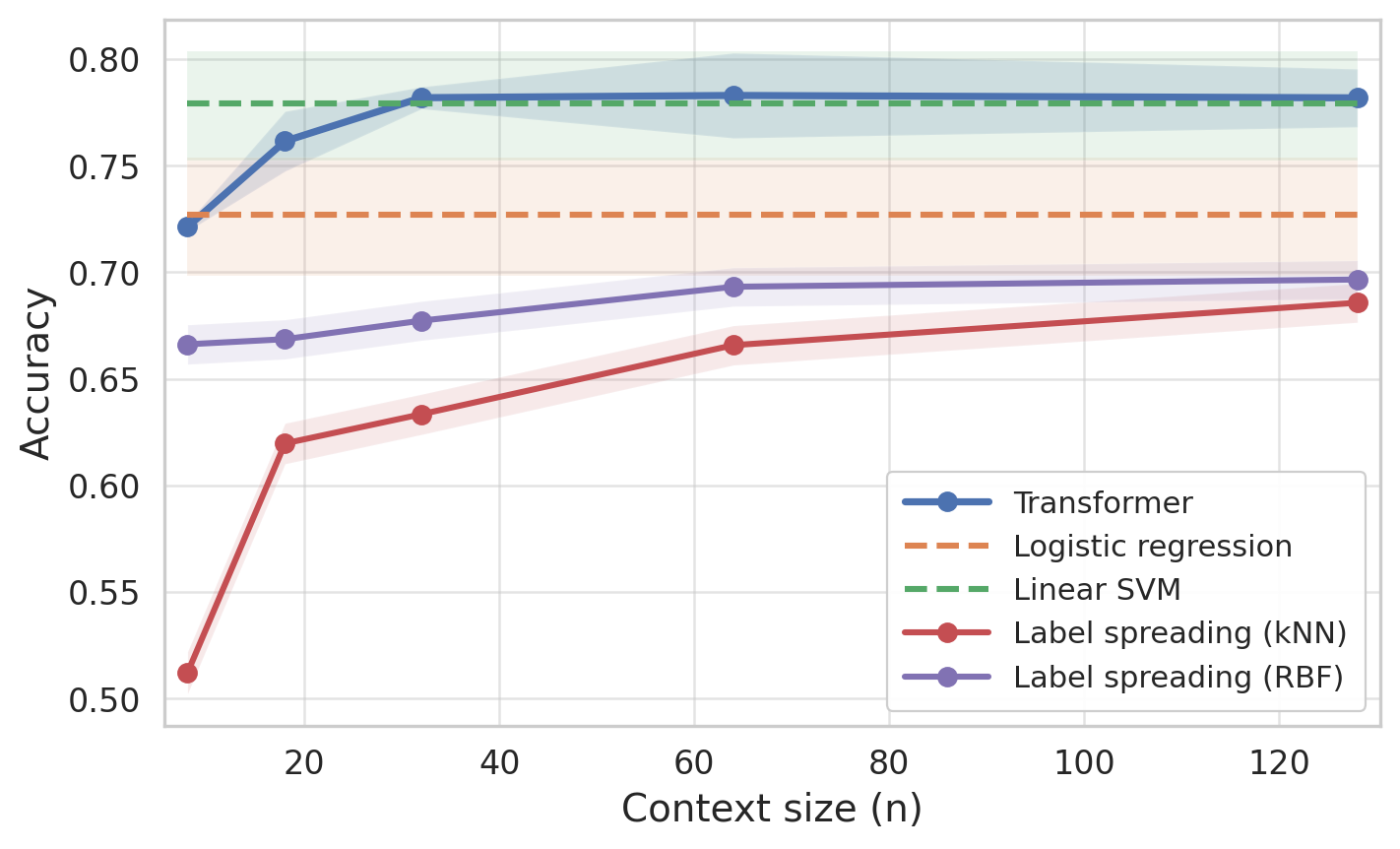}
    \caption{\textit{Left:} Ground-truth query-class probability predicted by the symmetrized vs.\ unconstrained transformer on the same semi-supervised classification task (mean over 5 training seeds). \textit{Right:} \textbf{Ablation with Unstructured Noise.} When unlabeled context points are replaced with random noise $\mathcal{N}(0,I_d)$, the Transformer (Blue) ceases to outperform the supervised Linear SVM baseline (Green dashed). Compare this to Figure \ref{fig:perf_vs_n}, where the Transformer leverages geometric structure to exceed the supervised baseline by $>10\%$. This confirms the model relies on the geometry of the unlabeled data to improve classification.}

    \label{fig:ssl_equivalence}
\end{figure}

\begin{figure}
    \centering
    \includegraphics[width=0.9\linewidth]{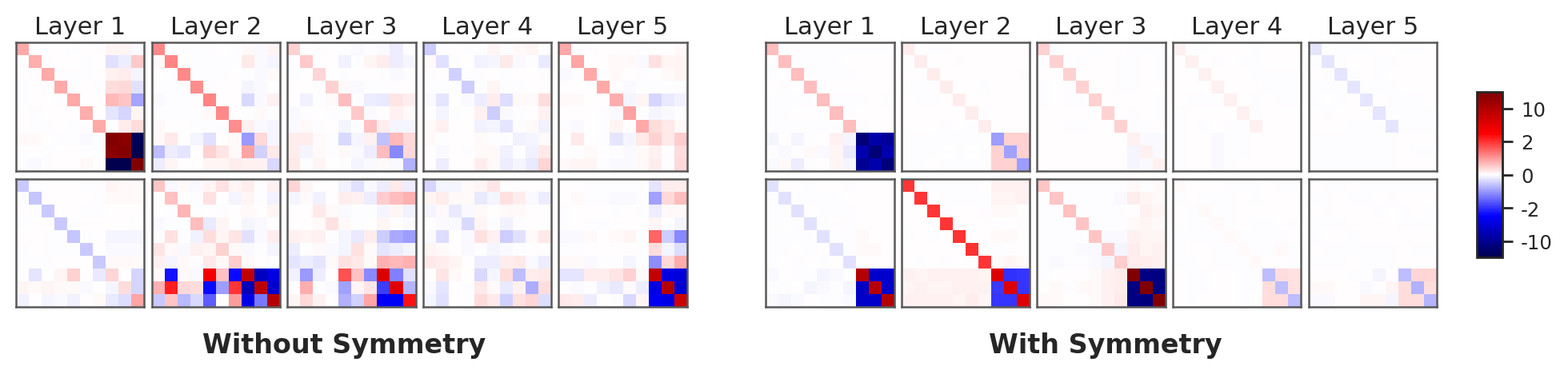}
    \caption{Learned weight matrices for the unconstrained transformer (left) and the symmetry-preserving transformer (right) trained on semi-supervised
in-context linear classification. The unconstrained model exhibits little visible structure, whereas enforcing the task symmetries
produces a more regular pattern that is easier to interpret. The top and bottom rows show $W_{QK,\ell}$ and $W_{VP,\ell}$, respectively.}
    \label{fig:ssl_weights}
\end{figure}

\subsection{Supplementary material on baseline implementations}
\label{app:baselines}

We evaluate different baselines depending on the task setting. In this section, we give detailed information on the baseline methods we used in this work. We report accuracy, comparing the predicted query label to the true query label, and report
a 95\% Wilson confidence interval.

\subsubsection{Fully supervised baselines.}
In the fully supervised setting, we use inductive classifiers trained only on the
labeled context points. Unlabeled context points and the query point are not used for
training. The trained classifier is then evaluated on the query feature vector.

\paragraph{Logistic regression.}
The first fully supervised baseline is logistic regression, implemented using
\texttt{sklearn.linear\_model.LogisticRegression}. The model is fit separately within
each episode on the labeled context examples, using
\[
\texttt{solver}=\texttt{lbfgs}, \qquad
\texttt{max\_iter}=400 .
\]
For each evaluated context size, the inverse regularization strength \(C\) is selected
on a validation set from the grid
\[
C \in \{10^{-3}, 10^{-2}, 10^{-1}, 1, 10, 10^2, 10^3\}.
\]
The predicted class is obtained directly from the fitted classifier. If the labeled
context contains only one class, no model is fit; the baseline predicts that class.

\paragraph{Linear SVM.}
The second fully supervised baseline is a linear support vector machine, implemented
using \texttt{sklearn.svm.LinearSVC}. The model is fit separately within each episode
on the labeled context examples, using
\[
\texttt{max\_iter}=8000 .
\]
As for logistic regression, the regularization parameter \(C\) is selected separately
for each evaluated context size on a validation set from the grid
\[
C \in \{10^{-3}, 10^{-2}, 10^{-1}, 1, 10, 10^2, 10^3\}.
\]
The predicted class is obtained directly from the fitted SVM. If the labeled context
contains only one class, no model is fit; the baseline predicts that class.

\subsubsection{Semi-supervised baselines.}
In the semi-supervised setting, we use transductive label-spreading baselines. These
methods are fit on the full episode, including labeled context points, unlabeled
context points, and the query point. Labeled context points are assigned their observed
class labels, while unlabeled context points and the query are assigned label \(-1\).
Before fitting either semi-supervised method, features are standardized within each
episode using
\[
\texttt{StandardScaler(with\_mean=True, with\_std=True)} .
\]

\paragraph{kNN label spreading.}
The first semi-supervised baseline is \texttt{LabelSpreading} with a k-nearest-neighbor
graph. We use
\[
\texttt{kernel}=\texttt{knn}, \qquad
\alpha = 0.2, \qquad
\texttt{max\_iter}=2000, \qquad
\texttt{tol}=10^{-4}.
\]
The number of neighbors is selected adaptively per episode as
\[
k =
\min\left(
\operatorname{clip}\left(\lceil \sqrt{N} \rceil, 5, 30\right),
\max(2, N-1)
\right).
\]
The query prediction is obtained from the fitted label-spreading model's predicted
label distribution at the query point. If the labeled context contains only one class,
the method predicts that class with probability one.

\paragraph{RBF label spreading.}
The second semi-supervised baseline is \texttt{LabelSpreading} with an RBF graph. We
use
\[
\texttt{kernel}=\texttt{rbf}, \qquad
\gamma = 1.0, \qquad
\alpha = 0.3, \qquad
\texttt{max\_iter}=3000, \qquad
\texttt{tol}=10^{-4}.
\]
The model is fit on the standardized full episode with the query left unlabeled. The
predicted label distribution at the query point is used for evaluation. If the labeled
context contains only one class, the method predicts that class with probability one.

\subsubsection{Voronoi baselines.}
For the Voronoi classification task, we use nearest-neighbor baselines rather than the
linear and label-spreading baselines above. The baselines do not use the underlying
Voronoi sites.

We evaluate \(1\)-NN and \(5\)-NN. For each query feature vector \(x_q\), squared
Euclidean distances to all context feature vectors are computed:
\[
d_i^2 = \|x_i - x_q\|_2^2 .
\]
The \(k\) nearest context points are selected, with \(k \in \{1,5\}\). If \(k\) is
larger than the number of context points, it is clipped to the context size. The
predicted class is obtained by majority vote over the selected neighbors:
\[
\hat{y}
=
\arg\max_{c \in \{1,\dots,K\}}
\sum_{i \in \mathcal{N}_k(x_q)}
\mathbf{1}\{y_i = c\}.
\]
Ties are resolved deterministically by the underlying \texttt{argmax}, which selects
the lowest-index class.

\section{Analysis of the Mean-Shift Emergent Classifier}\label{sec:mean_shift_appendix}
We have identified a \emph{family} of attention-driven update rules 
that take the form:
\begin{align*}
    \bm {X}_{\ell+1} &= \bm{X}_\ell + \alpha'_\ell \bm{A}_{\ell} \bm{\tilde X}_\ell,\\
    \bm{Y}_{\ell+1} &= \bm{Y}_\ell + \gamma' \bm{A}_{\ell} \bm{\tilde Y}_\ell, %
\end{align*}
where $\bm X_0, \bm Y_0$ are defined as in (\ref{eq:algorithm_initial_condition}) and $\bm{\tilde X}_\ell\in\mathbb R^{n\times d}$ and $\bm{\tilde Y}_\ell\in\mathbb R^{n\times K}$ are the first $n$ rows of $\bm X_\ell$ and $\bm Y_\ell$, respectively. The matrix $\bm A_t \in \mathbb{R}^{(n+1)\times n}$ is the row-stochastic softmax attention matrix:
\begin{align*}
    \bm A_\ell &= \sigma(\alpha_\ell \bm X_\ell \bm {\tilde X_\ell}^\top + \gamma_\ell \bm Y_\ell \bm{\tilde Y}_\ell^\top).
\end{align*}
The transformer instantiates one member of this family during training by selecting the scalars $\{\alpha_\ell,\gamma_\ell,\alpha'_\ell,\gamma'_\ell\}_{\ell\in[L]}$. In the following, we are going to show that this family of updates is a novel approach to classification with surprising behavior and versatility.

Our analysis exposes two key properties of the emergent mean-shift classifier: (1) it reinforces intra-class separation in the training data (Theorem \ref{thm:directional_margin_growth}), and (2) it drives the test point towards the correct cluster, assuming it is initialized with some proximity towards that cluster (Theorem \ref{thm:test-margin-growth})

\subsection{Class and Cone Invariances}
For a point $\bm x_i$ let us define as $c_i \in [K]$ the class in which it is originally placed, i.e. $\bm y_i = \bm e_{c_i}$. For each class $c \in [K]$, we define its population as $S_c := \{i : \bm y_i = \bm e_c\}$.

First, we focus on the evolution of the labels $\bm Y_t$. We show that in the regime $\gamma \gg \alpha$ the labels of the training points all remain pointing to their original class.
\begin{lemma}
\label{lemma:label-evolution-block-diagonal}
Let $\kappa = \gamma/\alpha$. For all $t\geq 0$, the attention weight on any incorrect class index $k \notin S_{c(j)}$ decays exponentially with $\kappa$:
\begin{align*}
    A_{jk}^{(t)} \leq e^{-\alpha \kappa \eta} \cdot \frac{e^{\alpha \langle x_j^{(t)}, x_k^{(t)} \rangle}}{Z_{j, \text{in}}^{(t)}},
\end{align*}
where $Z_{j, \text{in}}$ is the partition function restricted to the correct class. Consequently, as $\kappa \to \infty$, the `leakage' of attention weights across class boundaries vanishes and the dynamics recover the exact label evolution $y_j =(1+\gamma')^t \cdot \bm{e_{c(j)}}$.
\end{lemma}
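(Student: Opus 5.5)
We would prove the bound by induction on $t$, keeping throughout a uniform gap between same-class and cross-class label similarities. Write $s_{jk}^{(t)} := \langle y_j^{(t)}, y_k^{(t)}\rangle$ for the (centered) label inner products entering the attention logits, so that
\[
A^{(t)}_{jk} = \frac{\exp\bigl(\alpha\langle x_j^{(t)},x_k^{(t)}\rangle + \gamma s^{(t)}_{jk}\bigr)}{Z_j^{(t)}} .
\]
Here $Z_j^{(t)}$ is the sum of these exponentials over all training indices. We take $\eta$ to be a lower bound on the label gap
\[
\min_{i\in S_{c(j)}} s^{(t)}_{ji} \;-\; \max_{k\notin S_{c(j)}} s^{(t)}_{jk},
\]
and we must show this gap stays at least $\eta$ for all $t$. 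At $t=0$ the centered one-hot labels give $s_{ji}=1-1/K$ within a class and $s_{jk}=-1/K$ across classes, so the gap equals $1$.

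\emph{Step 1 (the bound, given the gap).} Fix $j$ and write $s_{\mathrm{in}} := \min_{i\in S_{c(j)}} s^{(t)}_{ji}$. Dropping the cross-class terms from $Z_j^{(t)}$ gives
\[
Z_j^{(t)} \ge \sum_{i\in S_{c(j)}} e^{\alpha\langle x_j,x_i\rangle + \gamma s_{ji}} \ge e^{\gamma s_{\mathrm{in}}}\, Z_{j,\mathrm{in}}^{(t)},
\qquad Z_{j,\mathrm{in}}^{(t)}:=\sum_{i\in S_{c(j)}} e^{\alpha\langle x_j^{(t)},x_i^{(t)}\rangle}.
\]
For $k\notin S_{c(j)}$ the numerator is at most $e^{\alpha\langle x_j,x_k\rangle+\gamma s_{jk}}$. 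Taking the ratio gives $A_{jk}^{(t)} \le e^{-\gamma(s_{\mathrm{in}}-s_{jk})}\, e^{\alpha\langle x_j,x_k\rangle}/Z^{(t)}_{j,\mathrm{in}}$. Since $\gamma=\alpha\kappa$ and the gap is at least $\eta$, this is exactly the claimed inequality.

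\emph{Step 2 (propagating the gap).} Decompose the label update as
\[
y_j^{(t+1)} = y_j^{(t)} + \gamma'\sum_{k\in S_{c(j)}} A_{jk}y_k^c + \gamma'\sum_{k\notin S_{c(j)}}A_{jk}y_k^c .
\]
The inductive hypothesis is $y_j^{(t)} = a_t e_{c(j)} + r_j^{(t)}$, with $a_t=(1+\gamma')^t$ (after centering, $a_t$ times the centered basis vector) and a small remainder $r_j^{(t)}$. In the exact block-diagonal case the in-class attention weights sum to one. All in-class labels are then equal, so the first sum is $a_t e^c_{c(j)}$ and we get $a_{t+1}=(1+\gamma')a_t$ with $r\equiv 0$.

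In general, the in-class weights sum to $1-\varepsilon_j^{(t)}$, where $\varepsilon_j^{(t)}$ is the total leakage. Step~1 gives
\[
\varepsilon_j^{(t)} \le n\,e^{-\alpha\kappa\eta}\,\max_k e^{\alpha\langle x_j,x_k\rangle}/Z_{j,\mathrm{in}} .
\]
This yields the recursion $\|r^{(t+1)}\| \le (1+\gamma')\|r^{(t)}\| + C\,\gamma' a_t\,\varepsilon^{(t)}$. The label gap is then at least $a_t^2 - O(a_t\|r^{(t)}\|)$, which stays $\ge 1 \ge \eta$ once $\|r^{(t)}\|$ is a small fraction of $a_t$. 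That in turn holds when $\kappa$ is large enough relative to the horizon $t$, closing the induction.

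\emph{Step 3 (the limit).} The main obstacle is the factor $e^{\alpha\langle x_j,x_k\rangle}/Z_{j,\mathrm{in}}$, which depends on the evolving features and is not controlled by labels alone. For the ``consequently'' part we only need it bounded on a fixed finite horizon. Because each attention row is a probability vector, the feature update is a convex-combination step, so $\max_i\|x_i^{(t)}\| \le (1+\alpha')^t\max_i\|x_i^{(0)}\|$. Hence the ratio is at most $\exp\bigl(2\alpha(1+\alpha')^{2t}\max_i\|x_i^{(0)}\|^2\bigr)$, a constant independent of $\kappa$. Letting $\kappa\to\infty$ with $t$ fixed sends $\varepsilon^{(t)}\to 0$. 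The remainder recursion then gives $r^{(t)}\to 0$, recovering the exact evolution $y_j^{(t)}=(1+\gamma')^t e_{c(j)}$ (in the centered representation). Uniformity in $t$ is not claimed, and we would state the limit per fixed $t$.
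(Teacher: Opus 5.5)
Your proposal is correct, but it proves the ``consequently'' part by a more quantitative route than the paper.

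\textbf{Step 1 versus the paper's Case 1.} Your Step~1 is the paper's Case~1. The paper posits $\langle y_j,y_k\rangle=\eta$ within a class and $0$ across, factors $e^{\alpha\kappa\eta}$ out of $Z_j$, and drops the cross-class terms from the denominator. You do the same under the weaker hypothesis that the within/cross gap is at least $\eta$. That is the right hypothesis, because at finite $\kappa$ leakage makes cross-class label inner products nonzero.

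\textbf{Where the routes diverge.} The paper simply passes to $\kappa\to\infty$ and replaces the training attention by its block-diagonal limit. It then runs an exact one-line induction: same-class labels are equal and in-class weights sum to one, so $y_j^{(t+1)}=(1+\gamma')y_j^{(t)}$. The paper never checks that the assumed label structure persists over time. It never bounds the feature factor $e^{\alpha\langle x_j,x_k\rangle}/Z_{j,\mathrm{in}}$, so vanishing leakage is asserted rather than shown. It also tacitly interchanges the limit with the $t$-step recursion.

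\textbf{What your route supplies.} You close the remainder recursion jointly with the gap by induction. You combine this with the a priori bound $\max_i\|x_i^{(t)}\|\le(1+\alpha')^t\max_i\|x_i^{(0)}\|$. This is the triangle-inequality argument of Lemma~\ref{lem:Rt-Lt-evolution}(a), but valid for any row-stochastic attention. Together these supply exactly the missing pieces and make the order of limits explicit: fix $t$, then let $\kappa\to\infty$.

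\textbf{The cost.} Your first inequality holds only up to a $\kappa$-dependent horizon. The paper's argument does not deliver uniformity in $t$ either, since its margin hypothesis is only justified at $t=0$ and in the limit.

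\textbf{Two small points.}
\begin{enumerate}
\item Like the paper, you implicitly hold $\alpha$ fixed while $\gamma=\alpha\kappa\to\infty$, and you should state this. Since $e^{-\alpha\kappa\eta}=e^{-\gamma\eta}$, nothing decays if $\kappa\to\infty$ through $\alpha\to0$.
\item The appendix dynamics to which the lemma belongs use uncentered labels in both attention and update. There the initial inner products are $1$ and $0$, so the gap is again $1$. Your argument therefore applies verbatim and yields literally $(1+\gamma')^t e_{c(j)}$ rather than its centered projection.
\end{enumerate}
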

\begin{proof}
We first analyze the structure of the attention matrix $A^{(t)}$ in terms of the ratio $\kappa = \gamma/\alpha$.

Let $S_{c(j)} = \{k \in [n] : c(k) = c(j)\}$ be the set of indices in the same class as $j$. We assume a label separation margin $\langle y_j, y_k \rangle = \eta > 0$ for $k \in S_{c(j)}$ and $0$ otherwise. Substituting $\gamma = \kappa \alpha$, the partition function is:
$$
\begin{aligned}
    Z_j &= e^{\alpha \kappa \eta} \sum_{s \in S_{c(j)}} e^{\alpha\langle x_j, x_s \rangle} + \sum_{s \notin S_{c(j)}} e^{\alpha\langle x_j, x_s \rangle}.
\end{aligned}
$$
We analyze the probability mass $A_{jk}$ in two cases:

\textbf{Case 1: Different Class ($k \notin S_{c(j)}$).} The attention weight is strictly bounded by the label margin $\kappa \eta$. Dividing the numerator and denominator by $e^{\alpha \kappa \eta}$:
$$
\begin{aligned}
    A_{jk} &= \frac{e^{-\alpha \kappa \eta} e^{\alpha\langle x_j, x_k \rangle}}{\sum_{s \in S_{c(j)}} e^{\alpha\langle x_j, x_s \rangle} + e^{-\alpha \kappa \eta} \sum_{s \notin S_{c(j)}} e^{\alpha\langle x_j, x_s \rangle}} \leq e^{-\alpha \kappa \eta} \cdot \frac{e^{\alpha\langle x_j, x_k \rangle}}{\sum_{s \in S_{c(j)}} e^{\alpha\langle x_j, x_s \rangle}}.
\end{aligned}
$$
Thus, the leakage to incorrect classes decays exponentially with $\kappa$.

\textbf{Case 2: Same Class ($k \in S_{c(j)}$).} Factorizing $e^{\alpha \kappa \eta}$ from the numerator and denominator yields the exact expression:
$$
\begin{aligned}
    A_{jk} &= \frac{e^{\alpha\langle x_j, x_k \rangle}}{\sum_{s \in S_{c(j)}} e^{\alpha\langle x_j, x_s \rangle} + e^{-\alpha \kappa \eta} \sum_{s \notin S_{c(j)}} e^{\alpha\langle x_j, x_s \rangle}}.
\end{aligned}
$$
Taking the limit $\kappa \to \infty$ eliminates the cross-class term in the denominator, recovering the softmax over the class cluster:
$$
\begin{aligned}
    \lim_{\kappa \to \infty} A_{jk} = \frac{e^{\alpha\langle x_j, x_k \rangle}}{\sum_{s \in S_{c(j)}} e^{\alpha\langle x_j, x_s \rangle}}.
\end{aligned}
$$
If we further assumed the feature scale $\alpha \to 0$ (relative to the label signal), the exponential terms approach unity, recovering a uniform distribution over the class-cluster:
$$
\begin{aligned}
    \lim_{\alpha \to 0} \left( \lim_{\kappa \to \infty} A_{jk} \right) = \frac{1}{|S_{c(j)}|}.
\end{aligned}
$$
Of course, in this limit, the test point observes no nontrivial evolution. Henceforth, we will exploit the block-diagonal structure of the training attention with (implicitly) nonzero $\alpha$. 

\textbf{Label Evolution.} We now proceed by induction on the labels using the asymptotic limit derived above ($A_{jk} = \frac{1}{|S_{c(j)}|}\mathbb{I}[c(k)=c(j)]$).
For $t=0$, the statement holds by definition. Assuming the statement is true for some $t > 0$, the labels evolve as:
\begin{align*}
y_j^{(t+1)}&=y_j^{(t)}+\gamma'\sum\limits_{k=1}^n A_{jk}^{(t)}y_k^{(t)}\\ 
&=y_j^{(t)} + \gamma' \sum_{k : c(k) = c(j)} A_{jk}^{(t)}y_k^{(t)}.
\end{align*}
By the inductive hypothesis, for all $k \in S_{c(j)}$, $y_k^{(t)} = y_j^{(t)}$. Since $A_{jk}^{(t)}$ is a probability distribution concentrated on $S_{c(j)}$, the sum is equal to $y_j^{(t)}$. Thus, 
\begin{align*}
y_j^{(t+1)} &= y_j^{(t)} + \gamma' y_j^{(t)} = (1+\gamma') y_j^{(t)}.
\end{align*}
This completes the proof.
\end{proof}

Next, we show that class membership is preserved geometrically as well. We define the polyhedral cone $C_c$ capturing all the points classified as class $c$ with positive margin against all other classes.
\[
\mathcal{C}_c
:=
\bigl\{
\bm{x} \in \mathbb{R}^d :
\langle \bm{w}_c - \bm{w}_{c'}, \bm{x} \rangle > 0
\;\; \forall c' \neq c
\bigr\}.
\]
We show that in the limit case $\gamma\gg\alpha$ each point $x_i^{(t)}$ throughout time remains in its original cone. We show this by induction on the timestep $t$.
\begin{lemma}[Invariant Cones]
Assume that at time $t$,
\[
\bm{x}_i^{(t)} \in \mathcal{C}_{y_i}
\quad \forall i.
\]
In the regime $\kappa\to\infty$, the same condition holds at time $t+1$.
\end{lemma}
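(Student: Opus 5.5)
The argument is an induction on $t$ that uses the block-diagonal limit of Lemma~\ref{lemma:label-evolution-block-diagonal}. Fix $\alpha>0$ and let $\kappa\to\infty$. By that lemma, for every training index $j$ the attention row $A^{(t)}_{j\cdot}$ is supported only on $S_{c(j)}$ and is a probability vector there. Its entries are $A^{(t)}_{jk}=e^{\alpha\langle x_j^{(t)},x_k^{(t)}\rangle}/\sum_{s\in S_{c(j)}}e^{\alpha\langle x_j^{(t)},x_s^{(t)}\rangle}$, which are strictly positive. The feature update for a training point $i$ with class $c=c_i$ therefore reads
\[
\bm x_i^{(t+1)}=\bm x_i^{(t)}+\alpha'\,\bar{\bm x}_i^{(t)},\qquad \bar{\bm x}_i^{(t)}:=\sum_{k\in S_c}A^{(t)}_{ik}\,\bm x_k^{(t)} .
\]
This is the only structural input we need. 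The cross-class terms have vanished, so the update of $\bm x_i$ involves only points of its own class.

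The key step is that $\mathcal C_c$ is a convex cone. It is an intersection of finitely many open half-spaces through the origin, $\{\bm x:\langle \bm w_c-\bm w_{c'},\bm x\rangle>0\}$, so it is closed under convex combinations and under positive scaling, and hence also under sums. By the induction hypothesis every $\bm x_k^{(t)}$ with $k\in S_c$ lies in $\mathcal C_c$. Here $S_c$ is defined through the original labels $y_k=\bm e_c$, and the hypothesis places $\bm x_k^{(t)}$ in $\mathcal C_{y_k}=\mathcal C_c$. Since the weights $A^{(t)}_{ik}$, $k\in S_c$, are nonnegative, sum to one and are not all zero, the barycenter $\bar{\bm x}_i^{(t)}$ lies in $\mathcal C_c$. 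One can also check this directly: for each $c'\ne c$,
\[
\langle \bm w_c-\bm w_{c'},\bm x_i^{(t+1)}\rangle=\langle \bm w_c-\bm w_{c'},\bm x_i^{(t)}\rangle+\alpha'\sum_{k\in S_c}A^{(t)}_{ik}\langle \bm w_c-\bm w_{c'},\bm x_k^{(t)}\rangle .
\]
The first term is $>0$ and the sum is $>0$, and because $\alpha'>0$ the whole expression is $>0$. So $\bm x_i^{(t+1)}\in\mathcal C_c$, which closes the induction. The base case $t=0$ holds when the training points are in general position, since then $c_i=\arg\max_k\langle\bm w_k,\bm x_i\rangle$ is attained with strict inequality. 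The argument works for any schedule with $\alpha'_t>0$.

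The main point needing care is justifying the exact block-diagonal form: at finite $\kappa$ there is leakage of order $e^{-\alpha\kappa\eta}$ onto other classes. I would handle this by stating the result in the limit, as the lemma does, and by a limiting argument. The constraints $\langle \bm w_c-\bm w_{c'},\cdot\rangle>0$ are strict and there are finitely many points. At any fixed $t$, each $\bm x_i^{(t+1)}$ therefore has a positive slack $\min_{c'}\langle \bm w_c-\bm w_{c'},\bm x_i^{(t+1)}\rangle$ in the limiting dynamics. The finite-$\kappa$ update differs from the limiting one by at most $\alpha'\,e^{-\alpha\kappa\eta}\,C\max_k\|\bm x_k^{(t)}\|$, where $C$ bounds the ratio in Lemma~\ref{lemma:label-evolution-block-diagonal}. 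For $\kappa$ large enough this perturbation is smaller than the slack, so cone membership also holds for all sufficiently large finite $\kappa$ over any finite horizon.
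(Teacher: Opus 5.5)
Your proof is correct and is essentially the paper's argument: invoke the block-diagonal limit from Lemma~\ref{lemma:label-evolution-block-diagonal}, write $\bm{x}_i^{(t+1)}$ as $\bm{x}_i^{(t)}$ plus $\alpha'$ times a convex combination of same-class points, and pair with $\bm{w}_c-\bm{w}_{c'}$ to get strict positivity. Your convex-cone phrasing is a repackaging of that computation, and the base-case remark and finite-$\kappa$ perturbation sketch go beyond what the paper claims and are not needed for the lemma as stated; note that the sketch compares one step from a common state, so a finite-horizon claim would have to be run as an induction along the finite-$\kappa$ trajectory, tracking that trajectory's own slack and label margin.
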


\begin{proof}
By \Cref{lemma:label-evolution-block-diagonal}, in the high-$\gamma$ regime, the attention matrix becomes block-diagonal with
respect to the class partition:
\[
\mathbf{A}_{\text{train}}^{(t)} =
\mathrm{diag}\bigl(
\mathbf{A}_{(1)}^{(t)}, \dots, \mathbf{A}_{(K)}^{(t)}
\bigr),
\]
where each block $\mathbf{A}_{(c)}^{(t)}$ is row-stochastic and has nonnegative
entries.

Fix a class $c$ and a point $i \in S_c$.
The feature update satisfies
\[
\bm{x}_i^{(t+1)}
=
\bm{x}_i^{(t)}
+ \alpha'
\sum_{j \in S_c}
A_{ij}^{(t)} \bm{x}_j^{(t)}.
\]
Let $c' \neq c$.
Taking inner products with $\bm{w}_c - \bm{w}_{c'}$ gives
\begin{align*}
\langle \bm{w}_c - \bm{w}_{c'}, \bm{x}_i^{(t+1)} \rangle
&=
\langle \bm{w}_c - \bm{w}_{c'}, \bm{x}_i^{(t)} \rangle \\
&+
\alpha'
\sum_{j \in S_c}
A_{ij}^{(t)}
\langle \bm{w}_c - \bm{w}_{c'}, \bm{x}_j^{(t)} \rangle.
\end{align*}
All terms on the right-hand side are strictly positive by the induction
hypothesis and nonnegativity of the weights.
Thus $\bm{x}_i^{(t+1)} \in \mathcal{C}_c$.
\end{proof}

\subsection{Monotonic Global Directional Margin}
Consider the \textbf{centroid} of a class $c \in [K]$ defined as
\[
\bm{\mu}_c := \frac{1}{|S_c|} \sum_{i \in S_c} \bm{x}_i.
\]
To quantify separation in the multiclass setting, we define a global
directional margin. Intuitively, it is not enough to show that $||\bm{\mu}_c^{(t)}-\bm{\mu}_{c'}^{(t)}||_2$ grows to establish class separation because then the separation could be on a direction orthogonal to the boundary $\bm{w}_c-\bm{w}_{c'}$. Hence we define
\[
\mathcal{M}_t
:=
\sum_{c=1}^K
\sum_{\substack{c' = 1 \\ c' \neq c}}^K
\langle \bm{w}_c - \bm{w}_{c'}, \bm{\mu}_c^{(t)} - \bm{\mu}_{c'}^{(t)} \rangle.
\]

We show that this margin increases in $t$:

\begin{theorem}
\label{thm:directional_margin_growth}
In the limit $\kappa\to\infty$, the global directional margin
$\mathcal{M}_t$ is strictly increasing in $t$.
\end{theorem}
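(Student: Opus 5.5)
We work in the block-diagonal regime supplied by Lemma~\ref{lemma:label-evolution-block-diagonal}. For $\kappa\to\infty$ the training attention satisfies $A^{(t)}_{ij}=0$ whenever $c_j\neq c_i$, and each class block $\mathbf A^{(t)}_{(c)}$ is row-stochastic with nonnegative entries. We also use the Invariant Cones lemma, which gives $\bm x_i^{(t)}\in\mathcal C_{c_i}$ for all $t$ and $i$. The plan is to write down a one-step recursion for the class centroids and read off the sign of $\mathcal M_{t+1}-\mathcal M_t$ term by term.

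\textbf{Centroid recursion.} Fix a class $c$ and average the feature update over $i\in S_c$:
\[
\bm\mu_c^{(t+1)}=\bm\mu_c^{(t)}+\alpha'\,\bm\nu_c^{(t)},\qquad \bm\nu_c^{(t)}:=\frac{1}{|S_c|}\sum_{i\in S_c}\sum_{j\in S_c}A^{(t)}_{ij}\bm x_j^{(t)}.
\]
Define the column weights $\pi_j:=\frac{1}{|S_c|}\sum_{i\in S_c}A^{(t)}_{ij}$. Since each block is row-stochastic, these are nonnegative and sum to one over $S_c$. Hence $\bm\nu_c^{(t)}$ is a convex combination of the points $\{\bm x_j^{(t)}\}_{j\in S_c}$. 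Because the cone $\mathcal C_c$ is convex and contains every $\bm x_j^{(t)}$ with $j\in S_c$, it also contains $\bm\nu_c^{(t)}$.

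\textbf{Increment of the margin.} Substituting the recursion into $\mathcal M_t$ gives
\[
\mathcal M_{t+1}-\mathcal M_t=\alpha'\sum_{c\neq c'}\langle \bm w_c-\bm w_{c'},\,\bm\nu_c^{(t)}-\bm\nu_{c'}^{(t)}\rangle.
\]
Split each summand into $\langle \bm w_c-\bm w_{c'},\bm\nu_c^{(t)}\rangle$ and $\langle \bm w_{c'}-\bm w_c,\bm\nu_{c'}^{(t)}\rangle$.
\begin{itemize}
\item The first term is strictly positive because $\bm\nu_c^{(t)}\in\mathcal C_c$.
\item The second term is strictly positive because $\bm\nu_{c'}^{(t)}\in\mathcal C_{c'}$, and the definition of $\mathcal C_{c'}$ applied with competitor $c$ gives $\langle \bm w_{c'}-\bm w_c,\bm x\rangle>0$ for $\bm x\in\mathcal C_{c'}$.
\end{itemize}
With $\alpha'>0$, nonempty classes and $K\ge2$, the increment is therefore strictly positive, so $\mathcal M_t$ is strictly increasing.

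\textbf{Main obstacle.} The delicate step is justifying that the exact block-diagonal structure holds, so that $\bm\nu_c^{(t)}$ involves only points of class $c$. This relies on the $\kappa\to\infty$ limit of Lemma~\ref{lemma:label-evolution-block-diagonal}, applied at every step $t$ through the induction underlying the Invariant Cones lemma. The other points to check are:
\begin{itemize}
\item The initial condition $\bm x_i^{(0)}\in\mathcal C_{c_i}$. This holds almost surely under the argmax labeling, since ties have probability zero.
\item That every class is nonempty; empty classes should be excluded from the sum.
\end{itemize}
For finite $\kappa$, the leakage bound in Lemma~\ref{lemma:label-evolution-block-diagonal} suggests a perturbative version of the result, but we do not pursue it here.
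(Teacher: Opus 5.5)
Your proposal is correct and follows essentially the same route as the paper. Both arguments use the centroid recursion with column-averaged weights $\pi_j$ forming a distribution over $S_c$, and then invoke the invariant-cone lemma to sign each term. Your explicit treatment of the $\bm\mu_{c'}$ half of each summand, via $\mathcal C_{c'}$ with competitor $c$, spells out a step the paper compresses into ``summing over all $c'$ and then over all classes $c$.'' That step implicitly relies on $\mathcal M_t = 2\sum_c\sum_{c'\neq c}\langle \bm w_c-\bm w_{c'},\bm\mu_c^{(t)}\rangle$.
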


\begin{proof}
We analyze the evolution of a single centroid. Since $\kappa\to\infty$, recall by \Cref{lemma:label-evolution-block-diagonal} that in every timestep the class of each token remains the same. Fix a class $c$. Using the block-diagonal structure of $\mathbf{A}^{(t)}_{\text{train}}$, the centroid update is
\begin{align*}
\bm{\mu}_c^{(t+1)}
&=\frac{1}{|S_c|}\sum\limits_{i\in S_c}\bm{x}_i^{(t+1)}\\
&=\frac{1}{|S_c|}\sum\limits_{i \in S_c}\left(\bm{x}_i^{(t)}+ \alpha'\sum_{j \in S_c}A_{ij}^{(t)} \bm{x}_j^{(t)}\right) \tag{$x_i^{(t+1)}=\sum\limits_{j\in S_c}A_{ij}^{(t)}x_j^{(t)}$ when $\gamma \gg \alpha$}\\
&=\bm{\mu}_c^{(t)}+\alpha'\sum\limits_{j\in S_c}\left(\frac{1}{|S_c|}\sum\limits_{i\in S_c}A_{ij}^{(t)} \right)\bm{x}_j^{(t)}\\
&=\bm{\mu}_c^{(t)}+\alpha'\sum_{j \in S_c}\pi_j^{(t)} \bm{x}_j^{(t)},
\end{align*}
where
\[
\pi_j^{(t)}
:=
\frac{1}{|S_c|}
\sum_{i \in S_c}
A_{ij}^{(t)}
\]
defines a probability distribution over $S_c$.

Fix $c' \neq c$.
Taking inner products with $\bm{w}_c - \bm{w}_{c'}$ yields
\begin{align*}
\langle \bm{w}_c - \bm{w}_{c'}, \bm{\mu}_c^{(t+1)} \rangle
&=
\langle \bm{w}_c - \bm{w}_{c'}, \bm{\mu}_c^{(t)} \rangle \\
&+
\alpha'
\sum_{j \in S_c}
\pi_j^{(t)}
\langle \bm{w}_c - \bm{w}_{c'}, \bm{x}_j^{(t)} \rangle.
\end{align*}
By the invariant cone lemma, each inner product in the sum is strictly
positive, implying
\[
\langle \bm{w}_c - \bm{w}_{c'}, \bm{\mu}_c^{(t+1)} \rangle
>
\langle \bm{w}_c - \bm{w}_{c'}, \bm{\mu}_c^{(t)} \rangle.
\]

Summing this inequality over all $c' \neq c$ and then over all classes $c$
shows that $\mathcal{M}_{t+1} > \mathcal{M}_t$.
\end{proof}

\subsection{Test Point Dynamics}
\label{appx:probing-trajectory-analysis}
We now proceed to our main theoretical result - the analysis of the test point dynamics. In the high $\gamma$ regime ($\gamma \gg \alpha$), the initial test label $y^{(0)}$ is uniform across classes. Consequently, the label-driven component of the attention mechanism is initially uninformative, and the test point's trajectory is governed primarily by the feature dot products $\langle x^{(t)}, x_i^{(t)} \rangle$.

This observation motivates our analysis strategy: we approximate the training data dynamics $(X^{(t)}, Y^{(t)})$ using the simplified label-driven limit (where points cluster perfectly by class), while retaining the full, coupled dynamics for the test point $x_{test}$. This approximation captures the two-phase nature of the inference process. In the early phase, feature similarity breaks the symmetry, guiding the test point toward the correct cluster and reinforcing the correct class label. Once the label margin $\Delta_y^{(t)}$ becomes sufficiently large, the dynamics transition to a label-driven regime, resulting in the asymptotic geometric convergence of the test point and confidence.

The following equations describe the coupled trajectories of the test point $x_{\text{test}}$ and its label embedding. 
\begin{align}
    x^{(t+1)} &= x^{(t)} + \alpha'\sum_{i=1}^n A_i^{(t)} x_i^{(t)} \\
    y^{(t+1)} &= y^{(t)} + \alpha'\sum_{i=1}^n A_i^{(t)} y_i^{(t)}
\end{align}
where $A_i^{(t)}$ denote the $i$th coordinate of the test-token attention, i.e., $A_i^{(t)} = (A_{\text{test}}^{(t)})_i,$ with \[ A_{\textrm{test}}^{(t)} = \sigma( \alpha x^{(t)} X_t^\top + \gamma y^{(t)} Y_t^\top).\] We define the class-level attention masses
\[
p_c^{(t)} := \sum_{j\in S_c} A^{(t)}_{j},
\]
and the class logits
\[
s_c^{(t)} := \log Z_c^{(t)} + \gamma \lambda_t y_{t,c},
\quad 
Z_c^{(t)} := \sum_{j\in S_c} \exp\!\big(\alpha \langle x^{(t)}, x_j^{(t)}\rangle\big),
\] where $\lambda_t = (1+\alpha')^t$ is the norm of the training label vectors after $t$ steps. 
By observing that under a class-wise block diagonal attention dynamics, $\forall t, Y_i^t = \lambda_t \mathbf{e}_{c(i)}$, we find that  \[ A_j^{(t)} \propto \exp( \alpha \langle x^{(t) }, x_j^{t}\rangle  + \gamma \langle y_t, Y_j\rangle ) = \exp(\alpha \langle x^{(t)}, x_j^{t}\rangle + \gamma \lambda_t y_{t,c}),\] we find that \[ \sum_{j \in S_c} A_j^{(t)} \propto \sum_{j \in S_c} e^{\gamma y_{t,c}} e^{\alpha \langle x^{(t)}, x_j^{(t)}\rangle} = \exp(s_c^{(t)}).\] Since $\sum_j A_j^{(t)} = \sum_c \sum_{j \in S_c} A_j^{(t)}$, it follows thus that 
\[
p_c^{(t)} = \frac{e^{s_c^{(t)}}}{\sum_{r} e^{s_r^{(t)}}}.
\]

Suppose the test point $x_{\text{test}}$ belongs to class $c^\star$. We begin by establishing structural bounds on the geometric evolution.

\begin{lemma}
\label{lem:Rt-Lt-evolution}
Assume the following initialization at time $t=0$:
\begin{enumerate}
    \item[(i)] (\emph{Test margin}) Define
    \[
    R_0 := \min_{j\in S_{c^\star}} \langle x^{(0)}, x_j^{(0)} \rangle,
    \qquad
    L_0 := \max_{j\notin S_{c^\star}} \langle x^{(0)}, x_j^{(0)} \rangle,
    \]
    and assume $\Delta_0 := R_0 - L_0 > 0$.
    \item[(ii)] (\emph{Training separation}) Define
    \[
    \rho_0 := \min_{i,j\in S_{c^\star}} \langle x_i^{(0)}, x_j^{(0)} \rangle,
    \qquad
    \Lambda_0 := \max_{i\in S_{c^\star},\, j\notin S_{c^\star}} \langle x_i^{(0)}, x_j^{(0)} \rangle,
    \]
    and assume $\Gamma_0 := \rho_0 - \Lambda_0 > 0$.
    \item[(iii)] (\emph{Norm bound}) $\|x^{(0)}\|_2 \le 1$ and $\|x_i^{(0)}\|_2 \le 1$ for all $i$.
\end{enumerate}
For each $t \ge 0$, define the test-point quantities
\[
R_t := \min_{j\in S_{c^\star}} \langle x^{(t)}, x_j^{(t)} \rangle,
\qquad
L_t := \max_{j\notin S_{c^\star}} \langle x^{(t)}, x_j^{(t)} \rangle,
\qquad
\Delta_t := R_t - L_t,
\]
and the training-point quantities
\[
\rho_t := \min_{i,j\in S_{c^\star}} \langle x_i^{(t)}, x_j^{(t)} \rangle,
\qquad
\Lambda_t := \max_{i\in S_{c^\star},\, j\notin S_{c^\star}} \langle x_i^{(t)}, x_j^{(t)} \rangle,
\qquad
\Gamma_t := \rho_t - \Lambda_t.
\]
Finally, let $\widetilde{\Delta}_t := \min(\Delta_t,\Gamma_t)$.
Then, the following hold for all $t\ge 0$:
\begin{enumerate}
    \item[(a)] (\emph{Norm growth}) For all $i$, $\max_{j\in S_{c(i)}}\|x_j^{(t)}\|_2 \le (1+\alpha')^t$.
    \item[(b)] (\emph{Training margin recursion})
    \[
    \Gamma_{t+1} \ge (1+\alpha')^2 \Gamma_t.
    \]
    \item[(c)] (\emph{Test margin recursion})
    \[
    \Delta_{t+1} \ge (1+\alpha')\Delta_t + \alpha'(1+\alpha') p_{c^\star}^{(t)} \Gamma_t - 2\alpha'(1-p_{c^\star}^{(t)})(1+\alpha')^{2t+1}.
    \]
    \item[(d)] (\emph{Effective margin recursion}) The quantity $\widetilde{\Delta}_t$ satisfies
    \begin{equation}
    \label{eq:Delta-recursion}
    \widetilde{\Delta}_{t+1} \ge \left( 1 + \alpha' + \alpha'(1+\alpha') p_{c^\star}^{(t)} \right) \widetilde{\Delta}_t - 2\alpha'(1-p_{c^\star}^{(t)})(1+\alpha')^{2t+1}.
    \end{equation}
\end{enumerate}
\end{lemma}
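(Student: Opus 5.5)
The plan is to work throughout in the label-driven limit $\kappa\to\infty$. There, by Lemma~\ref{lemma:label-evolution-block-diagonal}, the training attention is block-diagonal with row-stochastic blocks, so each training point evolves as $x_i^{(t+1)} = x_i^{(t)} + \alpha'\sum_{k\in S_{c(i)}} A_{ik}^{(t)} x_k^{(t)}$. The test point evolves as $x^{(t+1)} = x^{(t)} + \alpha'\sum_m A_m^{(t)} x_m^{(t)}$, where the class masses $p_c^{(t)}$ sum to one. Every step is proved by expanding bilinear forms of such updates and bounding each resulting term by a class-wise extremum or by a norm bound. The weights appearing in each expansion are nonnegative and sum to a known constant.

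\textbf{(a) Norm growth.} I would argue by induction on $t$. By hypothesis (iii) the claim holds at $t=0$. Suppose every $x_j^{(t)}$ has norm at most $(1+\alpha')^t$. The vector $\sum_{k\in S_{c(i)}}A_{ik}^{(t)}x_k^{(t)}$ is a convex combination of such vectors, so the triangle inequality gives $\|x_i^{(t+1)}\|\le (1+\alpha')^{t+1}$. The bound then holds for all training points, not only within a single class.

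\textbf{(b) Training margin.} Fix $i,j\in S_{c^\star}$ and expand
\[
\langle x_i^{(t+1)},x_j^{(t+1)}\rangle=\langle x_i,x_j\rangle+\alpha'\textstyle\sum_k A_{jk}\langle x_i,x_k\rangle+\alpha'\sum_k A_{ik}\langle x_k,x_j\rangle+\alpha'^2\sum_{k,m}A_{ik}A_{jm}\langle x_k,x_m\rangle .
\]
By block-diagonality, all indices in this expansion lie in $S_{c^\star}$. Hence every inner product is at least $\rho_t$. The total weight is $1+2\alpha'+\alpha'^2=(1+\alpha')^2$, so $\rho_{t+1}\ge(1+\alpha')^2\rho_t$; this holds regardless of the sign of $\rho_t$, because the weights are exact rather than bounds.

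The same expansion with $i\in S_{c^\star}$ and $j\in S_c$, $c\neq c^\star$, involves only cross pairs between $S_{c^\star}$ and $S_c$. This gives $\Lambda_{t+1}\le(1+\alpha')^2\Lambda_t$. Subtracting the two bounds yields (b). Since $\Gamma_0>0$, it follows inductively that $\Gamma_t>0$ for all $t$.

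\textbf{(c) Test margin.} This is the main bookkeeping step, and I expect it to be the obstacle. For a training index $j$, expand
\[
\langle x^{(t+1)},x_j^{(t+1)}\rangle=\langle x,x_j\rangle+\alpha'\textstyle\sum_k A_{jk}\langle x,x_k\rangle+\alpha'\sum_m A_m\langle x_m,x_j^{(t+1)}\rangle .
\]
Here the last term groups the third and fourth terms of the full expansion.

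Take first $j\in S_{c^\star}$. The first two terms are at least $(1+\alpha')R_t$, because $k$ ranges over $S_{c^\star}$. Split the sum over $m$ into two parts.
\begin{itemize}
\item For $m\in S_{c^\star}$, carrying total mass $p_{c^\star}^{(t)}$, the inner product $\langle x_m,x_j^{(t+1)}\rangle$ expands into within-class products with weight $(1+\alpha')$. This part contributes at least $\alpha'(1+\alpha')p_{c^\star}\rho_t$.
\item For $m\notin S_{c^\star}$, use Cauchy--Schwarz together with (a): $|\langle x_m,x_j^{(t+1)}\rangle|\le(1+\alpha')^{2t+1}$. This part contributes at least $-\alpha'(1-p_{c^\star})(1+\alpha')^{2t+1}$.
\end{itemize}

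Now take $j\notin S_{c^\star}$ and bound from above. The first two terms are at most $(1+\alpha')L_t$, since $k$ stays in the class of $j$. The part $m\in S_{c^\star}$ consists of cross pairs and is at most $\alpha'(1+\alpha')p_{c^\star}\Lambda_t$. The remaining part is at most $\alpha'(1-p_{c^\star})(1+\alpha')^{2t+1}$.

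Taking the minimum over the first case and the maximum over the second, then subtracting, gives (c) with $\rho_t-\Lambda_t=\Gamma_t$. The care needed is to use only norm bounds for cross-class test attention, which is where the $2(1-p)$ error arises, and to keep the $x_j^{(t+1)}$ factor intact so the weight is $(1+\alpha')$ rather than something cruder.

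\textbf{(d) Effective margin.} Write $\beta_t:=1+\alpha'+\alpha'(1+\alpha')p_{c^\star}^{(t)}$ and let $E_t:=2\alpha'(1-p_{c^\star}^{(t)})(1+\alpha')^{2t+1}$ denote the error term of (c). For the test margin, use $\Delta_t\ge\widetilde\Delta_t$ and $\Gamma_t\ge\widetilde\Delta_t$ in (c); the coefficients are nonnegative, so $\Delta_{t+1}\ge\beta_t\widetilde\Delta_t-E_t$.

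For the training margin, $\Gamma_t>0$ from (b) and $(1+\alpha')^2\ge\beta_t$ together give $\Gamma_{t+1}\ge(1+\alpha')^2\Gamma_t\ge\beta_t\Gamma_t\ge\beta_t\widetilde\Delta_t\ge\beta_t\widetilde\Delta_t-E_t$. Taking the minimum of the two bounds gives \eqref{eq:Delta-recursion}.
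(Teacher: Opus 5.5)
Your proposal is correct and follows the paper's proof essentially step for step: the same block-diagonal bilinear expansions for (a)--(c). Grouping the test-update and quadratic terms into $\alpha'\sum_m A_m^{(t)}\langle x_m^{(t)},x_j^{(t+1)}\rangle$ yields exactly the paper's combined bounds on $U_j$ and $Q_j$. In (d) you also state explicitly the fact $\Gamma_t>0$, needed for $(1+\alpha')^2\Gamma_t\ge\beta_t\Gamma_t$, which the paper uses only implicitly.
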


\begin{proof}
\textbf{Part (a):} Recall the update rule for a training point $x_j$ belonging to class $c(j)$:
\[
x_j^{(t+1)} = x_j^{(t)} + \alpha' \sum_{k \in S_{c(j)}} A_{jk}^{(t)} x_k^{(t)}.
\]
Taking the Euclidean norm of both sides and applying the triangle inequality yields:
\[
\|x_j^{(t+1)}\|_2 
= \left\| x_j^{(t)} + \alpha' \sum_{k \in S_{c(j)}} A_{jk}^{(t)} x_k^{(t)} \right\|_2 
\le \|x_j^{(t)}\|_2 + \alpha' \sum_{k \in S_{c(j)}} A_{jk}^{(t)} \|x_k^{(t)}\|_2.
\]
Let $M_t := \max_{k \in S_{c(j)}} \|x_k^{(t)}\|_2$ denote the maximum feature norm within the class at iteration $t$. We bound:
\[
\sum_{k \in S_{c(j)}} A_{jk}^{(t)} \|x_k^{(t)}\|_2 
\le \sum_{k \in S_{c(j)}} A_{jk}^{(t)} M_t 
= M_t \left( \sum_{k \in S_{c(j)}} A_{jk}^{(t)} \right).
\]
By row stochasticity, we have: 
\[
\|x_j^{(t+1)}\|_2 \le \|x_j^{(t)}\|_2 + \alpha' M_t \leq (1+\alpha')M_t
\]
Taking the maximum over all $j \in S_{c(j)}$ on the left-hand side establishes part (a).\\

\textbf{Part (b):} For $i,j \in S_{c^\star}$, write
\[
\Delta x_i^{(t)} := \alpha' \sum_{k \in S_{c^\star}} A_{ik}^{(t)} x_k^{(t)},
\qquad
\Delta x_j^{(t)} := \alpha' \sum_{k \in S_{c^\star}} A_{jk}^{(t)} x_k^{(t)}.
\]
Since each sum is a convex combination of points in $S_{c^\star}$, every inner product of the form $\langle \Delta x_i^{(t)}/\alpha', x_j^{(t)} \rangle$, $\langle x_i^{(t)}, \Delta x_j^{(t)}/\alpha' \rangle$, and $\langle \Delta x_i^{(t)}/\alpha', \Delta x_j^{(t)}/\alpha' \rangle$ is bounded below by $\rho_t$. Hence
\[
\langle x_i^{(t+1)}, x_j^{(t+1)} \rangle
\ge \langle x_i^{(t)}, x_j^{(t)} \rangle + 2\alpha' \rho_t + \alpha'^2 \rho_t
\ge (1+\alpha')^2 \rho_t.
\]
Taking the minimum over $i,j \in S_{c^\star}$ gives
\[
\rho_{t+1} \ge (1+\alpha')^2 \rho_t.
\]

Now let $i \in S_{c^\star}$ and $j \notin S_{c^\star}$. Since $\Delta x_i^{(t)}/\alpha'$ is a convex combination of points in $S_{c^\star}$ while $\Delta x_j^{(t)}/\alpha'$ is a convex combination of points in the incorrect class $S_{c(j)}$, every inner product of the form $\langle \Delta x_i^{(t)}/\alpha', x_j^{(t)} \rangle$, $\langle x_i^{(t)}, \Delta x_j^{(t)}/\alpha' \rangle$, and $\langle \Delta x_i^{(t)}/\alpha', \Delta x_j^{(t)}/\alpha' \rangle$ is bounded above by $\Lambda_t$. Therefore
\[
\langle x_i^{(t+1)}, x_j^{(t+1)} \rangle
\le \langle x_i^{(t)}, x_j^{(t)} \rangle + 2\alpha' \Lambda_t + \alpha'^2 \Lambda_t
\le (1+\alpha')^2 \Lambda_t.
\]
Taking the maximum over $i \in S_{c^\star}$ and $j \notin S_{c^\star}$ gives
\[
\Lambda_{t+1} \le (1+\alpha')^2 \Lambda_t.
\]
Subtracting yields
\[
\Gamma_{t+1} = \rho_{t+1} - \Lambda_{t+1} \ge (1+\alpha')^2 (\rho_t - \Lambda_t) = (1+\alpha')^2 \Gamma_t.
\]

\textbf{Part (c):} Let $\Delta x^{(t)} = \alpha' \sum_i A_i^{(t)} x_i^{(t)}$ denote the update to the test point, and $\Delta x_j^{(t)} = \alpha' \sum_{k \in S_{c(j)}} A_{jk}^{(t)} x_k^{(t)}$ denote the update to the training point $j$. Expanding the inner product by linearity:
\begin{align*}
\langle x^{(t+1)}, x_j^{(t+1)} \rangle 
&= \langle x^{(t)} + \Delta x^{(t)}, \; x_j^{(t)} + \Delta x_j^{(t)} \rangle \\
&= \langle x^{(t)}, x_j^{(t)} \rangle 
    + \underbrace{\langle \Delta x^{(t)}, x_j^{(t)} \rangle}_{\text{Term 1 (Test Update } U_j)}
    + \underbrace{\langle x^{(t)}, \Delta x_j^{(t)} \rangle}_{\text{Term 2 (Training Update } V_j)}
    + \underbrace{\langle \Delta x^{(t)}, \Delta x_j^{(t)} \rangle}_{\text{Quadratic term}}.
\end{align*}

We first explicitly \textbf{bound the quadratic cross-term} resulting from the coupled updates. Let $Q_j^{(t)} := \langle \Delta x^{(t)}, \Delta x_j^{(t)} \rangle$. Recall that the training point update is $\Delta x_j^{(t)} = \alpha' \sum_{k \in S_{c(j)}} A_{jk}^{(t)} x_k^{(t)}$. Since $A_{jk}^{(t)}$ is row-stochastic, $\Delta x_j^{(t)} / \alpha'$ represents a convex combination of feature vectors within the cluster $c(j)$. We define this cluster-average vector as $\nu_{c(j)}^{(t)} := \sum_{k \in S_{c(j)}} A_{jk}^{(t)} x_k^{(t)}$.
The quadratic term can thus be written as:
\[
Q_j^{(t)} = \langle \alpha' \sum_{i=1}^n A_i^{(t)} x_i^{(t)}, \alpha' \nu_{c(j)}^{(t)} \rangle = \alpha'^2 \sum_{i=1}^n A_i^{(t)} \langle x_i^{(t)}, \nu_{c(j)}^{(t)} \rangle.
\]
We derive bounds for this term depending on whether $j$ belongs to the correct class $c^*$.

\textbf{Lower Bound ($j \in S_{c^*}$):}
If $j$ is in the correct class, then $\nu_{c(j)}^{(t)}$ is a convex combination of points in $S_{c^*}$. We split the summation over the test attention masses into the correct class ($i \in S_{c^*}$) and incorrect classes ($i \notin S_{c^*}$).
\begin{itemize}
    \item For $i \in S_{c^*}$, the inner product $\langle x_i^{(t)}, \nu_{c(j)}^{(t)} \rangle$ is a weighted average of intra-class inner products. By the definition of $\rho_t$, $\langle x_i^{(t)}, x_k^{(t)} \rangle \ge \rho_t$ for all $k \in S_{c^*}$. Thus, $\langle x_i^{(t)}, \nu_{c(j)}^{(t)} \rangle \ge \rho_t$.
    \item For $i \notin S_{c^*}$, we apply the global norm bound. Since $\|x\| \le (1+\alpha')^t$, the inner product is lower-bounded by $-(1+\alpha')^{2t}$.
\end{itemize}
Combining these, and noting that $\sum_{i \in S_{c^*}} A_i^{(t)} = p_{c^*}^{(t)}$:
\[
Q_j^{(t)} \ge \alpha'^2 \left( p_{c^*}^{(t)} \rho_t - (1-p_{c^*}^{(t)}) (1+\alpha')^{2t} \right).
\]

\textbf{Upper Bound ($j \notin S_{c^*}$):}
If $j$ is in an incorrect class $c \ne c^*$, then $\nu_{c(j)}^{(t)}$ consists of points from $S_c$.
\begin{itemize}
    \item For $i \in S_{c^*}$, the inner product $\langle x_i^{(t)}, \nu_{c(j)}^{(t)} \rangle$ represents cross-class similarity between $c^*$ and $c$. By the definition of $\Lambda_t$, $\langle x_i^{(t)}, x_k^{(t)} \rangle \le \Lambda_t$ for every $k \in S_{c(j)}$. Thus, $\langle x_i^{(t)}, \nu_{c(j)}^{(t)} \rangle \le \Lambda_t$.
    \item For $i \notin S_{c^*}$, we again apply the worst-case norm bound $(1+\alpha')^{2t}$.
\end{itemize}
This yields the upper bound:
\[
Q_j^{(t)} \le \alpha'^2 \left( p_{c^*}^{(t)} \Lambda_t + (1-p_{c^*}^{(t)}) (1+\alpha')^{2t} \right).
\]

We explicitly bound the contributions from the test update $U_j$ and the training update $V_j$. We split the summations into indices belonging to the correct class $S_{c^\star}$ and the incorrect classes $S_{c^\star}^c$.

\textbf{1. Lower Bound for $R_{t+1}$ (Case $j \in S_{c^\star}$).}
For a point $j$ in the correct class, we want to lower bound both update terms.
\begin{itemize}
    \item \textbf{Test Update ($U_j$):} 
    \[
    U_j = \alpha' \sum_{i \in S_{c^\star}} A_i^{(t)} \langle x_i^{(t)}, x_j^{(t)} \rangle + \alpha' \sum_{i \notin S_{c^\star}} A_i^{(t)} \langle x_i^{(t)}, x_j^{(t)} \rangle.
    \]
    For $i \in S_{c^\star}$ we have $\langle x_i, x_j \rangle \ge \rho_t$, while for $i \notin S_{c^\star}$ by Cauchy-Schwarz and the norm bounds from Part (a) we get a lower bound of $-(1+\alpha')^{2t}$, so
    \[
    U_j \ge \alpha' \left( p_{c^\star}^{(t)} \rho_t - (1-p_{c^\star}^{(t)})(1+\alpha')^{2t} \right).
    \]
    \item \textbf{Training Update ($V_j$):}
    \[
    V_j = \alpha' \sum_{k \in S_{c(j)}} A_{jk}^{(t)} \langle x^{(t)}, x_k^{(t)} \rangle.
    \]
    Since $j \in S_{c^\star}$, the sum runs over $k \in S_{c^\star}$. By definition, $\langle x^{(t)}, x_k^{(t)} \rangle \ge R_t$. Thus:
    \[
    V_j \ge \alpha' \sum_{k \in S_{c^\star}} A_{jk}^{(t)} R_t = \alpha' R_t.
    \]
\end{itemize}
Combining these with the quadratic lower bound gives the total lower bound:
\[
    R_{t+1} \ge R_t + \alpha' R_t + \alpha'(1+\alpha') \left( p_{c^\star}^{(t)} \rho_t - (1-p_{c^\star}^{(t)})(1+\alpha')^{2t} \right)
\]

\textbf{2. Upper Bound for $L_{t+1}$ (Case $j \notin S_{c^\star}$).}
For a point $j$ in an incorrect class, we want to upper bound both update terms.
\begin{itemize}
    \item \textbf{Test Update ($U_j$):}
    \[
    U_j = \alpha' \sum_{i \in S_{c^\star}} A_i^{(t)} \langle x_i^{(t)}, x_j^{(t)} \rangle + \alpha' \sum_{i \notin S_{c^\star}} A_i^{(t)} \langle x_i^{(t)}, x_j^{(t)} \rangle.
    \]
    Using $\langle x_i, x_j \rangle \le \Lambda_t$ for cross-class pairs and the Cauchy-Schwarz norm bound $(1+\alpha')^{2t}$ otherwise:
    \[
    U_j \le \alpha' \left( p_{c^\star}^{(t)} \Lambda_t + (1-p_{c^\star}^{(t)})(1+\alpha')^{2t} \right).
    \]
    \item \textbf{Training Update ($V_j$):}
    \[
    V_j = \alpha' \sum_{k \in S_{c(j)}} A_{jk}^{(t)} \langle x^{(t)}, x_k^{(t)} \rangle.
    \]
    Since $j \notin S_{c^\star}$, the class $c(j) \neq c^\star$. Thus, any $k \in S_{c(j)}$ is not in the correct class. By definition, $\langle x^{(t)}, x_k^{(t)} \rangle \le L_t$. Thus:
    \[
    V_j \le \alpha' \sum_{k \in S_{c(j)}} A_{jk}^{(t)} L_t = \alpha' L_t.
    \]
\end{itemize}
Combining these with the quadratic upper bound gives the total upper bound:
\[
    L_{t+1} \le L_t + \alpha' L_t + \alpha'(1+\alpha') \left( p_{c^\star}^{(t)} \Lambda_t + (1-p_{c^\star}^{(t)})(1+\alpha')^{2t} \right)
\]
Subtracting upper from lower bound yields
\[
\Delta_{t+1} \ge (1+\alpha')\Delta_t + \alpha'(1+\alpha') p_{c^\star}^{(t)} (\rho_t - \Lambda_t) - 2\alpha'(1-p_{c^\star}^{(t)})(1+\alpha')^{2t+1},
\]
which is exactly the claimed recursion for $\Delta_t$.

\textbf{Part (d):} Since $p_{c^\star}^{(t)} \le 1$, part (b) implies
\[
\Gamma_{t+1} \ge (1+\alpha')^2 \Gamma_t \ge \left( 1 + \alpha' + \alpha'(1+\alpha') p_{c^\star}^{(t)} \right)\Gamma_t.
\]
Using $\widetilde{\Delta}_t \le \Delta_t$ and $\widetilde{\Delta}_t \le \Gamma_t$, part (c) gives
\[
\Delta_{t+1}
\ge \left( 1 + \alpha' + \alpha'(1+\alpha') p_{c^\star}^{(t)} \right)\widetilde{\Delta}_t
- 2\alpha'(1-p_{c^\star}^{(t)})(1+\alpha')^{2t+1}.
\]
Combining the last two displays and taking the minimum proves the recursion for $\widetilde{\Delta}_{t+1}$.
\end{proof}

We finally prove that the margin $\Delta_t$ grows indefinitely and that the label prediction remains correct, meaning that the test point is attracted to the correct class both geometrically and with its label. This requires a coupled induction on the geometric margin $\Delta_t$ and the label margin $\Delta_y^{(t)} := y_{c^\star}^{(t)} - y_{c}^{(t)}$ for any class $c \neq c^\star$.

\aditya{Re. Thm B.1, This is nice! I think you should add a high level explanation of the strategy before the previous lemma. Something like: the lemma below captures a crude lower bound on the separation of the margin $\Delta_t$. This is bootstrapped into an exponential growth bound on this quantity in the main result below.}

\begin{theorem}
\label{thm:test-margin-growth}
Assume $\alpha' < 1$ and the initialization satisfies:
\begin{equation}
\label{eq:global_cond}
\widetilde{\Delta}_0 >0,\quad\alpha \Delta_0 > \log(K) + \log(1+2/\widetilde{\Delta}_0) \quad \text{and} \quad \Delta_y^{(0)} \ge 0 \quad (\forall c \neq c^\star).
\end{equation}
Then, for all $t \ge 0$:
\begin{enumerate}
    \item The geometric margin grows geometrically: $\Delta_t \ge \Delta_0 (1 + \alpha')^t$.
    \item The label margin is non-decreasing: $\Delta_y^{(t)} \ge 0$.
\end{enumerate}
\end{theorem}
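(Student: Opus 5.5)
Our plan is a coupled induction on $t$ that carries three facts together: $\widetilde{\Delta}_t \ge \widetilde{\Delta}_0(1+\alpha')^t$ (which gives $\Delta_t \ge \Delta_0(1+\alpha')^t$ once we track $\Delta_t$ separately through Lemma~\ref{lem:Rt-Lt-evolution}(c)), $\Delta_y^{(t)}\ge 0$ for every $c\neq c^\star$, and a lower bound on the class attention mass of the form $1-p_{c^\star}^{(t)} \le \epsilon_t$. The engine is the recursion of Lemma~\ref{lem:Rt-Lt-evolution}(d). If we can show
\[
2(1-p_{c^\star}^{(t)})(1+\alpha')^{2t+1} \le (1+\alpha') p_{c^\star}^{(t)} \widetilde{\Delta}_t ,
\]
then the negative term is absorbed and $\widetilde{\Delta}_{t+1} \ge (1+\alpha')\widetilde{\Delta}_t$. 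The same absorption applied to part (c), using $\Gamma_t \ge \widetilde{\Delta}_t$, gives $\Delta_{t+1}\ge(1+\alpha')\Delta_t$. So the real task is controlling $p_{c^\star}^{(t)}$.

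To control the attention mass we use the softmax representation $p_c^{(t)} = e^{s_c^{(t)}}/\sum_r e^{s_r^{(t)}}$. By the induction hypothesis $\Delta_y^{(t)}\ge 0$, the label part $\gamma\lambda_t y_{t,c}$ of $s_{c^\star}$ dominates that of every other class, so it can only help. For the feature part, $\log Z_{c^\star}^{(t)} \ge \log|S_{c^\star}| + \alpha R_t$ and $\log Z_c^{(t)} \le \log|S_c| + \alpha L_t$. Hence
\[
\frac{1-p_{c^\star}}{p_{c^\star}} \le \sum_{c\ne c^\star}\frac{|S_c|}{|S_{c^\star}|}e^{-\alpha\Delta_t},
\]
which is at most roughly $K e^{-\alpha\Delta_t}$, treating the class sizes as comparable. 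If the class-size ratio is not harmless we fold it into the $\log K$ term or state it as an implicit assumption. The absorption condition then reduces to
\[
K e^{-\alpha\Delta_t}\cdot 2(1+\alpha')^{2t} \le \widetilde{\Delta}_t .
\]
At $t=0$ this is exactly the hypothesis $\alpha\Delta_0 > \log K + \log(2/\widetilde{\Delta}_0)$; the paper's slightly stronger $\log(1+2/\widetilde{\Delta}_0)$ gives slack. For the inductive step, $\Delta_t \ge \Delta_0(1+\alpha')^t$ makes $e^{-\alpha\Delta_t}$ decay doubly exponentially, while the left-hand factor $(1+\alpha')^{2t}/\widetilde{\Delta}_t$ grows at most like $(1+\alpha')^{t}$. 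We must check that $\alpha\Delta_0\big((1+\alpha')^t-1\big) \ge t\log(1+\alpha')$. This follows from $(1+\alpha')^t - 1 \ge t\alpha'$ together with $\alpha\Delta_0 \ge \log K + \log(1+2/\widetilde\Delta_0) \ge$ a fixed positive constant. We expect this to be the \emph{main obstacle}: the constants may not close for small $\alpha\Delta_0$ or small $K$. If so, we would first try using $\alpha' < 1$ to bound $\log(1+\alpha') \le \alpha'$, and then absorb the extra factor by enlarging the slack already present in the $1+2/\widetilde{\Delta}_0$ term.

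The label step is easier. Under block-diagonal training dynamics the training labels are $\lambda_t e_{c(i)}$, so the test label update is a positive multiple of $\sum_c p_c^{(t)} e_c$. This gives
\[
\Delta_y^{(t+1)} = \Delta_y^{(t)} + (\text{const})\,\lambda_t\big(p_{c^\star}^{(t)} - p_c^{(t)}\big).
\]
Nonnegativity then follows once $p_{c^\star}^{(t)} \ge p_c^{(t)}$, i.e.\ $s_{c^\star}^{(t)} \ge s_c^{(t)}$, which the bound above already provides because $\alpha\Delta_t > \log K$ beats the class-size ratio. This closes the induction, with $\Delta_y^{(0)}=0$ serving as the base case.
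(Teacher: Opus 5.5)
Your argument is valid once the step you flagged is filled in, but it uses the training margin differently from the paper. The paper does not go through the effective-margin recursion (d). It inserts Lemma~\ref{lem:Rt-Lt-evolution}(b), in the form $\Gamma_t \ge \Gamma_0(1+\alpha')^{2t}$, directly into (c). This grows at the same rate as the leakage penalty $2\alpha'(1-p_{c^\star}^{(t)})(1+\alpha')^{2t+1}$, so the factor $(1+\alpha')^{2t}$ cancels. What remains is the time-uniform condition $p_{c^\star}^{(t)}\Gamma_0 \ge 2(1-p_{c^\star}^{(t)})$. That follows from $1-p_{c^\star}^{(t)} \le Ke^{-\alpha\Delta_t} \le Ke^{-\alpha\Delta_0}$ and needs only $\Delta_t \ge \Delta_0$.

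You instead bound the positive term through $\widetilde{\Delta}_t \ge \widetilde{\Delta}_0(1+\alpha')^t$. That loses a factor $(1+\alpha')^t$, which you must win back from the doubly exponential decay of $e^{-\alpha\Delta_t}$; this is exactly the obstacle you anticipated. In exchange, your version would still go through if the training margin only grew by a factor $1+\alpha'$ per step, rather than $(1+\alpha')^2$. The price is that you need the full exponential lower bound on $\Delta_t$ inside the induction, plus some constant-chasing. Your odds-ratio bound on $(1-p_{c^\star}^{(t)})/p_{c^\star}^{(t)}$ is also slightly sharper than the paper's bound on $1-p_{c^\star}^{(t)}$. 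That is where your slack between $\log(2/\widetilde{\Delta}_0)$ and $\log(1+2/\widetilde{\Delta}_0)$ comes from.

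The flagged step does not follow merely from $\alpha\Delta_0$ exceeding ``a fixed positive constant.'' Using $(1+\alpha')^t - 1 \ge t\alpha'$ and $\log(1+\alpha') \le \alpha'$ (valid for every $\alpha'>0$, so $\alpha'<1$ is not needed here), you need $\alpha\Delta_0 \ge 1$. Equivalently, $u \mapsto u e^{-\alpha\Delta_0 u}$ must be nonincreasing on $u \ge 1$, so that the $t=0$ instance of your condition dominates all later ones. This holds under the stated hypotheses, so no extra assumption is required. By the norm bound (iii) and Cauchy--Schwarz, $\widetilde{\Delta}_0 \le \Delta_0 = R_0 - L_0 \le 2$, hence $\log(1+2/\widetilde{\Delta}_0) \ge \log 2$. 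With $K \ge 2$, the hypothesis then gives $\alpha\Delta_0 > \log 4 > 1$. Add this observation and your induction closes; your label-margin step is the same as the paper's.
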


\begin{proof}
We proceed by induction on $t$. The base case $t=0$ holds by assumption. Assume the hypotheses hold for step $t$.

\textbf{Bounding the Logit Gap:}
Consider the difference in logits between the correct class $c^\star$ and any incorrect class $c$:
\[
s_{c^\star}^{(t)} - s_c^{(t)} = \underbrace{\log \frac{Z_{c^\star}^{(t)}}{Z_c^{(t)}}}_{\text{Geometric Term}} + \gamma \underbrace{(y_{c^\star}^{(t)} - y_c^{(t)})}_{\text{Label Term } \Delta_y^{(t)}}.
\]
By the definition of $R_t$ and $L_t$, we have $Z_{c^\star}^{(t)} \ge |S_{c^\star}| e^{\alpha R_t}$ and $Z_c^{(t)} \le |S_c| e^{\alpha L_t}$. Assuming balanced classes ($|S_{c^\star}|=|S_c|$):
\[
\log \frac{Z_{c^\star}^{(t)}}{Z_c^{(t)}} \ge \alpha(R_t - L_t) = \alpha \Delta_t.
\]
Using the inductive hypothesis, $\forall c \neq c^*, y_{t,c^*} - y_{t,c} \ge \Delta_y^{(t)} \ge 0$. Thus:
\begin{equation}
\label{eq:logit_bound}
s_{c^\star}^{(t)} - s_c^{(t)} = \log \frac{Z_{c^*}^{(t)}}{Z_c^{(t)}} + \gamma (y_{t,c^*} - y_{t,c}) \ge \alpha \Delta_t.
\end{equation}

\textbf{Bounding Attention Leakage:}
The probability mass on incorrect classes is $1 - p_{c^\star}^{(t)} = \sum_{c \neq c^\star} p_c^{(t)}$. Using Eq.~\ref{eq:logit_bound}:
\[
1 - p_{c^\star}^{(t)} = \sum_{c \neq c^*} \frac{e^{s_c^{(t)}}}{\sum_r e^{s_r^{(t)}}} \le \sum_{c \neq c^\star} e^{-(s_{c^\star}^{(t)} - s_c^{(t)})} \le \sum_{c \neq c^\star} e^{-\alpha \Delta_t} \le K e^{-\alpha \Delta_t}.
\]
Using the inductive hypothesis $\Delta_t \ge \Delta_0 (1 + \alpha')^t$, this leakage term decays super-exponentially.

\textbf{Inductive step on $\Delta_{t+1}$}
From Lemma~\ref{lem:Rt-Lt-evolution}(b), the training margin obeys
\[
\Gamma_t \ge \Gamma_0 (1+\alpha')^{2t}.
\]
Since $\widetilde{\Delta}_0 = \min(\Delta_0,\Gamma_0)$, we in particular have $\Gamma_0 \ge \widetilde{\Delta}_0$.

From Lemma~\ref{lem:Rt-Lt-evolution}(c), the test margin updates as
\[
\Delta_{t+1} \ge (1+\alpha')\Delta_t + \alpha'(1+\alpha') p_{c^\star}^{(t)} \Gamma_t - 2\alpha'(1-p_{c^\star}^{(t)})(1+\alpha')^{2t+1}.
\]
Using the lower bound on $\Gamma_t$ gives
\[
\Delta_{t+1} \ge (1+\alpha')\Delta_t + \alpha'(1+\alpha')(1+\alpha')^{2t}\left( p_{c^\star}^{(t)} \Gamma_0 - 2(1-p_{c^\star}^{(t)}) \right).
\]
Therefore it suffices to show
\[
p_{c^\star}^{(t)} \Gamma_0 \ge 2(1-p_{c^\star}^{(t)}).
\]
This inequality is equivalent to
\[
(1-p_{c^\star}^{(t)})(\Gamma_0 + 2) \le \Gamma_0.
\]
By the leakage estimate above and the inductive lower bound on $\Delta_t$,
\[
(1-p_{c^\star}^{(t)})(\Gamma_0 + 2)
\le K e^{-\alpha \Delta_t}(\Gamma_0 + 2)
\le K e^{-\alpha \Delta_0}(\Gamma_0 + 2).
\]
Since $\Gamma_0 \ge \widetilde{\Delta}_0$, we have
\[
K e^{-\alpha \Delta_0}(\Gamma_0 + 2)
\le K e^{-\alpha \Delta_0}\Gamma_0 \left(1 + \frac{2}{\widetilde{\Delta}_0}\right).
\]
\[
K e^{-\alpha \Delta_0} \left(1 + \frac{2}{\widetilde{\Delta}_0}\right) \le 1
\]
by Eq.~\eqref{eq:global_cond}, and hence
\[
(1-p_{c^\star}^{(t)})(\Gamma_0 + 2) \le \Gamma_0.
\]
Therefore $\Delta_{t+1} \ge (1+\alpha')\Delta_t$, which closes the induction.

\textbf{Label Margin Induction ($\Delta_y^{(t+1)}$).}
Finally, recall the update rule for the test label embedding: $y^{(t+1)} = y^{(t)} + \alpha' \sum_{i=1}^n A_i^{(t)} y_i^{(t)}$. We track the margin between the correct class component $c^\star$ and an incorrect class component $c$.
From Lemma~1, the training labels maintain their class-alignment throughout the dynamics, satisfying $y_i^{(t)} = \lambda_t e_{c(i)}$ for some scalar scaling factor $\lambda_t > 0$.
Substituting this into the update rule, the $k$-th component evolves as:
\[
y_k^{(t+1)} = y_k^{(t)} + \alpha' \lambda_t \sum_{i=1}^n A_i^{(t)} \mathbb{I}(c(i) = k) = y_k^{(t)} + \alpha' \lambda_t p_k^{(t)}.
\]
Now, considering the margin $\Delta_y^{(t)} = y_{c^\star}^{(t)} - y_{c}^{(t)}$:
\[
\Delta_y^{(t+1)} = (y_{c^\star}^{(t)} + \alpha' \lambda_t p_{c^\star}^{(t)}) - (y_{c}^{(t)} + \alpha' \lambda_t p_{c}^{(t)})
= \Delta_y^{(t)} + \alpha' \lambda_t \underbrace{(p_{c^\star}^{(t)} - p_{c}^{(t)})}_{\text{Drift Term}}.
\]
In Step 1, we established that the logit gap $s_{c^\star}^{(t)} - s_c^{(t)} \ge \alpha \Delta_t$ is strictly positive. Since the softmax function is monotonic, a larger logit implies a larger probability mass: $s_{c^\star}^{(t)} > s_c^{(t)} \implies p_{c^\star}^{(t)} > p_{c}^{(t)}$.
Therefore, the drift term is strictly positive. Combined with the inductive hypothesis $\Delta_y^{(t)} \ge 0$, it follows that the label margin strictly increases: $\Delta_y^{(t+1)} > \Delta_y^{(t)} \ge 0$.

This completes the induction.

\textbf{Growth of the Label Margin.} Coupling the above with the growing size of $\lambda_t$ in turn induces a growth in the label margin. Indeed, let \[ t_0 := \inf\{t : 1 - 2K e^{-\alpha \Delta_t} \ge 1/2\}. \] Then note that for all $t \ge t_0,$ \[ p_{c^*}^{(t)} - p_c^{(t)} \ge p_{c^*}^{(t)} - (1-p_{c^*}^{(t)}) = 1 - 2(1-p_{c^*}^{(t)}) \ge 1 - 2Ke^{-\alpha \Delta_t } \ge \frac12. \] Thus, for all $t \ge t_0,$ \[ \Delta_y^{(t+1)} \ge \Delta_y^{(t)} + \alpha' \lambda_t /2 \ge \alpha' \lambda_t/2.\] 

Recalling from Lemma~\ref{lemma:label-evolution-block-diagonal} that $\lambda_t,$ the norm of the prediction vectors, is as $(1+\gamma')^t$,  we conclude that for all $t \ge t_0 + 1, \Delta_{y}^{(t)} \ge (1+\gamma')^t.$ 

Finally, note that since $\Delta_t \ge (1+\alpha')^t \Delta_0,$ \[ t_0 \le \inf\{t : \alpha \Delta_t \ge \log(4K) \} \le \inf\{ t: (1+\alpha')^t \ge \log(4K)/\alpha\Delta_0\} \le \frac{1}{\alpha'} \log \frac{\log 4K}{\alpha \Delta_0} +1. \]  
\end{proof}

\section{Additional insights into Mean-Shift Classification}\label{app:nonlinear}

\subsection{The role of $\alpha$}
The hyperparameter $\alpha$ controls the influence data geometry exerts on the weights $A$. Recall that the data point updates take the form
\[
\bm{x}_i^{(t+1)}
=
\bm{x}_i^{(t)}
+ \alpha'
\sum_{j \in S_c}
A_{ij}^{(c)} \bm{x}_j^{(t)}.
\]
When $\alpha = 0$ the updates are solely driven by the relationships between labels, which are initialized to be intra-class orthogonal. At the same time however, this leads to the test point $x_{\text{test}}$ to be pulled ``equally'' towards the direction of all classes, which means it does not move from its initial position. Thus, $\alpha > 0$ plays an important role not only for the mean-shift dynamics of the train points but also for the trajectory of the probe point.

\begin{figure}[h]
    \centering
    \includegraphics[width=1\linewidth]{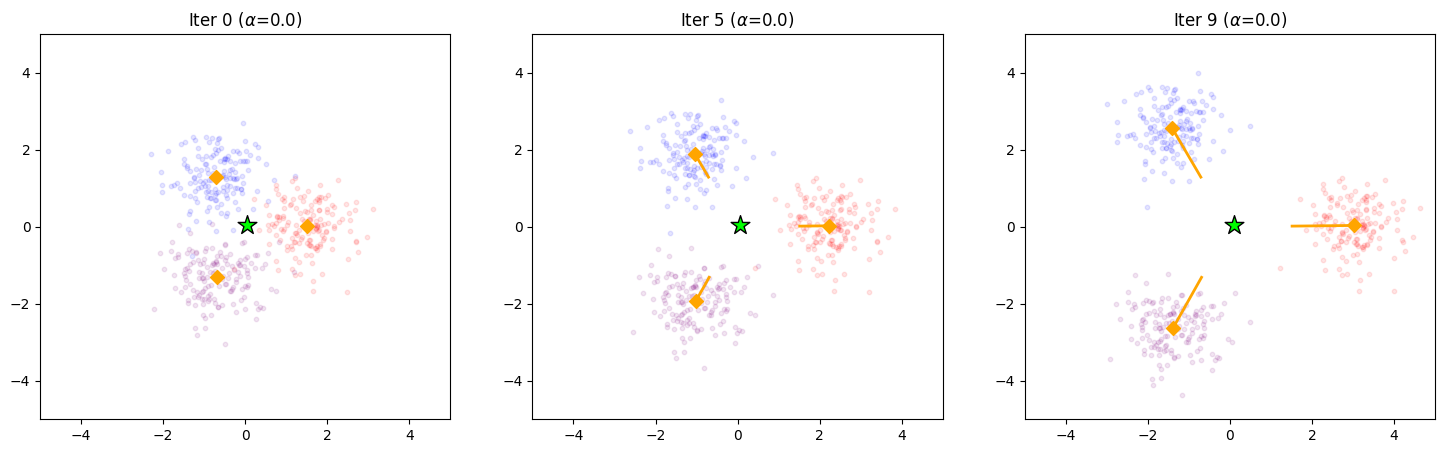}
    \caption{Setting $\alpha = 0$ gives us label-driven mean-shift, and the probe point does not move.}
\end{figure}
\begin{figure}[h]
        \centering
        \includegraphics[width=1\linewidth]{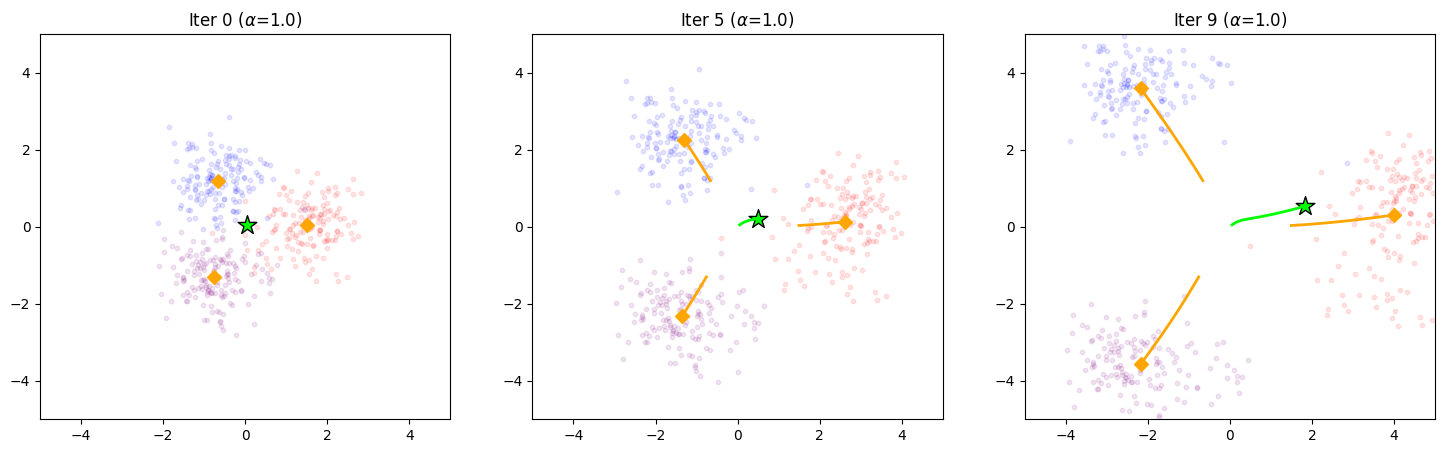}
        \caption{Setting $\alpha > 0$ leads to a probe point trajectory.}
    \end{figure}

\subsection{Mean-shift classification on non-linear data distributions}
In this section, we stress-test the extracted mean-shift classifier by applying it to a range of data distributions in two dimensions that go beyond the linear setting studied in the main text.

\paragraph{Separated Ellipses}
We generate two classes of anisotropic Gaussian clusters. Each class $k \in \{0, 1\}$ consists of points sampled from a multivariate normal distribution $\mathcal{N}(\boldsymbol{\mu}_k, \Sigma)$, where $\boldsymbol{\mu}_0 = (-s, -s)$ and $\boldsymbol{\mu}_1 = (s, s)$ provide linear separation. 

\begin{figure}[h]
    \centering
    \includegraphics[width=1\linewidth]{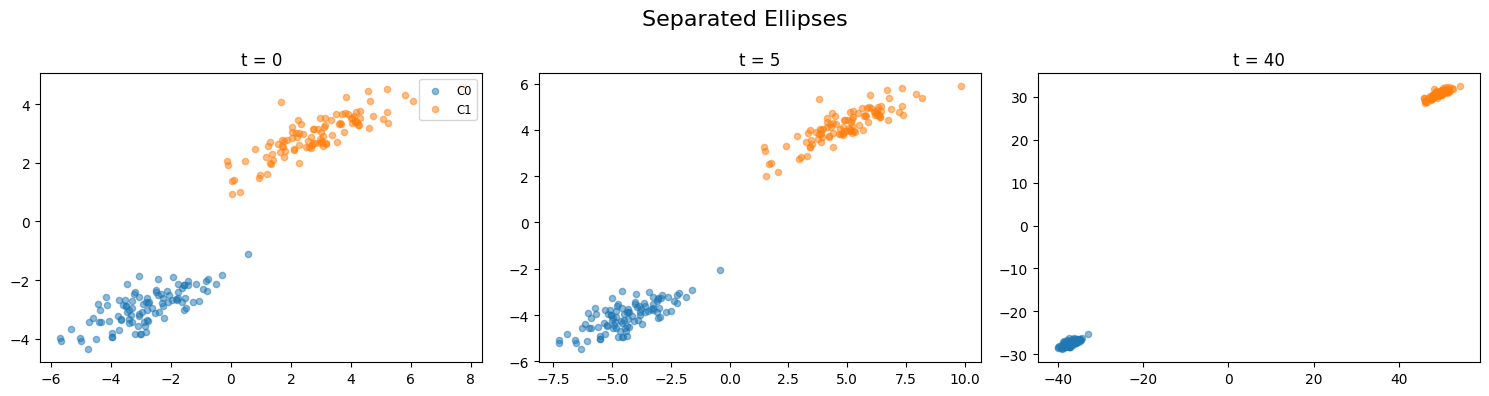}
    \caption{Mean-Shift Effects on separated ellipses.}
\end{figure}

\paragraph{Voronoi Cells}
We simulate a multi-class problem by defining $K$ random centroids $\{\mathbf{c}_k\}_{k=1}^K$ within a bounded region $[-B, B]^2$. Data points $\mathbf{x}_i$ are sampled uniformly from this region. The label $y_i$ for each point is assigned according to the nearest centroid, effectively partitioning the space into Voronoi cells: $y_i = \operatorname*{argmin}_k \|\mathbf{x}_i - \mathbf{c}_k\|_2$. 

\begin{figure}[h]
    \centering
    \includegraphics[width=1\linewidth]{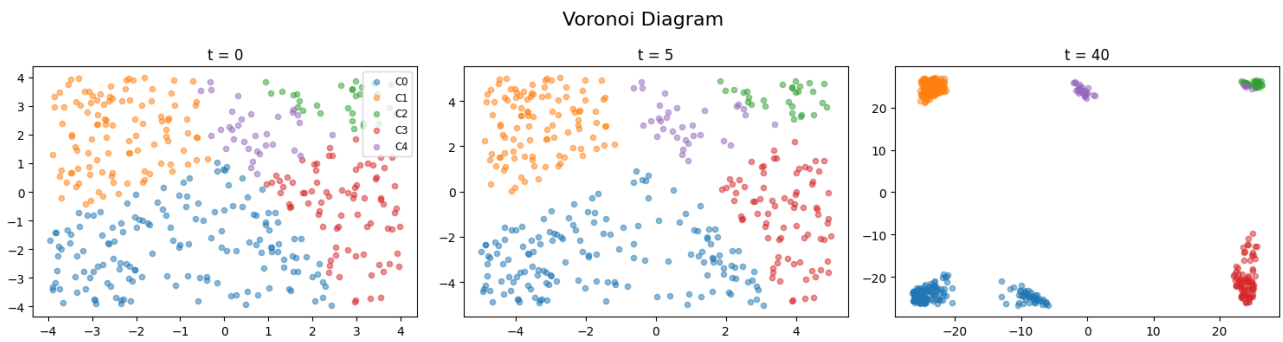}
    \caption{Voronoi Cells.}
\end{figure}

\paragraph{Concentric Circles}
This dataset consists of $K$ classes arranged as nested rings. For the $k$-th class, points are generated using polar coordinates where the radius is fixed at $r_k = 1 + k$ and the angle $\theta$ is sampled uniformly from $[0, 2\pi)$. Gaussian noise $\epsilon \sim \mathcal{N}(0, \sigma^2 I)$ is added to the Cartesian coordinates $(r_k \cos \theta, r_k \sin \theta)$ to create diffuse ring structures that are not linearly separable. Surprisingly, the mean-shift classifier still separates these data well despite the inherently non-linear decision boundaries.

\begin{figure}[h]
    \centering
    \includegraphics[width=1\linewidth]{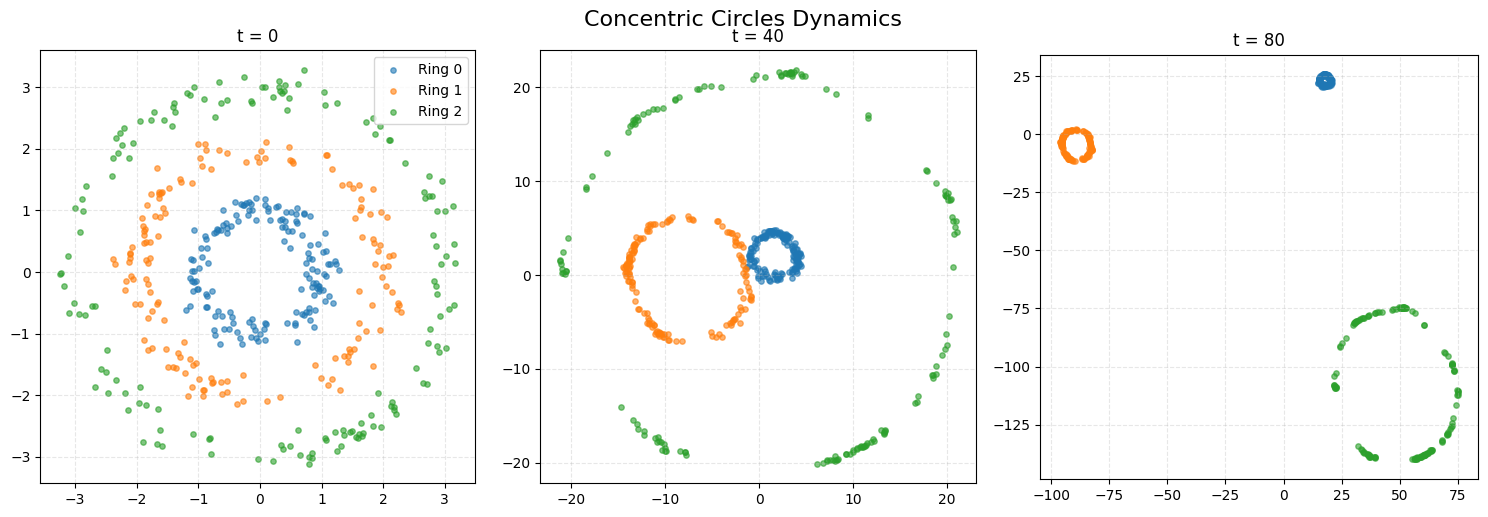}
    \caption{Concentric Circles}
\end{figure}

\paragraph{Intertwined Spirals}
Two classes are generated as interleaving spiral arms. For each class $k \in \{0, 1\}$, we define a parametric curve where the radius $r$ grows linearly with the angle $\theta \in [0, 4\pi]$. The coordinates are given by $\mathbf{x} = ( \theta \cos(\theta + \phi_k), \theta \sin(\theta + \phi_k) )$, where $\phi_k$ is a phase offset (e.g., $0$ and $\pi$). Gaussian noise is added to the points, creating a highly non-convex geometry that is difficult to separate with simple distance metrics. We did not find any parameter setting for which the mean-shift classifier reliably separates these class boundaries, and so this is a failure mode for these dynamics. Concretely, it appears that for this data, the strongly intertwined geometry overwhelms the label-separations, which are further hampered by the fact that the cluster-centroids for the two labels are essentially coincident.

\begin{figure}[h]
    \centering
    \includegraphics[width=1\linewidth]{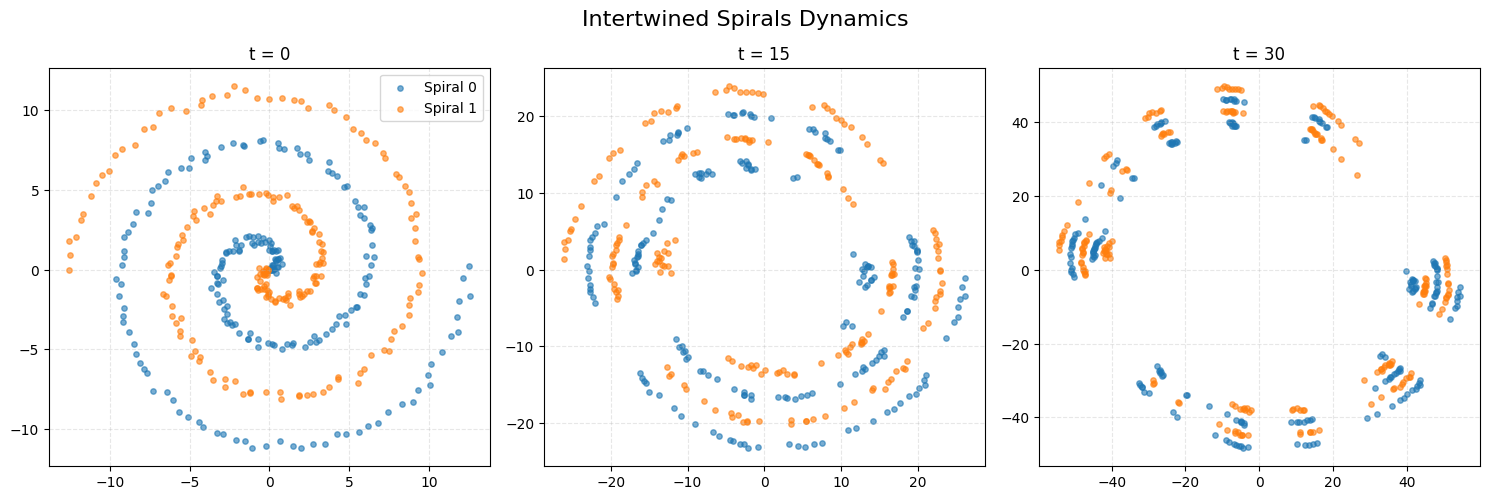}
    \caption{Spirals Mean-Shift}
\end{figure}

\end{document}